\documentclass{article} 
\usepackage{iclr2027_conference,times}
\usepackage{amsmath,amssymb,amsthm,mathtools}
\usepackage{graphicx}
\usepackage{float}
\usepackage{subcaption}
\usepackage{booktabs}
\usepackage{tabularx}
\usepackage{hyperref}
\usepackage{url}
\usepackage{xcolor}
\definecolor{cKernel}{RGB}{0,80,160}
\definecolor{cCurv}{RGB}{180,40,40}
\definecolor{cSP}{RGB}{194,59,75}
\definecolor{cSigma}{RGB}{30,85,150}
\usepackage{enumitem}
\usepackage{microtype}
\usepackage{multirow}
\hypersetup{colorlinks=true,linkcolor=blue!60!black,citecolor=blue!60!black,urlcolor=blue!60!black}
\usepackage{babel,adjustbox,booktabs,multirow}

\newtheorem{proposition}{Proposition}
\newtheorem{assumption}{Assumption}
\newtheorem{definition}{Definition}
\newcommand{\U}{\mathcal U}
\newcommand{\NLL}{\mathrm{NLL}}
\newcommand{\argmax}{\operatorname*{arg\,max}}
\newcommand{\argmin}{\operatorname*{arg\,min}}
\newcommand{\mup}{\mu\mathrm{P}}
\newcommand{\mutransfer}{\mu\mathrm{Transfer}}
\newcommand{\sigmat}{\sigma\mathrm{Transfer}}
\newcommand{\blockdiag}{\operatorname{blockdiag}}

\newtheorem{theorem}{Theorem}
\newtheorem{lemma}{Lemma}
\newtheorem{corollary}{Corollary}
\newtheorem*{formaltheorem}{Theorem}

\DeclarePairedDelimiter{\norm}{\lVert}{\rVert}
\DeclarePairedDelimiter{\abs}{\lvert}{\rvert}

\newcommand{\E}{\mathbb E}
\newcommand{\tr}{\operatorname{tr}}
\newcommand{\op}{\mathrm{op}}

\title{$\sigma$Transfer: Uncertainty Transfer from Small to Large Networks under $\mup$}

\iclrfinalcopy

\author{
\textbf{Richard Bergna}$^{1,2}$\thanks{
Correspondence: \texttt{rsb63@cam.ac.uk}.
}
\quad
\textbf{Fernando Ruiz Mazo}$^{1}$
\quad
\textbf{Nicol\`o Felicioni}$^{2}$
\\[0.4em]
\textbf{Jos\'e Miguel Hern\'andez-Lobato}$^{1}$
\quad
\textbf{Kamil Ciosek}$^{2}$
\\[0.6em]
$^{1}$University of Cambridge
\quad
$^{2}$Spotify
}

\iclrfinalcopy
\begin{document}

\maketitle
\lhead{Preprint}

\begin{abstract}
Reliable predictive uncertainty in Laplace approximations depends critically on
the prior precision, yet selecting it requires a posterior sweep that
is prohibitively expensive for neural networks with billions of
parameters. Under the Maximal Update Parametrization ($\mup$), we
derive a rescaling of the prior covariance that makes the selected
precision stable as model width grows.
This leads to $\sigmat$: we select the precision on a smaller model and
zero-shot transfer it to the much larger model, i.e., without searching for
the precision on the larger model at all. We show
convergence of the prior kernel, posterior covariance, selected
precision, and posterior-derived decisions under explicit conditions, and verify $\sigmat$ across
regression, image classification, and Transformer readouts. For
example, measured precision-sweep speedups reach
${\sim}5000\times$ when transferring from width 128 to 4096 on MNIST, at a
target-NLL degradation of $0.002$; transferring from a public 1B to 7B
model gives a median search speedup of ${\sim}2.3\times$ (up to
${\sim}330\times$), with a mean measured target-NLL increase below $10^{-4}$
across ten tasks. The same
posterior stability also enables transfer of acquisition,
OOD-detection, and abstention decisions without constructing a target
posterior.
\end{abstract}

\section{Introduction}
\label{sec:intro}

\begin{figure}[ht]
\centering
\includegraphics[width=\textwidth]{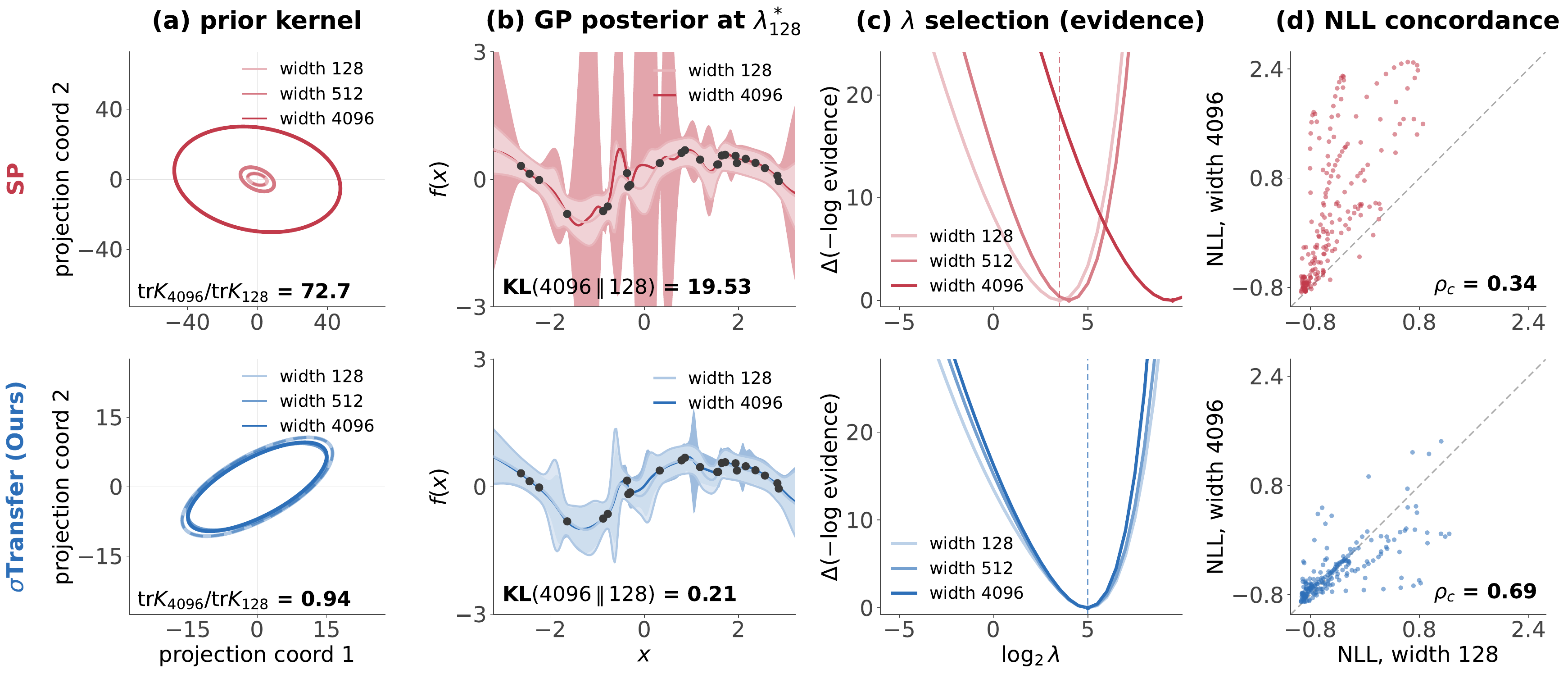}
\caption{\textbf{Why prior precision transfers under $\sigmat$.}
The same regression architecture at widths $128$, $512$, and $4096$,
under standard parametrization (\textcolor{cSP}{SP, red}) and
\textcolor{cSigma}{$\sigmat$ (blue)}.
\textbf{(a)}~The prior kernel, in a common two-dimensional projection.
\textbf{(b)}~Predictive posteriors using the precision selected at
width $128$.
\textbf{(c)}~Plug-in evidence curves used to select it; the dashed line
marks the width-$128$ optimum.
\textbf{(d)}~Predictive NLL per example, width $128$ versus wider
networks; points on the diagonal indicate agreement.
Overall, $\sigmat$ remains stable across width, while SP can drift.}
\label{fig:concept}
\end{figure}

The Laplace approximation for neural
networks~\citep{denker1990transforming,mackay1992practical} provides
predictive uncertainty estimates and supports evidence-based model
selection. Its
linearised form places a Gaussian
posterior on a trained network, yielding predictive distributions
useful for
out-of-distribution detection~\citep{daxberger2021laplace, kristiadi2020being}, active
learning~\citep{gal2017deep,kirsch2019batchbald}, and selective
prediction. The resulting uncertainty estimates depend on the prior
precision, a hyperparameter that controls the scale of posterior uncertainty
and is commonly selected using validation or marginal
likelihood~\citep{kristiadi2021learnable,immer2021scalable,
antoran2022adapting}. Selecting it typically requires repeated posterior evaluations across
candidate precisions, making the sweep increasingly costly as network
width grows.

Can we instead search for the prior precision on a small network and
reuse it, unchanged, on a much larger one? Under standard
parametrization (SP), PyTorch's default and the convention of a single
global learning rate, this transfer can be unreliable: the same
precision can produce different uncertainty at different widths. The
Maximal Update Parametrization ($\mup$;
\citealp{yang2021tensor,yang2021tuning}) is an alternative convention
that rescales each layer's initialization, forward multiplier, and
learning rate with width. This stabilizes training dynamics across
width, allowing hyperparameters such as the learning rate to be tuned at
a small width and reused at a larger one. Applying the Laplace
approximation to a $\mup$ network is not enough on its own: the prior
precision that is optimal at one width is still not optimal at another.

We introduce $\sigmat$, which builds on $\mup$ to make the optimal prior
precision stable across width (Figure~\ref{fig:concept}(c)), so a
precision chosen on a small (cheap) network is near-optimal for a large
(expensive) one. $\sigmat$ normalizes the prior geometry: each layer's
prior is scaled by a factor fixed by the width and $\mup$'s coordinate
scalings (Section~\ref{sec:keyidea-revised}). This adds no new
hyperparameter.

Roughly speaking, under $\sigmat$, networks of different widths
approximate the same limiting prior kernel: the function-space
covariance that the parameter prior induces on predictions, once the
network is linearized around its trained weights. If this kernel is
stable across width, we show that the Laplace posterior is stable too.
We use this stability in two ways: a small network can select the prior
precision for a larger one (\emph{prior-precision transfer}), or make
uncertainty-based decisions itself, such as which data to label or
when the larger network should abstain, without constructing a Laplace
posterior for it (\emph{decision transfer}).
Under the conditions of Section~\ref{sec:proof-goals}, our main
guarantees can be summarized by the progression
\[
\underbrace{K_n\to K_\infty}
_{\text{prior-kernel stability}}
\;\Longrightarrow\;
\underbrace{
m_n^\lambda\to m_\infty^\lambda,\;
C_n^\lambda\to C_\infty^\lambda}
_{\text{posterior mean and covariance stability}}
\;\Longrightarrow\;
\underbrace{\lambda_n^\star\to\lambda^\star}
_{\text{prior-precision transfer}}
,\quad
\underbrace{\text{same decision}}
_{\text{decision transfer}} .
\]
Figure~\ref{fig:concept} illustrates this progression at finite width, from the prior kernel to the posterior, the selected precision, and per-point negative log-likelihood (NLL) concordance. Our contributions are:
\begin{enumerate}[leftmargin=1.2em,itemsep=2pt,topsep=2pt,parsep=0pt]
\item We introduce $\sigmat$, a method that stabilizes the prior kernel
and therefore the Laplace posterior across width, so that uncertainty
computed on a small (cheap) network transfers to a much larger
(expensive) one.
\item \emph{Prior-precision transfer}: the optimal prior precision is
stable across width (Figure~\ref{fig:concept}(c)), so a precision chosen
on the small network is near-optimal for the large one, and we bound the
excess NLL from using it. The large network then fits a single posterior
instead of a full sweep.
\item \emph{Decision transfer}: decisions made from the small network's
posterior are approximately those the large network would make, because
the posterior mean and covariance both approach a common limit and so
approach each other. We demonstrate this for data acquisition,
out-of-distribution detection, and abstention, where the large network
needs no Laplace posterior at all.
\end{enumerate}

\section{Related work}
\label{sec:related}

The Tensor Programs series developed a mathematical framework for
neural networks at large width, settling questions such as the
convergence of standard architectures to Gaussian processes in the
infinite-width
limit~\citep{yang2020tensor, yang2020tensorii, yang2019wide}.
A key practical
outcome was the Maximal Update Parametrization
($\mup$; \citealp{yang2021tensor}), which scales initialization
and per-layer multipliers so that feature updates neither vanish nor
explode as width grows, keeping training stable and expressive
regardless of model size. Because $\mup$ networks converge to a common
limit as width grows, optimization hyperparameters such as the learning
rate tuned on a small proxy carry over to a larger target.
\citet{yang2021tuning} demonstrated this with $\mutransfer$,
transferring these settings from a 40M-parameter proxy to a
6.7B-parameter language
model at a search cost of about $7\%$ of one pretraining run, then
training the large model once with the optimal learning rate found on
the smaller proxy.

\citet{denker1990transforming} first used the Hessian to
obtain predictive error bars for neural networks;
\citet{mackay1992practical} developed this into a full Bayesian
framework, placing a Gaussian posterior over trained weights and
selecting the prior precision by maximising the marginal likelihood. Laplace posteriors have since achieved strong
results in out-of-distribution detection, active learning, and
selective prediction~\citep{daxberger2021laplace}. The main
computational barrier is the Hessian: storing it costs $O(P^2)$ and
inverting it $O(P^3)$ in the number of parameters $P$. Last-layer
restrictions~\citep{daxberger2021laplace}, Kronecker-factored
curvature~\citep{ritter2018scalable}, and online evidence
maximisation~\citep{immer2021scalable} have each reduced this cost.
Even so, selecting the prior precision requires sweeping over many
candidates and evaluating the posterior at each one. When the target
model has millions or billions of parameters, this search alone can
dominate the cost of Laplace inference.

\section{Preliminaries}
\label{sec:background}

\paragraph{Linearized Laplace.}
The Laplace approximation represents the parameter posterior by a
Gaussian centred at the trained parameters $\hat\theta_n$ of a network
with width $n$. Under the
standard Gaussian prior
$\theta\mid\lambda\sim\mathcal N(0,\lambda^{-1}I)$, the Laplace
approximation gives
\[
\theta\mid D,\lambda
\;\sim\;
\mathcal N\!\left(\hat\theta_n,(H_n+\lambda I)^{-1}\right),
\qquad
H_n=J_n(D)^\top\Omega_nJ_n(D).
\]
Here $H_n$ is the generalized Gauss--Newton curvature, $D$ the training
inputs, and $X,X'$ arbitrary finite sets of evaluation inputs. The
function $f_n(x;\theta)\in\mathbb R^{d_{\mathrm{out}}}$ denotes the
neural-network output: a scalar in our regression experiments and the
pre-softmax logit vector in classification;
$J_n(X)=\partial f_n(X;\hat\theta_n)/\partial\theta$ is its Jacobian
with respect to the stored parameters, stacked over the inputs in $X$,
and $\Omega_n$ is the negative-log-likelihood Hessian with respect to
these outputs, evaluated at the trained predictions.
Linearized Laplace~\citep{immer2021improving} uses a first-order expansion of $f_n$ around
$\hat\theta_n$ to propagate the Gaussian parameter posterior to the
output space of $f_n$. The resulting Gaussian output posterior has mean
and covariance
\[
m_n^\lambda(X)=f_n(X;\hat\theta_n),
\qquad
C_n^\lambda(X,X')=
J_n(X)(H_n+\lambda I)^{-1}J_n(X')^\top.
\]
With the trained parameters $\hat\theta_n$ held fixed, $\lambda$ leaves
the mean unchanged and controls the scale of posterior uncertainty:
larger values contract the covariance, whereas smaller values permit
greater uncertainty.
For classification, predictive class probabilities are defined by
averaging softmax over the Gaussian logit posterior; we approximate
this expectation using either a probit approximation or Monte Carlo
sampling. For Gaussian regression, observation noise is added when
forming the predictive distribution.
The validation-NLL curve is the map $\lambda\mapsto\NLL_n(\lambda)$ from
prior precision to the NLL of the width-$n$ posterior predictive on a
fixed validation set.

\paragraph{$\mup$ and hyperparameter transfer.}
The Maximal Update Parametrization ($\mup$), introduced by
\citet{yang2021tensor}, enables training hyperparameters
such as the learning rate to transfer across model
widths~\citep{yang2021tuning}. $\mup$
achieves this through width-dependent scaling of the initialization,
forward pass, and
parameter updates. Each layer is updated as much as possible without
causing logits or activations to diverge as width tends to infinity, so
hidden representations keep changing during training instead of staying
close to their random initialization as the network widens.
The same $\mup$ network can be expressed using different parameter
units. Writing $\theta_n$ for the stored weights of a network with width
$n$, we introduce rescaled parameter coordinates $\phi_n$ through
\[
\phi_n=T_n^{-1}\theta_n,
\qquad
T_n=\blockdiag\bigl(t_1(n)I,\ t_2(n)I,\ldots\bigr).
\]
Here $T_n$ applies one positive scale factor to each parameter tensor.
These factors depend on width but remain fixed during training. The
initialization and update rules are adjusted consistently with these
parameter units~\citep[Proposition~2.2.3]{yang2023tensor}.


\section{Method: \texorpdfstring{$\sigmat$}{sigmaTransfer}}
\label{sec:keyidea-revised}

We introduce $\sigmat$, a Laplace method that rescales the prior so
that a selected prior precision or posterior-based decisions can
transfer from a smaller network to a larger one. This section defines
$\sigmat$ and develops the intuition for its stability across width;
Section~\ref{sec:proof-goals} states the conditions and formal
guarantees for both, and Appendix~\ref{app:proofs} provides the proofs.

\paragraph{The transfer problem.}
To transfer a prior precision or posterior-based decision across width,
the same numerical value of the prior $\lambda$ must induce similar linearized
Laplace output posteriors as the network width grows. If the predictive mean
$m_n^\lambda$ and covariance $C_n^\lambda$ each approach fixed limits,
then sufficiently wide networks of widths $n$ and $N$ approximate the
same limiting posterior and hence one another, even when $N\gg n$. We
call a quantity evaluated on fixed inputs \emph{width-stable} when it
approaches a fixed limit as width grows;
Definition~\ref{def:width-stable} makes this precise.

\begin{definition}[Width-stability]
\label{def:width-stable}
Fix a finite input set $X$. A quantity $Q_n$ indexed by the network width
$n$ and evaluated on $X$ is \emph{width-stable} if it converges entrywise,
$Q_n\to Q_\infty$ as $n\to\infty$.
\end{definition}

Predictive-mean stability has already been
established.\footnote{For linearized Laplace, the predictive mean is the
trained $\mup$ network output, whose width-stability under matched
fixed-step training follows
from~\citet{yang2021tensor,yang2023tensor}.}
We therefore focus on stabilizing the covariance.
Section~\ref{sec:background} gives the posterior covariance under an
isotropic prior, which we write here as
\begin{equation}
C_{n,I}^\lambda(X,X')
=
J_n(X)(H_n+\lambda I)^{-1}J_n(X')^\top,
\qquad
H_n=J_n(D)^\top\Omega_nJ_n(D).
\label{eq:isotropic-posterior-revised}
\end{equation}
Here $n$ denotes the network width, $J_n$ is the network-output
Jacobian at the trained parameters, and $\Omega_n$ is the output-space
NLL curvature on the training data. This covariance expression does not reveal when width-stability holds,
because $J_n$ and $H_n$ change dimension with width.

To identify how to make this covariance width-stable, we apply the
Woodbury identity~\citep{woodbury1950inverting} to rewrite it in
function space. The key step is to do so for a general Gaussian prior
with covariance $\lambda^{-1}{\color{cKernel}S_n}$, rather than the
isotropic prior alone. For the full GGN and $\lambda>0$,
\begingroup\small
\begin{equation}
C_n^\lambda(X,X')
=\frac{{\color{cKernel}K_n(X,X')}}{\lambda}-\frac{{\color{cKernel}K_n(X,D)}}{\lambda^{2}}\,{\color{cCurv}\Omega_n^{1/2}}\bigl(I+\lambda^{-1}{\color{cCurv}\Omega_n^{1/2}}\,{\color{cKernel}K_n(D,D)}\,{\color{cCurv}\Omega_n^{1/2}}\bigr)^{-1}{\color{cCurv}\Omega_n^{1/2}}\,{\color{cKernel}K_n(D,X')},
\label{eq:function-space-posterior-revised}
\end{equation}
\endgroup
where
${\color{cKernel}K_n(X,X')}=J_n(X)\,{\color{cKernel}S_n}\,J_n(X')^\top$
is the prior kernel. In this form, the prior geometry $S_n$ enters the
posterior covariance only through ${\color{cKernel}K_n}$. The
posterior also depends on the prior precision $\lambda$, which is
fixed, and the likelihood curvature ${\color{cCurv}\Omega_n}$, which
is width-stable for the likelihoods considered here.\footnote{For
fixed-noise Gaussian regression, $\Omega_n$ is independent of width.
For softmax classification, it converges whenever the trained logits
converge.} However, ${\color{cKernel}K_n}$ is not guaranteed to be
width-stable: for instance, the isotropic choice $S_n=I$ gives
$K_{n,I}=J_nJ_n^\top$, which can diverge as width grows. We therefore
need to choose $S_n$ so that ${\color{cKernel}K_n}$ is width-stable;
Eq.~\eqref{eq:function-space-posterior-revised} then implies that
$C_n^\lambda$ is width-stable.

\paragraph{The $\sigmat$ prior kernel.}
We now construct ${\color{cKernel}S_n}$ so that the prior kernel
${\color{cKernel}K_n}$ is width-stable. Building on the width scalings
of $\mup$, we introduce width-normalized parameter coordinates
$\phi_n=T_n^{-1}\theta_n$. The block-diagonal map $T_n$ rescales each
parameter tensor so that its contribution to the prior kernel remains
at a comparable scale across widths. Appendix~\ref{app:sn} derives
these factors.

The reason for working in these coordinates is that \textbf{we prove
that the isotropic prior kernel in these width-normalized coordinates
is width-stable}. Under the conditions of
Theorem~\ref{thm:prior-kernel-stability}
(see Section~\ref{sec:proof-goals} for the precise statement),
\[
{\color{cKernel}K_n(X,X')}
=
J_{\phi,n}(X)\,J_{\phi,n}(X')^\top
\xrightarrow{n\to\infty}
K_\infty(X,X').
\]
Here $J_{\phi,n}$ is the network-output Jacobian with respect to the
width-normalized coordinates $\phi_n$, rather than the stored parameters.
Since $\phi_n=T_n^{-1}\theta_n$, the chain rule gives
$J_{\phi,n}=J_nT_n$. We can therefore express this same stable kernel
in the stored parameter coordinates:
\begin{equation}
{\color{cKernel}K_n(X,X')}
=
J_{\phi,n}(X)\,J_{\phi,n}(X')^\top
=
J_n(X)\,T_nT_n^\top\,J_n(X')^\top
=
J_n(X)\,{\color{cKernel}S_n}\,J_n(X')^\top,
\label{eq:sigmat-kernel-revised}
\end{equation}
where ${\color{cKernel}S_n}=T_nT_n^\top$. Equivalently,
$\lambda{\color{cKernel}S_n^{-1}}$ is the isotropic precision $\lambda I$
of the width-normalized coordinates expressed in the stored
coordinates, which is why we call it a width-normalized prior.
The posterior covariance in
the stored parameter coordinates is therefore
\begin{equation}
C_n^\lambda(X,X')
=
J_n(X)
\bigl(H_n+\lambda\,{\color{cKernel}S_n^{-1}}\bigr)^{-1}
J_n(X')^\top.
\label{eq:sigmat-posterior-revised}
\end{equation}
Thus $\sigmat$ changes only the prior term from $\lambda I$ to
$\lambda{\color{cKernel}S_n^{-1}}$; the trained checkpoint,
predictions, and likelihood curvature remain unchanged, and $\lambda$
remains the only tuned prior hyperparameter.
Section~\ref{sec:proof-goals} states the conditions under which this
gives prior-precision and decision transfer across width.
In practice, $S_n$ is never constructed as a dense matrix: within a
parameter tensor with width factor $t$, the prior precision is the
scalar $\lambda/t^2$, which is what one passes to the Laplace
implementation.
Appendix~\ref{app:sn} gives the block scales and verifies the
coordinate identity numerically.

\section{Theoretical guarantees}
\label{sec:proof-goals}

Section~\ref{sec:keyidea-revised} constructed the $\sigmat$ prior so
that the prior kernel $K_n$ is width-stable. We now propagate this
stability through the posterior to practical transfer guarantees;
formal statements and proofs appear in Appendix~\ref{app:proofs}.
Together with convergence of the likelihood curvature and trained
predictive mean, the results build on each other under the conditions
below:
\[
\underbrace{K_n\to K_\infty}
_{\substack{\text{\scriptsize Kernel stability}\\
\text{\scriptsize Thm.~\ref{thm:prior-kernel-stability}}}}
\;\Longrightarrow\;
\underbrace{C_n^\lambda\to C_\infty^\lambda}
_{\substack{\text{\scriptsize Covariance stability}\\
\text{\scriptsize Thm.~\ref{thm:posterior-covariance-stability}}}}
\;\Longrightarrow\;
\underbrace{\lambda_n^\star\to\lambda^\star}
_{\substack{\text{\scriptsize Precision transfer}\\
\text{\scriptsize Thm.~\ref{thm:lambda-transfer}}}}
,\quad
\underbrace{a_{n,\lambda}^\star=a_{\infty,\lambda}^\star}
_{\substack{\text{\scriptsize Decision transfer}\\
\text{\scriptsize Thm.~\ref{thm:decision-transfer}}}}
\]

\begin{theorem}[Prior-kernel stability; informal]
\label{thm:prior-kernel-stability}
Consider the scalar-output, fixed-depth, equal-width ReLU $\mup$ MLP of
Appendix~\ref{app:sn}. Suppose its checkpoint is obtained after a fixed
finite number of width-matched $\mup$ training steps and satisfies the
regularity conditions stated precisely in
Appendix~\ref{app:proof-kernel}. Then, on every fixed finite input set,
the $\sigmat$ prior kernel
$K_n(x,x')=J_n(x)S_nJ_n(x')^\top$ converges jointly almost surely as
$n\to\infty$; the trained outputs converge jointly as well. The same
conclusion holds for any fixed output dimension. The full statement,
assumptions, and proof are in Appendix~\ref{app:proof-kernel}.
\end{theorem}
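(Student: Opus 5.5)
The plan is to reduce the statement to the Tensor Programs Master Theorem \citep{yang2020tensorii,yang2023tensor}. We write the trained network, its backward pass, and the prior kernel as a single Tensor Program whose initial matrices and vectors are the $\mup$-initialized weights, and then read off the limits from that theorem.

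\textbf{Step 1: express the kernel as a sum of per-layer Gram terms.} For an $L$-layer MLP with hidden weights $W^l$, write $h^l(x)=W^l x^{l-1}(x)$ and $x^l=\phi(h^l)$. In the width-normalized coordinates $\phi_n=T_n^{-1}\theta_n$, the kernel decomposes blockwise:
\[
K_n(x,x')=\sum_{l} t_l(n)^2\,\bigl\langle \partial_{h^l}f(x),\partial_{h^{l}}f(x')\bigr\rangle\,\bigl\langle x^{l-1}(x),x^{l-1}(x')\bigr\rangle,
\]
with suitable forward multipliers absorbed into each term. Using the $\mup$ scalings of Appendix~\ref{app:sn}, we check that each $t_l(n)^2$ turns this product into a product of two normalized averages $\frac1n\sum_\alpha(\cdot)_\alpha(\cdot')_\alpha$ of coordinates that are $\Theta(1)$. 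Here ``$\mup$-normalized'' means the backward vectors are rescaled by $n$, as in \citet{yang2021tensor}. The input and output layers have one fan dimension that is finite and are handled separately: they give a single normalized average, or a finite sum. This bookkeeping is exactly what fixes $T_n$, so the first task is to verify that the factors in Appendix~\ref{app:sn} produce $\Theta(1)$ averages and nothing larger.

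\textbf{Step 2: build the program.} The fixed finite number of $\mup$ SGD steps at matched learning rates is unrolled as a NE$\otimes$OR$\top$ program, following \citet[Prop.~2.2.3]{yang2023tensor} and \citet{yang2021tensor}. Each weight update is a sum of outer products of forward and backward vectors, so the trained $W^l$ acting on a vector expands into MatMuls by the initial $W^l$ and $(W^l)^\top$ plus moment scalars. The trained outputs $f_n(x)$ and the normalized averages from Step~1 are then scalars of this program, evaluated at the finitely many points of the input set together with the training inputs.

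\textbf{Step 3: apply the Master Theorem.} It gives joint almost-sure convergence of all these scalars to deterministic limits, and hence of $K_n$ and the trained outputs. Joint convergence is automatic, since finitely many scalars are involved. For fixed output dimension we repeat the argument coordinatewise on the readout, adding finitely many more scalars.

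\textbf{Main obstacle: regularity for ReLU.} The Master Theorem requires pseudo-Lipschitz nonlinearities. The backward pass uses $\phi'=\mathbf 1\{h>0\}$, which is discontinuous, and the kernel involves products of such indicators. I would first invoke the extension of the Master Theorem to nonlinearities that are discontinuous on a measure-zero set, as in \citet{yang2020tensor}/TP-II. This needs nondegeneracy: the limiting Gaussian preactivations must have no atom at $0$. That is precisely the regularity condition I expect Appendix~\ref{app:proof-kernel} to impose, for instance distinct, nonzero inputs and a nondegenerate limiting covariance. If that route fails, the fallback is to replace ReLU by a smooth approximant $\phi_\epsilon$, obtain uniform-in-$n$ control of the error almost surely, and let $\epsilon\to0$.
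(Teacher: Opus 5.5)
\textbf{Gap: the primary treatment of ReLU.} Your Steps 1--3 match the paper. The exact finite-width identity \eqref{eq:mlp-sigma-kernel-decomposition} writes $K_n$ as finite sums and products of the forward Grams $\Sigma_{\ell,n}$ and the rescaled backward Grams $\Pi_{\ell,n}$. Lemma~\ref{lem:tp-gram} then obtains their joint a.s.\ limits from the NE$\otimes$OR$^\top$ Master Theorem applied to the unrolled finite-step program. The problem is the route you put first for ReLU. The discontinuous-nonlinearity extension you want to invoke rests on Gaussian conditioning. It applies when every argument of the discontinuous function has a nondegenerate Gaussian limit, which essentially means at initialization; this is how the ReLU NTK is obtained. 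After even one $\mup$ feature-learning step, the limiting preactivations are no longer Gaussian, because the update adds terms such as $\operatorname{ReLU}'(Z^{h_0})$ times a Gaussian. The Master Theorem for training programs also requires pseudo-Lipschitz nonlinearities, and $\operatorname{ReLU}'=\mathbf 1\{u>0\}$ is not one. So ``nondegenerate limiting Gaussian covariance'' is the wrong condition for $t\ge1$, and no off-the-shelf theorem gives the conclusion. You also cannot impose a no-atom condition on the \emph{limiting} law of the exact-ReLU network before you know that this limit exists.

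\textbf{What the fallback actually requires.} Your fallback is therefore not optional, and it is exactly the paper's route. As stated, though, it misses what makes it work. You do not need uniform-in-$n$ control, and it is unlikely to be available. What Lemma~\ref{lem:relu-stability} needs is control of the iterated limit $\lim_{\delta\downarrow0}\limsup_{n\to\infty}$ of the normalized $L^2$ discrepancies between the exact network and a smoothed one. The two networks are coupled through the same initialization and the same external training path. This requires two assumptions, both in Assumption~\ref{ass:relu-kink}:
\begin{itemize}
\item the \emph{finite-width empirical} preactivation laws have asymptotically vanishing mass in $\{\abs{z_i}\le\delta\}$, at every step $t\le T$, every layer, and every input;
\item uniform empirical moment bounds hold.
\end{itemize}
It then requires propagating the discrepancy through training:
\begin{itemize}
\item the gate mismatch $\abs{\mathbf 1\{u>0\}-\varphi_\delta'(v)}$ is bounded by the kink mass plus a Markov term in $\norm{u-v}_{n,2}$;
\item global Lipschitzness of the optimizer maps turns gradient-history discrepancies into operator-norm discrepancies of the weight increments;
\item the effective matrices stay $O(1)$ in operator norm;
\item a finite induction over forward, backward, and update instructions closes the argument.
\end{itemize}
Only then do you apply the Master Theorem at each fixed $\delta$, along a countable sequence $\delta\downarrow0$. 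The exact-ReLU limits follow because the fixed-$\delta$ limits are Cauchy in $\delta$, via a three-$\varepsilon$ argument.
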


The proof builds on Tensor Programs~\citep{yang2023tensor}, which
establish large-width limits for empirical averages arising during a
fixed number of training steps. The $\sigmat$ normalization expresses
the prior kernel as a finite combination of these averages, so their
convergence implies convergence of the kernel.
Appendix~\ref{app:proof-kernel} provides the smoothing argument needed
to extend this reasoning to ReLU.

\begin{theorem}[Posterior-covariance stability; informal]
\label{thm:posterior-covariance-stability}
For full-GGN Laplace, given prior-kernel convergence and width-stable
positive-semidefinite likelihood curvature, the linearized-Laplace
output covariance $C_n^\lambda$ converges uniformly over any interval
$[\lambda_-,\lambda_+]$ with $0<\lambda_-\le\lambda_+<\infty$
(proof in Appendix~\ref{app:proof-posterior}).
\end{theorem}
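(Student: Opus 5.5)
The plan is to work entirely in function space through the Woodbury representation Eq.~\eqref{eq:function-space-posterior-revised}. There, $C_n^\lambda(X,X')$ is an explicit function of finitely many fixed-size matrices: $K_n(X,X')$, $K_n(X,D)$, $K_n(D,D)$, $K_n(D,X')$, $\Omega_n^{1/2}$, and the scalar $\lambda$. Set $A=X\cup X'\cup D$. The hypotheses then say that the Gram matrix $K_n(A,A)$ converges entrywise to $K_\infty(A,A)$ (Theorem~\ref{thm:prior-kernel-stability}) and that $\Omega_n\to\Omega_\infty$ with each $\Omega_n\succeq 0$.

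I will first pass the PSD property to the limit: $\Omega_\infty\succeq0$ because the PSD cone is closed. Since the matrix square root is continuous on the PSD cone, $\Omega_n^{1/2}\to\Omega_\infty^{1/2}$. Similarly, $K_\infty(D,D)\succeq0$ as a limit of Gram matrices $J_{\phi,n}J_{\phi,n}^\top$.

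Next, define the map
\[
F(K,W,\lambda)=\frac{K_{XX'}}{\lambda}-\frac{K_{XD}}{\lambda^2}W\bigl(I+\lambda^{-1}WK_{DD}W\bigr)^{-1}WK_{DX'}.
\]
It is defined on PSD $K$, PSD $W$, and $\lambda\in[\lambda_-,\lambda_+]$. The key observation is that $M=WK_{DD}W\succeq0$, so $I+\lambda^{-1}M\succeq I$. This gives the uniform bound
\[
\bigl\|(I+\lambda^{-1}M)^{-1}\bigr\|_{\op}\le1,
\]
which holds for every $\lambda>0$ and every admissible $n$. Matrix inversion is therefore continuous on this set, and $F$ is jointly continuous in $(K,W,\lambda)$ on the domain $\{K,W\succeq0\}\times[\lambda_-,\lambda_+]$.

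The step I expect to be the main obstacle is uniformity in $\lambda$; pointwise convergence follows immediately from continuity. I would handle it either by compactness or by a direct estimate.

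\begin{itemize}
\item \textbf{Compactness route.} The converging sequence $(K_n(A,A),\Omega_n^{1/2})$ together with its limit forms a compact set $\mathcal K$. Then $F$ is uniformly continuous on the compact set $\mathcal K\times[\lambda_-,\lambda_+]$. Hence
\[
\sup_{\lambda\in[\lambda_-,\lambda_+]}\bigl\|F(K_n,W_n,\lambda)-F(K_\infty,W_\infty,\lambda)\bigr\|\to0.
\]
\item \textbf{Direct route.} Use the resolvent identity
\[
R_n-R_\infty=R_n\,\lambda^{-1}(M_\infty-M_n)\,R_\infty,
\qquad \|R_n\|_{\op},\|R_\infty\|_{\op}\le1.
\]
Combining this with $\lambda^{-1}\le\lambda_-^{-1}$ yields an explicit Lipschitz bound of the form
\[
\sup_\lambda\|C_n^\lambda-C_\infty^\lambda\|\le c(\lambda_-,B)\bigl(\|K_n-K_\infty\|+\|\Omega_n^{1/2}-\Omega_\infty^{1/2}\|\bigr),
\]
where $B$ bounds the convergent sequences.
\end{itemize}

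I will define $C_\infty^\lambda:=F(K_\infty,\Omega_\infty^{1/2},\lambda)$.

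Finally, I would note two points. The first is that the Woodbury form itself is legitimate for the $\sigmat$ prior. The stored-coordinate expression Eq.~\eqref{eq:sigmat-posterior-revised} equals $J_{\phi,n}(H_{\phi,n}+\lambda I)^{-1}J_{\phi,n}^\top$ with $H_{\phi,n}=J_{\phi,n}(D)^\top\Omega_nJ_{\phi,n}(D)$. The push-through identity
\[
(\lambda I+B^\top B)^{-1}B^\top=B^\top(\lambda I+BB^\top)^{-1},
\]
with $B=\Omega_n^{1/2}J_{\phi,n}(D)$, then gives Eq.~\eqref{eq:function-space-posterior-revised} for every $\lambda>0$ without any invertibility of $H_n$. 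The second is that, since Theorem~\ref{thm:prior-kernel-stability} gives almost-sure convergence, the conclusion holds almost surely: the argument above is applied on the probability-one event where the kernel converges.
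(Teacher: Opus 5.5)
Your proposal is correct and follows the paper's route. The paper also whitens the Jacobian ($G_{A,n}=J_n(A)S_n^{1/2}$) and applies Woodbury to $G_{D,n}^\top\Omega_nG_{D,n}+\lambda I$ to reach the data-space formula; it then gets uniformity over $[\lambda_-,\lambda_+]$ exactly as in your direct route, using $I+\lambda^{-1}\Omega_n^{1/2}K_n(D,D)\Omega_n^{1/2}\succeq I$ and the resolvent bound $\|M_n(\lambda)^{-1}-M_\infty(\lambda)^{-1}\|\le\|M_n(\lambda)-M_\infty(\lambda)\|$, while your compactness route and push-through justification are valid additions the paper does not need.
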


Thus the same numerical precision produces increasingly similar
posterior uncertainty as the networks become wider.

\begin{theorem}[Prior-precision transfer; informal]
\label{thm:lambda-transfer}
If the posterior mean and covariance on a fixed validation set
converge uniformly in $\lambda$ over an interval
$\Lambda=[\lambda_-,\lambda_+]$ with $0<\lambda_-\le\lambda_+<\infty$,
then the validation-NLL curves converge uniformly.
If the limiting curve has a unique minimizer $\lambda^\star$,
any sequence of minimizers satisfies
$\lambda_n^\star\to\lambda^\star$.
Even without a unique minimizer, the excess NLL from using the
width-$n$ proxy's precision on a target of width $N$ satisfies
\begin{equation}
\NLL_N(\lambda_n^\star)
-\min_{\lambda\in\Lambda}\NLL_N(\lambda)
\le 2\sup_{\lambda\in\Lambda}
\lvert\NLL_n(\lambda)-\NLL_N(\lambda)\rvert.
\label{eq:nll-transfer-bound}
\end{equation}
Full conditions and proof in Appendix~\ref{app:proof-precision}.
\end{theorem}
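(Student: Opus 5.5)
The proof has three steps: uniform convergence of the NLL curves from uniform convergence of the predictive moments, an argmin-stability argument for $\lambda_n^\star\to\lambda^\star$, and a two-sided sandwich for the excess-NLL bound \eqref{eq:nll-transfer-bound}. Write the validation NLL as a finite sum over validation points $x_i$ with labels $y_i$,
\[
\NLL_n(\lambda)=\sum_i \ell\bigl(y_i;m_n^\lambda(x_i),C_n^\lambda(x_i,x_i)\bigr).
\]
Here $\ell$ is the negative log predictive density. For Gaussian regression it is $\ell=\tfrac12\log(2\pi(c+\sigma^2))+\tfrac{(y-m)^2}{2(c+\sigma^2)}$. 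For classification it is the negative log of the probit or Monte Carlo softmax average; for Monte Carlo we take the expectation over a fixed base sample so that it is a deterministic function of $(m,C)$.

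\textbf{Step 1: uniform convergence of the NLL curves.} By hypothesis, $(m_n^\lambda,C_n^\lambda)\to(m_\infty^\lambda,C_\infty^\lambda)$ uniformly on $\Lambda$. The limit map $\lambda\mapsto(m_\infty^\lambda,C_\infty^\lambda)$ is continuous; this is either an assumption, or it follows from Theorem~\ref{thm:posterior-covariance-stability} as a uniform limit of maps continuous in $\lambda$. Its image over the compact $\Lambda$ is therefore compact. Uniform convergence places the moments, for all large $n$, inside a compact $\epsilon$-neighbourhood $\mathcal K$ of that image, with covariances positive semidefinite. On $\mathcal K$, $\ell$ is continuous, hence uniformly continuous, for each fixed $y_i$:
\begin{itemize}[leftmargin=1.2em,itemsep=1pt]
\item in regression because $\sigma^2>0$ keeps $c+\sigma^2$ bounded away from zero;
\item in classification because the softmax average is strictly positive and continuous in $(m,C)$.
\end{itemize}
Composing gives $\sup_{\lambda\in\Lambda}\lvert\NLL_n(\lambda)-\NLL_\infty(\lambda)\rvert\to0$. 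The main obstacle is this step: making precise the regularity of $\ell$, namely continuity in the covariance including at degenerate $C$, and bounding the predictive probabilities away from zero. These would be stated as explicit conditions in the appendix.

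\textbf{Step 2: convergence of minimizers.} We use the standard argmin-stability argument. $\NLL_\infty$ is continuous on compact $\Lambda$, so it has a minimizer. Let $\lambda^\star$ be the unique minimizer and fix $\delta>0$. Set
\[
\eta=\min_{\lambda\in\Lambda,\;\lvert\lambda-\lambda^\star\rvert\ge\delta}\NLL_\infty(\lambda)-\NLL_\infty(\lambda^\star).
\]
This is positive by compactness and uniqueness. Take $n$ large enough that the uniform error is below $\eta/3$. Then any $\lambda$ with $\lvert\lambda-\lambda^\star\rvert\ge\delta$ has $\NLL_n(\lambda)>\NLL_n(\lambda^\star)$, so every minimizer satisfies $\lvert\lambda_n^\star-\lambda^\star\rvert<\delta$.

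\textbf{Step 3: the excess-NLL bound.} This is elementary and needs no limit. Let $\varepsilon=\sup_\Lambda\lvert\NLL_n-\NLL_N\rvert$ and let $\lambda_N^\star$ minimize $\NLL_N$ on $\Lambda$. Then
\[
\NLL_N(\lambda_n^\star)\le\NLL_n(\lambda_n^\star)+\varepsilon\le\NLL_n(\lambda_N^\star)+\varepsilon\le\NLL_N(\lambda_N^\star)+2\varepsilon.
\]
By Step 1, $\varepsilon\le\sup_\Lambda\lvert\NLL_n-\NLL_\infty\rvert+\sup_\Lambda\lvert\NLL_N-\NLL_\infty\rvert$, which tends to zero as $n,N\to\infty$. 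This makes the transfer bound asymptotically vanishing.
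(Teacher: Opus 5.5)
Your proposal is correct and follows the paper's proof essentially step for step. It uses uniform continuity of the per-point NLL on a compact set that eventually contains all the predictive moments, the same margin-over-three argmin-consistency argument, and the identical three-term sandwich for the excess-NLL bound. The paper bounds the classification probabilities away from zero explicitly via $\rho_i=\inf_{\lambda\in\Lambda}p_{\infty,\lambda}(y_i\mid x_i)>0$, but your compactness argument already covers this, including degenerate covariances. So the ``explicit conditions'' you anticipate are not actually needed.
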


\begin{theorem}[Posterior-derived decision transfer; informal]
\label{thm:decision-transfer}
Fix $\lambda>0$ and a finite action set. Suppose the posterior mean and
covariance converge on the fixed finite set containing all inputs used
by the action scores. If the scores depend continuously on these
moments through scoring rules shared across widths, they converge
uniformly. If the limiting best action has a positive gap over all
alternatives, two sufficiently wide networks choose the same action. At
finite widths, even without this gap, the loss in target score from
using the proxy's action is at most twice the largest proxy--target
score difference. Full conditions and proof appear in
Appendix~\ref{app:proof-decision}.
\end{theorem}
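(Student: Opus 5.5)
The argument has three steps: score convergence, exact agreement of best actions under a gap, and a finite-width regret bound.

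\textbf{Setup.} Let the finite action set be $\mathcal A$, and let $Z$ be the fixed finite input set used by the scores. For each action $a\in\mathcal A$ there is a width-independent scoring rule $g_a$. The score at width $n$ is
\[
U_n(a)=g_a\bigl(m_n^\lambda(Z),C_n^\lambda(Z,Z)\bigr),
\]
and the limiting score $U_\infty(a)$ is defined the same way from the limiting moments. Each $g_a$ is assumed continuous on a neighbourhood of the limiting moments $(m_\infty^\lambda,C_\infty^\lambda)$. The convergent moments are the ones given by Theorem~\ref{thm:posterior-covariance-stability} and the predictive-mean stability discussed earlier.

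\textbf{Step 1: uniform score convergence.} Continuity of $g_a$ at the limit gives $U_n(a)\to U_\infty(a)$ for each $a$. Since $\mathcal A$ is finite, the maximum of finitely many null sequences is null, so
\[
\varepsilon_n:=\max_a|U_n(a)-U_\infty(a)|\to0 .
\]
No equicontinuity is needed, because there are only finitely many actions.

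\textbf{Step 2: same action under a gap.} Let $a^\star_\infty$ be the limiting best action with gap
\[
\gamma=U_\infty(a^\star_\infty)-\max_{a\neq a^\star_\infty}U_\infty(a)>0 .
\]
Choose $n_0$ such that $\varepsilon_n<\gamma/2$ for all $n\ge n_0$. Then for any $n\ge n_0$ and any $a\ne a^\star_\infty$,
\[
U_n(a^\star_\infty)>U_\infty(a^\star_\infty)-\gamma/2\ge U_\infty(a)+\gamma/2>U_n(a).
\]
Hence $a^\star_{n,\lambda}=a^\star_\infty$ is the unique maximizer at width $n$. Any two widths $n,N\ge n_0$ therefore select the same action, namely the limiting one.

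\textbf{Step 3: finite-width regret without a gap.} Let $\hat a=a^\star_{n,\lambda}$ be the proxy's chosen action and $a_N$ the target's best action, and set $\delta=\max_a|U_n(a)-U_N(a)|$. Then
\[
U_N(a_N)-U_N(\hat a)=\bigl[U_N(a_N)-U_n(a_N)\bigr]+\bigl[U_n(a_N)-U_n(\hat a)\bigr]+\bigl[U_n(\hat a)-U_N(\hat a)\bigr].
\]
The middle bracket is at most $0$ because $\hat a$ maximizes $U_n$. The outer two brackets are each at most $\delta$. So the regret is at most $2\delta$, which is the same argument as Eq.~\eqref{eq:nll-transfer-bound}. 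Moreover $\delta\le\varepsilon_n+\varepsilon_N\to0$, so the bound vanishes as both widths grow.

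\textbf{Main obstacle.} The only delicate point is making the hypotheses precise, since the algebra above is routine.
\begin{itemize}
\item The rules $g_a$ must be continuous at the limiting moments. This fails for rules involving $C^{-1}$ or $\log\det C$ when $C_\infty$ is singular, so such cases either need a positive-definiteness assumption or need to be excluded.
\item The moments must converge jointly on all of $Z$. This is guaranteed because $Z$ is fixed and finite, and both Theorem~\ref{thm:prior-kernel-stability} and Theorem~\ref{thm:posterior-covariance-stability} give convergence on any such set.
\item Ties in the argmax are handled by allowing an arbitrary selection in Step 3, which does not affect the bound.
\end{itemize}
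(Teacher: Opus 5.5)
Your proof is correct and follows the paper's argument essentially step for step. The steps are: pointwise convergence from continuity of each $g_a$ (the paper's $F_a$) at the limiting moments, uniformity from finiteness of $\mathcal A$, the $\gamma/2$ margin argument, and the three-term decomposition; your Step~3 is exactly the paper's Corollary~\ref{cor:proxy-target-decision}. One statement in your obstacles list goes too far: convergence of the moments on $Z$ is a hypothesis of this theorem, not something the earlier results guarantee in general, since Theorem~\ref{thm:prior-kernel-stability} covers only the finite-step ReLU MLP and Theorem~\ref{thm:posterior-covariance-stability} additionally requires $\Omega_n\to\Omega_\infty$.
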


\paragraph{Scope.}
The posterior-transfer results apply beyond MLPs: for any width-indexed
architecture with $\theta_n=T_n\phi_n$,
Theorems~\ref{thm:posterior-covariance-stability}--\ref{thm:decision-transfer}
apply under their stated conditions whenever the width-normalized prior
kernel, predictive mean, and likelihood curvature converge.
Theorem~\ref{thm:prior-kernel-stability} establishes kernel convergence
for the stated finite-step ReLU MLP setting. For last-layer Laplace, the
kernel condition reduces to convergence of the normalized final-feature
Gram matrix. Extensions to diagonal/KFAC curvature, to finitely many
sequential acquisition rounds, and to decisions at the transferred
precision appear in Appendices~\ref{app:structured-approximations}
and~\ref{app:proof-decision}, under additional conditions. Precision
transfer using fixed-checkpoint evidence additionally requires
convergence of the prior-weighted squared parameter norm and a unique
limiting maximizer; posterior convergence alone is insufficient
(Appendix~\ref{app:proof-precision}).

\section{Results}
\label{sec:results}

We evaluate $\sigmat$ in two deployment regimes. Under Regime~A (prior transfer), a
small proxy selects the Laplace prior precision $\lambda$, and the larger
target constructs one posterior at the transferred value. This removes
the target-side precision sweep, not the target posterior itself. Under
Regime~B (decision transfer), the proxy supplies posterior-derived
decisions, such as acquisition rankings and active-learning selections,
and the target requires no posterior at deployment. Target posteriors in
our Regime-B experiments are constructed only as evaluation references.
To measure transfer quality we
track two estimands: the selection gap $|\Delta\lambda|$, the distance
between the proxy and target optimal precisions in bits, and the
predictive transfer gap $|\Delta\NLL|$, the absolute target test-NLL difference
between proxy-selected and target-selected precisions. All posteriors
are post-hoc GGN Laplace around fixed checkpoints; formal definitions
of all evaluation metrics are in Appendix~\ref{app:metrics}.

The main tables compare four methods. SP with an isotropic prior
is standard Laplace practice and serves as the main baseline. We add
three further comparisons to isolate the source of any improvement.
$\mup$ with an isotropic prior uses the $\mup$ parametrization but
keeps the standard Laplace prior, testing whether $\mup$ training alone
helps transfer. SP + metric applies our prior geometry under the standard
parametrization; this combination is not covered by
Theorem~\ref{thm:prior-kernel-stability}, but it serves as a
mis-specification control and, as we see below, sometimes helps
empirically. $\sigmat$ combines $\mup$ training with the width-normalized
prior. The two $\mup$ methods share MAP checkpoints, so any difference
between them is due to the prior geometry alone.
Appendix~\ref{app:normalization-baselines} evaluates additional empirical
scale-matching controls on the same checkpoints.

\subsection{Regression}

\begin{figure}[ht]
\centering
\includegraphics[width=\linewidth]{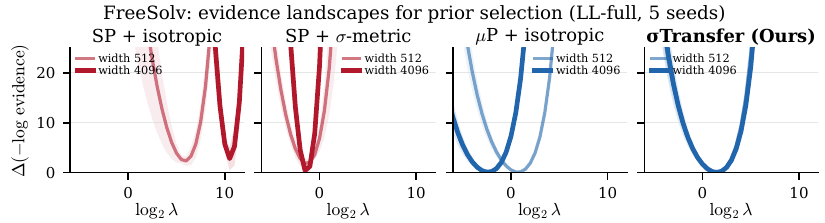}
\caption{Evidence as a function of prior precision $\lambda$ on
FreeSolv. Light and dark curves show proxy (width $512$) and target
(width $4096$); bands are mean $\pm$ SEM over five seeds. Under SP the
landscape shifts with width, so the proxy's optimum is far from the
target's. Under $\sigmat$ the two curves nearly overlap.
Table~\ref{tab:molecular} reports numerical results for all three
datasets.}
\label{fig:molecular-evidence-fourarm}
\end{figure}

\begin{table}[ht]
\centering
\caption{Prior-precision transfer on three regression datasets (proxy
width $512$, target width $4096$, full-covariance last-layer Laplace,
five paired seeds, mean $\pm$ s.d.). $|\Delta\NLL|$ scaled by $10^3$;
lower is better for all columns. See Appendix~\ref{app:metrics} for
metric definitions.}
\label{tab:molecular}
\vspace{0.15em}

\scriptsize
\setlength{\tabcolsep}{2.0pt}
\renewcommand{\arraystretch}{0.98}

\begin{adjustbox}{max width=\linewidth}
\begin{tabular}{@{}l|ccc|ccc|ccc@{}}
  \toprule
  \multirow{2}{*}{\textbf{Method}}
    & \multicolumn{3}{c|}{\textbf{ESOL}}
    & \multicolumn{3}{c|}{\textbf{FreeSolv}}
    & \multicolumn{3}{c}{\textbf{Lipophilicity}} \\
  \cmidrule(lr){2-4} \cmidrule(lr){5-7} \cmidrule(l){8-10}
    & $\lvert\Delta\lambda\rvert$ & $\lvert\Delta\NLL\rvert$ & $\mathrm{KL}_{4096\|512}$
    & $\lvert\Delta\lambda\rvert$ & $\lvert\Delta\NLL\rvert$ & $\mathrm{KL}_{4096\|512}$
    & $\lvert\Delta\lambda\rvert$ & $\lvert\Delta\NLL\rvert$ & $\mathrm{KL}_{4096\|512}$ \\
  \midrule
  SP + isotropic
    & $1.5_{\pm0.4}$ & $41.8_{\pm30.7}$ & $0.208_{\pm0.073}$
    & $5.0_{\pm2.2}$ & $452.8_{\pm429.5}$ & $0.079_{\pm0.055}$
    & $4.0_{\pm2.7}$ & $140.2_{\pm119.7}$ & $0.173_{\pm0.027}$ \\
  SP + metric
    & $1.5_{\pm0.5}$ & $8.40_{\pm6.59}$ & $0.208_{\pm0.071}$
    & $1.2_{\pm1.4}$ & $9.65_{\pm10.95}$ & $0.079_{\pm0.055}$
    & $2.1_{\pm1.2}$ & $3.39_{\pm1.93}$ & $0.173_{\pm0.028}$ \\
  $\mup$ + isotropic
    & $3.0_{\pm1.4}$ & $0.947_{\pm0.566}$ & $\mathbf{0.167}_{\pm0.087}$
    & $3.1_{\pm0.5}$ & $1.564_{\pm0.825}$ & $\mathbf{0.008}_{\pm0.004}$
    & $3.2_{\pm1.0}$ & $0.094_{\pm0.024}$ & $\mathbf{0.057}_{\pm0.058}$ \\
  \textbf{$\sigmat$ (ours)}
    & $\mathbf{0.9}_{\pm0.8}$ & $\mathbf{0.548}_{\pm0.412}$ & $\mathbf{0.167}_{\pm0.087}$
    & $\mathbf{0.5}_{\pm0.4}$ & $\mathbf{0.461}_{\pm0.430}$ & $\mathbf{0.008}_{\pm0.004}$
    & $\mathbf{0.4}_{\pm0.2}$ & $\mathbf{0.033}_{\pm0.019}$ & $\mathbf{0.057}_{\pm0.058}$ \\
  \bottomrule
\end{tabular}
\end{adjustbox}
\vspace{-0.5em}
\end{table}

We test Regime~A on
ESOL~\citep{delaney2004esol},
FreeSolv~\citep{mobley2014freesolv},
and Lipophilicity~\citep{wu2018moleculenet} under scaffold shift,
transferring an evidence-selected precision from width $512$ to $4096$
using full-covariance last-layer Laplace and five paired seeds
(full protocol in Appendix~\ref{app:regression-details}).
On FreeSolv the SP + isotropic evidence landscape shifts markedly
across widths, whereas the $\sigmat$ curves nearly overlap
(Figure~\ref{fig:molecular-evidence-fourarm}). Across all three
datasets $\sigmat$ gives the smallest selection gap $|\Delta\lambda|$
($0.4$--$0.9$ bits, against $1.5$--$5.0$ for SP + isotropic) and a
predictive transfer gap $|\Delta\NLL|$ orders of magnitude smaller
(Table~\ref{tab:molecular}). We also report KL divergence to measure
agreement between the proxy's and target's predictive distributions
(Appendix~\ref{app:metrics}).

\subsection{Image classification}

\begin{table*}[th]
\centering
\caption{Prior-precision transfer on MNIST and Fashion-MNIST across
three last-layer Laplace approximations (proxy width $128$, target
width $4096$, validation-NLL-selected $\lambda$, 10 paired seeds,
mean $\pm$ s.d.). $|\Delta\NLL|$ scaled by $10^2$; lower is better
for all columns. See Appendix~\ref{app:metrics} for metric definitions
and Appendix~\ref{app:evidence-transfer} for evidence-selected
results.}\label{tab:mnist_fmnist_approx_transfer}
\footnotesize \setlength{\tabcolsep}{2.5pt}
\begin{adjustbox}{width=0.95\textwidth}
\begin{tabular}{l|cc|cc|cc|cc|cc|cc}
\toprule
\multirow{3}{*}{\textbf{Method}}
  & \multicolumn{6}{c|}{\textbf{MNIST}} & \multicolumn{6}{c}{\textbf{FMNIST}} \\
\cmidrule(lr){2-7} \cmidrule(lr){8-13}
  & \multicolumn{2}{c|}{\textbf{LL full}} & \multicolumn{2}{c|}{\textbf{LL diag}}
  & \multicolumn{2}{c|}{\textbf{LL KFAC}}
  & \multicolumn{2}{c|}{\textbf{LL full}} & \multicolumn{2}{c|}{\textbf{LL diag}}
  & \multicolumn{2}{c}{\textbf{LL KFAC}} \\
  & $|\Delta\lambda|$ & $|\Delta\NLL|$ & $|\Delta\lambda|$ & $|\Delta\NLL|$
  & $|\Delta\lambda|$ & $|\Delta\NLL|$
  & $|\Delta\lambda|$ & $|\Delta\NLL|$ & $|\Delta\lambda|$ & $|\Delta\NLL|$
  & $|\Delta\lambda|$ & $|\Delta\NLL|$ \\
\midrule 
SP + isotropic & $1.75_{\pm.26}$ & $9.8_{\pm4.6}$ & $0.45_{\pm.28}$ & $0.8_{\pm0.9}$ & $1.50_{\pm.24}$ & $6.9_{\pm3.2}$
  & $1.55_{\pm.28}$ & $9.0_{\pm4.1}$ & {\boldmath$0.25_{\pm.35}$} & $0.9_{\pm1.2}$ & $1.85_{\pm.24}$ & $13.2_{\pm3.2}$ \\
SP + metric & $3.25_{\pm.26}$ & $9.6_{\pm2.1}$ & $4.55_{\pm.28}$ & $14.8_{\pm1.9}$ & $3.50_{\pm.24}$ & $10.7_{\pm1.9}$
  & $3.45_{\pm.28}$ & $39.0_{\pm6.3}$ & $5.25_{\pm.35}$ & $67.2_{\pm7.3}$ & $3.15_{\pm.24}$ & $33.3_{\pm5.5}$ \\
$\mup$ + isotropic & $4.75_{\pm.26}$ & $15.7_{\pm3.0}$ & $5.30_{\pm.26}$ & $20.6_{\pm3.2}$ & $4.75_{\pm.26}$ & $16.5_{\pm2.7}$
  & $4.60_{\pm.21}$ & $34.9_{\pm2.8}$ & $4.55_{\pm.28}$ & $47.2_{\pm5.0}$ & $4.45_{\pm.28}$ & $30.5_{\pm2.6}$ \\
\textbf{$\sigmat$ (ours)} & {\boldmath$0.25_{\pm.26}$} & {\boldmath$0.2_{\pm0.3}$} & {\boldmath$0.30_{\pm.26}$} & {\boldmath$0.1_{\pm0.2}$} & {\boldmath$0.25_{\pm.26}$} & {\boldmath$0.2_{\pm0.3}$}
  & {\boldmath$0.40_{\pm.21}$} & {\boldmath$0.4_{\pm0.4}$} & $0.45_{\pm.28}$ & {\boldmath$0.6_{\pm0.6}$} & {\boldmath$0.55_{\pm.28}$} & {\boldmath$0.3_{\pm0.3}$} \\
\bottomrule
\end{tabular}
\end{adjustbox}
\end{table*}

\paragraph{Prior transfer (Regime~A).}
Table~\ref{tab:mnist_fmnist_approx_transfer} reports both estimands on
MNIST~\citep{lecun1998gradient} and
Fashion-MNIST~\citep{xiao2017fashion}
across three last-layer Laplace approximations (full, diagonal, KFAC;
see Appendix~\ref{app:curvature-approximations} for definitions).
Among the four methods in Table~\ref{tab:mnist_fmnist_approx_transfer},
$\sigmat$ has the smallest predictive transfer gap $|\Delta\NLL|$ in
every column, and its worst case ($0.6 \times 10^{-2}$, FMNIST
LL-diagonal) still beats every other method on every approximation and
dataset. Its selection gap $|\Delta\lambda|$ is smallest in five of the
six columns and never exceeds $0.55$ bits, against up to $1.85$ for
SP + isotropic; the exception is FMNIST LL-diagonal, where SP +
isotropic selects marginally closer.
The same selection-gap ordering holds under plug-in evidence selection,
although its optimum is not always test-NLL-optimal
(Appendix~\ref{app:evidence-transfer}).
Table~\ref{tab:efficiency} reports the compute: proxy-side selection is
${\sim}5000\times$ faster than the target-side sweep for the full
last-layer posterior and $94\times$ for all-parameter diagonal Laplace,
and after transfer the target builds one posterior at the transferred
precision for a test-NLL degradation of only $0.002$.
When the selection criterion is OOD detection instead of validation
NLL, $\sigmat$ also transfers: at width $4096$ the proxy-selected
precision achieves $0.950$ AUROC on held-out OOD data, matching
target-side tuning, whereas SP loses $0.072$ and $\mup$ with an
isotropic prior loses $0.327$. The match holds at every width above
the proxy (Appendix~\ref{app:ood-transfer}).


\paragraph{Decision transfer (Regime~B).}
\label{sec:mnist-al}
Figure~\ref{fig:al-pair} shows a sequential active-learning experiment
on MNIST ($1{,}000$ labelled points, $50$ rounds). Selecting data with
the proxy produces nearly the same learning curve on the target as
using the target's own posterior, with final accuracy only $0.003$
below the target reference and a paired area-under-the-learning-curve
difference covering zero; under SP the curves diverge, meaning decisions are not
transferable. Per-round Spearman
correlation between proxy and target EPIG scores~\citep{smith2023prediction} is $\bar\rho=0.75$
under $\sigmat$, against $0.07$ for SP + isotropic
(Figure~\ref{fig:al-pair}, right panel), consistent with our
theoretical guarantees (Section~\ref{sec:proof-goals}). On a fixed
candidate pool the agreement is higher still:
$\rho=0.854\pm0.025$ under $\sigmat$, against $0.669$ for SP. The same
agreement ordering holds on Fashion-MNIST; we claim decision transfer
on both datasets and downstream utility on MNIST.
Appendix~\ref{app:al-details} gives full learning curves, AULC
contrasts, and per-dataset acquisition-agreement tables.

\begin{figure}[t]
\centering
\includegraphics[width=\linewidth]{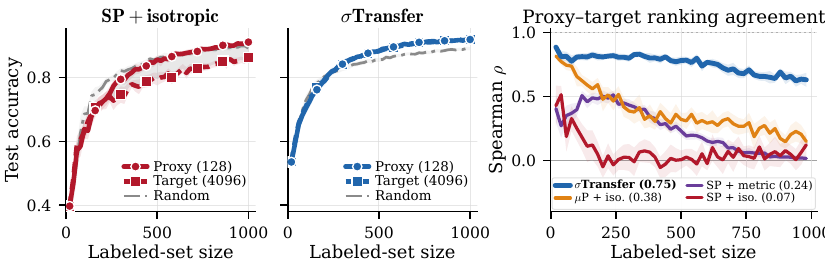}
\caption{Sequential active learning on MNIST with proxy acquisition
(EPIG, proxy width $128$, target width $4096$, 20 seeds, bands: 95\%
CI). Left and centre: target test accuracy under proxy, target-reference,
and random acquisition. Under $\sigmat$ the proxy and target curves
nearly overlap; under SP + isotropic the target's own acquisition falls
below random. Right: per-round Spearman $\rho$ between proxy and target
rankings. See Appendix~\ref{app:al-details} for Fashion-MNIST and
AULC contrasts.}
\label{fig:al-pair}
\end{figure}

\subsection{Language models}

We train a ladder of decoder-only GPT models at widths
$128$, $256$, $512$, $1024$, and $2048$ ($8$ layers, $8$ heads) on
$10^{9}$ FineWeb-Edu tokens~\citep{penedo2024fineweb} under both SP and $\mup$. Training and
posterior details are in
Appendix~\ref{app:ood-gate}.

\paragraph{Prior transfer (Regime~A).}
\label{sec:llm-lambda}
Figure~\ref{fig:llm-lambda} (left) shows the evidence-selected prior
precision across this ladder (ten downstream tasks). Under $\sigmat$ the optimum converges with width; under SP it
drifts about $0.65$ bits per width doubling without saturating, so no
proxy width is safe. The proxy's choice under $\sigmat$ lands
$0.65\pm0.34$ bits from the target's own and degrades target test NLL
by only $0.0014$ on average, compared to $2.65\pm0.63$ bits and a
$0.013$ degradation under SP (nearly an order of magnitude larger).
Three of ten tasks have absolute target-NLL changes below $10^{-4}$;
the largest change is $0.0082$ (Figure~\ref{fig:llm-lambda}, middle).
On the public u-$\mup$ pairs~\citep{blake2025u} ($1$B$\to$$7$B),
seven of ten tasks select the same precision as the target, and eight have
absolute target-NLL changes below $10^{-4}$ (Figure~\ref{fig:llm-lambda},
right; Appendices~\ref{app:llm-lambda} and~\ref{app:umup-details}).

\begin{figure}[ht]
\centering
\includegraphics[width=\linewidth]{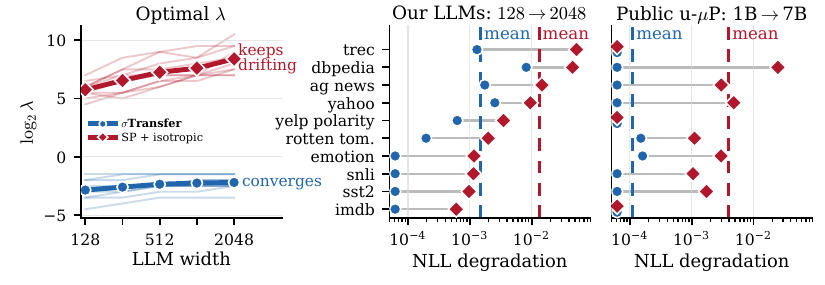}
\caption{A prior precision chosen on a proxy transfers under
$\sigmat$, not under SP. Left: evidence-based selection on
in-distribution training data, per task (thin) and task mean (thick).
Middle and right: target NLL at the proxy's $\lambda$ minus the
target's own (lower is better), same ten tasks.}
\label{fig:llm-lambda}
\end{figure}

\paragraph{Decision transfer (Regime~B).}
\label{sec:llm-ood-gate}
Figure~\ref{fig:llm-decisions} runs sequential EPIG on
AG News~\citep{zhang2015character}: proxy
widths $128$ to $1024$ acquire labels for a width-$2048$ target, all at
a common, untuned $\lambda=1$. $\sigmat$ has the highest mean Spearman correlation
between proxy and target rankings of which data to label, across all
proxy widths and initial label budgets. For $128\!\to\!2048$ transfer
with $5{,}000$ initial labels, target accuracy is lower by $0.0014$ with
proxy-selected labels than with target-selected labels
(Table~\ref{tab:efficiency}). The bottom row trains
the larger model on the datapoints the smaller one selected, and
measures the absolute test-NLL gap to training it on the target's own
selections, an oracle that requires the very target posterior we are
avoiding. $\sigmat$ has the smallest mean absolute test-NLL gap in
nine of twelve settings.
Appendix~\ref{app:ood-gate} reports a second Regime-B test, where a
width-$128$ proxy gate lets the target answer by MAP alone: it abstains
on $86$--$88\%$ of unseen OOD prompts while serving $96\%$ of
in-distribution traffic.

\begin{figure}[t]
\centering
\includegraphics[width=\linewidth]{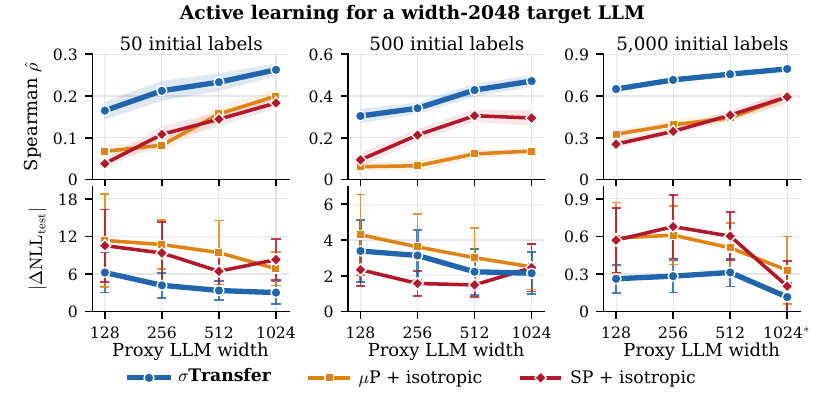}
\caption{Sequential active learning on AG News, starting with $50$,
$500$ or $5{,}000$ labelled examples and acquiring $250$ additional
labels using EPIG. Top: Spearman correlation between the proxy's and target's
rankings of unlabelled data by informativeness (EPIG), averaged over
acquisition steps. Bottom: mean absolute test-NLL gap when the target is trained on
proxy-selected rather than target-selected labels (ticks $\times 10^{-3}$;
lower is better). All methods use $\lambda=1$.
Means and $95\%$ CIs over ten seeds; the starred setting uses five.}
\label{fig:llm-decisions}
\end{figure}

\subsection{Computational savings and limitations}

\begin{table}[ht]
\centering\small
\setlength{\tabcolsep}{4pt}
\resizebox{\linewidth}{!}{%
\begin{tabular}{@{}llllrrrl@{}}
\toprule
\textbf{Task} & \textbf{Dataset} & \textbf{Transfer} & \textbf{Approx.} & \textbf{Proxy (s)} &
\textbf{Target (s)} & \textbf{\color{cKernel}Speedup} & \textbf{Degradation} \\
\midrule
$\lambda$ sweep     & MNIST      & $128\!\to\!4096$  & LL full      & $1.5$  & $7{,}560$  & ${\color{cKernel}{\sim}5000\times}$ & $0.002$ NLL \\
OOD-aware $\lambda$ & MNIST      & $128\!\to\!4096$  & all $+$ diag & $4.1$  & $384$      & ${\color{cKernel}94\times}$         & $0.001$ AUROC \\
evidence search     & LLM (ours) & $128\!\to\!2048$  & LL full      & $3.5$  & $25$       & ${\color{cKernel}{\sim}7\times}$    & $0.0014$ NLL \\
evidence search     & LLM (public) & 1B$\to$7B       & LL full      & $4$    & $9$        & ${\color{cKernel}{\sim}2.3\times}$  & $0$ NLL \\
EPIG acquisition    & AG News    & $128\!\to\!2048$  & LL full      & $64.9$ & $187.6$    & ${\color{cKernel}{\sim}3\times}$     & $0.0014$ ACC \\
evidence search & PenDigits & $32\!\to\!128$ & all $+$ full & $0.68$ & $14.95$ & ${\color{cKernel}{\sim}22\times}$ & $-0.007$ NLL \\
evidence search & letter    & $32\!\to\!128$ & all $+$ full & $0.77$ & $24.74$ & ${\color{cKernel}{\sim}32\times}$ & $0.078$ NLL \\

\bottomrule
\end{tabular}}
\caption{$\sigmat$ does the expensive selection on the proxy rather
than the target. Speedup is target $\div$ proxy selection time;
deployment still fits one target posterior, so the end-to-end saving
is capped by the $61$-point grid. Degradation is the resulting change
in the stated metric, positive meaning worse. EPIG
times average 10 seeds; all times exclude common MAP training and
one-off frozen-trunk feature extraction. For the public 1B$\to$7B pair,
the row reports AG News, whose speedup is the median across nine timed
tasks, calculated from the rounded times in
Table~\ref{tab:llm-lambda-public}. LL\,=\,last-layer.}
\label{tab:efficiency}
\end{table}

Table~\ref{tab:efficiency} shows the saving scaling with the cost of
the target-side procedure: up to ${\sim}5000\times$ on MNIST, where the
target would otherwise sweep a full last-layer Laplace, and
a median ${\sim}2.3\times$ on the public 1B$\to$7B
pair~\citep{blake2025u}, ranging from $1\times$ to ${\sim}330\times$ across
nine timed tasks. It
is far smaller where that procedure is already cheap, and $\sigmat$
does not always win: on FMNIST under last-layer diagonal curvature,
SP + isotropic selects a marginally closer precision.
Appendix~\ref{app:tabular-classification} gives the PenDigits and Letter setup.

\section{Conclusion}
\label{sec:conclusion}

We introduced $\sigmat$, a method that uses a small (cheap) network to make uncertainty estimation and uncertainty-based decision-making cheaper for a much larger (expensive) network. It supports two forms of transfer. In \emph{prior-precision transfer}, the precision search is performed on the small network and the selected precision is transferred zero-shot to the much larger network, without any target-side search. In \emph{decision transfer}, the small network's uncertainty estimates are used directly to select data, flag out-of-distribution inputs, or decide when the larger network should abstain. We justify both forms of transfer theoretically through a series of results showing accurate transfer across network widths. Empirically, $\sigmat$ preserves predictive performance while reducing the cost of uncertainty-sensitive tasks by up to several orders of magnitude. We discuss the limitations of our approach and directions for future work in Appendix~\ref{app:limitations}.

\subsection*{AI use statement}
We derived the main theoretical results. For
Theorem~\ref{thm:prior-kernel-stability}, generative AI tools helped us
navigate the Tensor Programs
literature~\citep{yang2019wide,yang2020tensor,yang2021tensor,yang2023tensor}
and identify its key tools, including the Master
Theorem~\citep[Theorem~2.6.10]{yang2023tensor}, which we then instantiated
for our setting. AI tools also assisted with writing up proofs and some
derivation steps, implementing the method and experiments, analysing
results, literature search, feedback on the experiments and manuscript,
and drafting and editing. We reviewed all AI-assisted work, checked
numerical claims against saved outputs and theoretical statements against
their proofs, and take full responsibility for the content of this paper.

\subsection*{Reproducibility statement}
The appendices provide the parameterization conventions, implementation
details, assumptions, and complete proofs used in this work. Experimental
protocols report model widths, random seeds, posterior approximations,
hyperparameter-selection rules, evaluation metrics, and compute measurements.
Code, configuration files, and result manifests will be included in the
supplementary material.

\bibliographystyle{iclr2027_conference}
\bibliography{references}

\appendix
\clearpage

\begin{table}[ht]
\centering
\caption{Roadmap to the appendix.}
\label{tab:appendix-roadmap}
\begingroup
\small
\renewcommand{\arraystretch}{1.15}
\begin{tabularx}{\textwidth}{@{}p{0.34\textwidth}X@{}}
\toprule
\textbf{Location} & \textbf{Contents} \\
\midrule
Appendix~\ref{app:limitations} & Limitations and directions for future work. \\
Appendix~\ref{app:metrics} & Metrics for prior-precision and decision transfer. \\
Appendix~\ref{app:experimental-details} & Curvature approximations, the geometric mechanism, regression setup, and additional normalization baselines. \\
Appendix~\ref{app:evidence-transfer} & Evidence-selected precision transfer on MNIST and Fashion-MNIST. \\
Appendix~\ref{app:ood-transfer} & Additional results under distribution shift. \\
Appendix~\ref{app:al-details} & Sequential active-learning setup and results. \\
Appendix~\ref{app:ood-gate} & OOD detection and abstention setup, calibration, and per-dataset results. \\
Appendix~\ref{app:llm-lambda} & Per-dataset precision transfer and the neural $g$-prior comparison. \\
Appendix~\ref{app:umup-details} & External validation on public u-$\mup$/SP model pairs. \\
Appendix~\ref{app:sn} & Coordinate definition, MLP derivation, block scales, and implementation details. \\
Appendix~\ref{app:proofs} & Formal proofs, structured curvature, and covered decision rules. \\
\bottomrule
\end{tabularx}
\endgroup
\end{table}
\clearpage

\section{Limitations and future work}
\label{app:limitations}

Theorem~\ref{thm:prior-kernel-stability} establishes kernel convergence
for fixed-depth ReLU MLPs under finite-step training and regularity
assumptions. Establishing the corresponding convergence conditions for
our Transformer settings and determining the proxy width needed for a
given transfer accuracy remain open.

Future work could extend $\sigmat$ to applications in which reliable
uncertainty is especially important, including medical prediction and
generative models such as diffusion models. Another important direction
is to move beyond width-only transfer and derive prior-scaling rules for
models whose width and depth increase together. This would extend the
broader $\mup$ scaling programme to Laplace priors and could enable prior
precision and uncertainty-based decisions to transfer across entire
model families, rather than only across networks of different widths.

\section{Evaluation metrics}
\label{app:metrics}

All experiments evaluate transfer fidelity: how well does a quantity
chosen or computed at proxy width reproduce the corresponding quantity
at target width? The metrics below are grouped by regime.
Where a $95\%$ confidence interval is reported, it is an interval for
the mean using the Student-$t$ critical value with $n-1$ degrees of
freedom, where $n$ is the number of seeds. Tables and figures that
report $\pm$ give sample standard deviations unless stated otherwise.
For language-model prior transfer, $\pm$ denotes sample standard deviation
across tasks. Our models use one pretraining seed per width and
parametrization; the public models use one checkpoint per size and
parametrization.

\paragraph{Regime~A (prior transfer).}
Each method selects $\lambda^\star_n$ independently at every width $n$
by minimizing validation NLL or maximizing the fixed-checkpoint plug-in Laplace evidence score
over a fixed grid $\log_2\lambda\in[-15,15]$.
Target-side selection is performed only to define the reference optimum
for these metrics. In zero-shot deployment, the proxy-selected value is
reused and the target constructs its posterior once at that value.

\emph{Selection gap.}
The absolute distance between the proxy and target optima, measured in
bits of log-precision:
\begin{equation}
  \lvert\Delta\lambda\rvert
  \;=\;
  \lvert\log_2\lambda^\star_{\mathrm{proxy}}
       -\log_2\lambda^\star_{\mathrm{target}}\rvert.
  \label{eq:selection-gap}
\end{equation}
A gap of zero means both widths select the same grid point. We compute
the absolute value per seed or task and report the mean $\pm$ sample
standard deviation.

\emph{Predictive transfer gap.}
The absolute difference in target test NLL between using the
proxy-selected precision and the target's own optimum:
\begin{equation}
  \lvert\Delta\NLL\rvert
  \;=\;
  \bigl\lvert
    \NLL_{\mathrm{target}}(\lambda^\star_{\mathrm{proxy}})
    -\NLL_{\mathrm{target}}(\lambda^\star_{\mathrm{target}})
  \bigr\rvert.
  \label{eq:predictive-transfer-gap}
\end{equation}
This measures the predictive discrepancy induced by trusting the proxy's choice.
As with the selection gap, absolute values are taken per seed before
averaging. In the regression tables the values are scaled by
$10^3$; in the classification tables by $10^2$.

\emph{Cross-width predictive KL.}
A post-hoc diagnostic used in the regression experiments:
\begin{equation}
  \mathrm{KL}_{t\|p}
  \;=\;
  \mathbb{E}_{x}\,
  \mathrm{KL}\!\bigl[
    p_{\mathrm{target}}^{\lambda^\star_{\mathrm{target}}}(\cdot\mid x)
    \;\big\|\;
    p_{\mathrm{proxy}}^{\lambda^\star_{\mathrm{proxy}}}(\cdot\mid x)
  \bigr],
  \label{eq:pred-kl}
\end{equation}
in nats (including observation noise). This measures overall
predictive-distribution agreement, not action agreement or a
transferred-minus-local performance difference. Because methods using isotropic and width-normalized priors share
trained checkpoints within each parametrization family, their KL values
are expected to be nearly identical.

\emph{OOD AUROC.}
When the selection criterion is OOD detection rather than validation
NLL, we report the area under the receiver operating characteristic
curve for distinguishing in-distribution from out-of-distribution test
inputs at the selected precision. The OOD-AUROC
\emph{regret} is the gap between the transferred and target-selected
AUROC values.

\paragraph{Regime~B (decision transfer).}

\emph{Spearman rank correlation~($\rho$).}
The rank correlation between the proxy's and target's acquisition scores
(e.g.\ EPIG values) over a common candidate pool. Higher $\rho$ means
the proxy ranks candidates in the same order as the target. Reported per
round in sequential active learning and on a single fixed pool for
one-shot acquisition.

\emph{Top-$K$ overlap.}
The fraction of the proxy's top-$K$ selected candidates that also
appear in the target's top-$K$. It complements rank correlation by
focusing on the items that would actually be chosen.

\emph{Area under the learning curve (AULC).}
In sequential active learning, we integrate the test accuracy over
acquisition rounds. The proxy-AULC is compared to the target-reference
AULC and to random acquisition to assess whether proxy-guided selection
translates into downstream utility.

\emph{OOD abstention gate.}
In the language-model gating experiment (Section~\ref{sec:llm-ood-gate}),
the proxy's BALD score~\citep{houlsby2011bayesian} is thresholded to decide whether the target
should abstain. We report the fraction of OOD prompts correctly
abstained on (OOD-accuracy) and the fraction of in-distribution traffic
served, calibrated on one OOD source and evaluated on a different,
unseen one.

\section{Experimental details}
\label{app:experimental-details}

Our main benchmark is a two-hidden-layer ReLU MLP on flattened MNIST
($d=784$), with hidden width
$n\in\{128,256,512,1024,2048,4096\}$. We compare \textbf{SP}, standard
parametrization with an isotropic stored-coordinate prior;
\textbf{$\mup$ + isotropic}, $\mup$ training with the same
stored-coordinate prior; and \textbf{$\sigmat$}, $\mup$ training with the
width-normalized prior (ours). The two $\mup$ methods share the same
trained checkpoints, isolating the effect of the prior geometry. All models reach $100\%$ training accuracy; at
width $4096$, mean validation accuracy is $0.950$ for SP and $0.934$ for the
$\mup$ models, so any SP uncertainty failure below is not an optimization
failure.

For MNIST and Fashion-MNIST, we sample $5{,}000$ training and $1{,}000$
validation examples from the official training split, and $2{,}000$
test examples from the official test split. Flattened pixels lie in
$[0,1]$, and the classification head has no bias. We use Adam for SP and
MuAdam for $\mup$, with the same base learning rate $10^{-3}$ across
widths and parametrizations, for $2{,}000$ full-batch updates without
early stopping or weight decay.

Unless stated otherwise, we use a GGN Laplace approximation and search a fixed
$61$-point grid, $\log_2\lambda\in[-15,15]$. The width-$128$ model is the
proxy; target widths use the proxy-selected numerical $\lambda$ unchanged.
Each target constructs one posterior at this transferred value; target-side
sweeps are run only to obtain reference optima and predictive-transfer-gap metrics.
We report $10$ seeds; numerical summaries use sample standard deviations. We select by validation
NLL in the primary experiment and retain the fixed-checkpoint plug-in Laplace evidence score as a diagnostic
of criterion dependence. The implementation is verified against hand-built references: full-curvature
posterior precisions agree to relative Frobenius error $\sim 10^{-8}$.

\subsection{Curvature approximations}
\label{app:curvature-approximations}

The GGN Hessian $H$ can be stored and inverted in several ways, offering
different accuracy--compute trade-offs. The \emph{full} approximation stores the entire
$P\times P$ matrix ($O(P^2)$ memory, $O(P^3)$ inversion), giving the
exact linearized posterior. The \emph{diagonal} approximation retains
only the diagonal of $H$ ($O(P)$ memory), discarding all off-diagonal
correlations between parameters. The \emph{Kronecker-factored} (KFAC)
approximation~\citep{martens2015optimizing,ritter2018scalable} stores a
Kronecker product of two factors for each layer block, capturing within-layer
structure at reduced computational cost. In all three
cases the prefix \emph{last-layer} (LL) means the approximation is
applied only to the parameters of the final linear layer, with earlier
layers frozen at their MAP values; this reduces $P$ from the full
parameter count to the output-layer dimension alone.

\subsection{The geometric mechanism}

We isolate the mechanism on two moons, selecting $\lambda$ on validation
data and then measuring kernel and uncertainty agreement on a disjoint
held-out pool. Relative kernel drift from width $128$ to $4096$ falls from
$20.4\pm3.9$ under standard practice to $0.64\pm0.11$ under $\sigmat$; the
selected precision moves $2.9\pm0.1$ bits versus $-0.5\pm0.8$ bits, and
held-out uncertainty-rank agreement rises from $0.22\pm0.05$ to
$0.69\pm0.04$. The geometric correction therefore changes not only the
scale of the prior kernel but the posterior quantities used downstream.

On MNIST, linear CKA~\citep{kornblith2019similarity} is about $0.999$ for every method, indicating that in this
last-layer diagnostic the $\mup$ width-dependent head scale stabilizes
kernel magnitude rather than changing its shape. Kernel stability alone does
not imply posterior stability: the exact last-layer posteriors of
Section~\ref{sec:results} test whether it survives conditioning.

\subsection{Regression under scaffold shift}
\label{app:regression-details}

The regression benchmark is a three-hidden-layer tanh MLP on binary
Morgan fingerprints~\citep{rogers2010extended} ($d=2048$, radius $2$,
computed with RDKit), with a bias-free readout and base width $128$.
Each dataset uses the OGB scaffold split~\citep{hu2020open}
of its MoleculeNet original (\texttt{ogbg-molesol},
\texttt{ogbg-molfreesolv}, \texttt{ogbg-mollipo}), so training,
validation, and test molecules occupy disjoint scaffolds. Inputs are
left unstandardized, so a bit absent from the training scaffolds stays
$0/1$ when it first appears at validation or test time; targets use
training-split standardization.

We train for $20{,}000$ iterations at batch size $100$ and learning
rate $10^{-3}$, validating every $250$ iterations. The width-$512$
model is the proxy and width $4096$ the target. We report five seeds,
shared across all four arms; as in the classification benchmark the two
$\mup$ arms share trained checkpoints, so their difference isolates the
prior geometry.

\subsection{Additional normalization baselines}
\label{app:normalization-baselines}

We construct empirical controls that match the average prior-kernel magnitude
across widths. Their scales use trained features; $\sigmat$ derives its
scales from the architecture and parametrization.

For proxy width $n_0$ and training inputs $D$, set
\[
r_n=\frac{1}{|D|}\sum_{x\in D}\|J_{\mathrm{head},n}(x)\|_F^2,
\qquad
S_n^{\mathrm{meas}}=s_0\frac{r_{n_0}}{r_n}I,
\]
where $J_{\mathrm{head},n}$ is the Jacobian with respect to stored readout
weights, including any forward multiplier. The prior covariance is
$\lambda^{-1}S_n^{\mathrm{meas}}$.
We take $s_0=1$ for SP and the proxy's architectural covariance scale for
$\mup$. These controls use training features from both the proxy and target,
without a target precision search. They match the average prior-kernel trace;
this alone does not ensure kernel or posterior convergence. With one bias-free
readout block, global and blockwise normalization coincide, giving one
``+ empirical'' control per parametrization. ``SP + metric'' uses the
width-normalized prior under SP.

Tables~\ref{tab:regression-normalization}
and~\ref{tab:classification-normalization} extend
Tables~\ref{tab:molecular} and~\ref{tab:mnist_fmnist_approx_transfer},
respectively, with the empirical controls.
For matching seeds within each parametrization, priors share checkpoints,
data splits, and likelihood settings. All use the same $61$-point grid
$\log_2\lambda\in[-15,15]$ with half-bit spacing; selection criteria and seed
counts are given in the captions.

\begin{table}[ht]
\centering
\caption{Additional normalization baselines for regression: width
$512\to4096$, full-covariance last-layer Laplace, and evidence-selected
prior precision. Results are mean $\pm$ sample standard deviation over
five seeds for the methods reproduced from Table~\ref{tab:molecular}
and ten seeds for the empirical controls. $|\Delta\NLL|$ is multiplied
by $10^3$; other metrics follow Table~\ref{tab:molecular}. Bold marks the smallest reported mean
precision and NLL transfer gaps; lower is better.}
\label{tab:regression-normalization}
\begingroup
\scriptsize
\setlength{\tabcolsep}{2.0pt}
\renewcommand{\arraystretch}{1.05}
\begin{adjustbox}{max width=\linewidth}
\begin{tabular}{@{}l|ccc|ccc|ccc@{}}
\toprule
\multirow{2}{*}{\textbf{Method}}
 & \multicolumn{3}{c|}{\textbf{ESOL}}
 & \multicolumn{3}{c|}{\textbf{FreeSolv}}
 & \multicolumn{3}{c}{\textbf{Lipophilicity}} \\
\cmidrule(lr){2-4}\cmidrule(lr){5-7}\cmidrule(l){8-10}
 & $|\Delta\lambda|$ & $|\Delta\NLL|$ & $\mathrm{KL}_{4096\|512}$
 & $|\Delta\lambda|$ & $|\Delta\NLL|$ & $\mathrm{KL}_{4096\|512}$
 & $|\Delta\lambda|$ & $|\Delta\NLL|$ & $\mathrm{KL}_{4096\|512}$ \\
\midrule
SP + isotropic & $1.5_{\pm0.4}$ & $41.8_{\pm30.7}$ & $0.208_{\pm0.073}$ & $5.0_{\pm2.2}$ & $452.8_{\pm429.5}$ & $0.079_{\pm0.055}$ & $4.0_{\pm2.7}$ & $140.2_{\pm119.7}$ & $0.173_{\pm0.027}$ \\
SP + metric & $1.5_{\pm0.5}$ & $8.40_{\pm6.59}$ & $0.208_{\pm0.071}$ & $1.2_{\pm1.4}$ & $9.65_{\pm10.95}$ & $0.079_{\pm0.055}$ & $2.1_{\pm1.2}$ & $3.39_{\pm1.93}$ & $0.173_{\pm0.028}$ \\
$\mup$ + isotropic & $3.0_{\pm1.4}$ & $0.947_{\pm0.566}$ & $0.167_{\pm0.087}$ & $3.1_{\pm0.5}$ & $1.564_{\pm0.825}$ & $0.008_{\pm0.004}$ & $3.2_{\pm1.0}$ & $0.094_{\pm0.024}$ & $0.057_{\pm0.058}$ \\
\textbf{$\sigmat$ (ours)} & $\mathbf{0.9}_{\pm0.8}$ & $\mathbf{0.548}_{\pm0.412}$ & $0.167_{\pm0.087}$ & $\mathbf{0.5}_{\pm0.4}$ & $\mathbf{0.461}_{\pm0.430}$ & $0.008_{\pm0.004}$ & $\mathbf{0.4}_{\pm0.2}$ & $\mathbf{0.033}_{\pm0.019}$ & $0.057_{\pm0.058}$ \\
\addlinespace[2pt]
SP + empirical & $1.3_{\pm0.7}$ & $17.6_{\pm1.65}$ & $0.250_{\pm0.27}$ & $3.0_{\pm2.2}$ & $35.7_{\pm8.2}$ & $0.10_{\pm0.22}$ & $2.8_{\pm2.8}$ & $2.5_{\pm2.0}$ & $0.200_{\pm0.26}$ \\
$\mup$ + empirical & $1.1_{\pm0.9}$ & $0.650_{\pm0.412}$ & $0.200_{\pm0.26}$ & $0.7_{\pm0.537}$ & $0.650_{\pm0.475}$ & $0.05_{\pm0.2}$ & $0.5_{\pm0.782}$ & $0.050_{\pm0.16}$ & $0.10_{\pm0.22}$ \\
\bottomrule
\end{tabular}
\end{adjustbox}
\endgroup
\end{table}

\begin{table}[ht]
\centering
\caption{Additional normalization baselines for image classification:
width $128\to4096$, validation-NLL-selected prior precision, and full,
diagonal, and KFAC last-layer Laplace. Results are mean $\pm$ sample
standard deviation over ten paired seeds.
$|\Delta\NLL|$ is multiplied by $10^2$; other metrics follow
Table~\ref{tab:mnist_fmnist_approx_transfer}.
Bold marks the smallest reported mean in each column; lower is better.}
\label{tab:classification-normalization}
\begingroup
\footnotesize
\setlength{\tabcolsep}{2.5pt}
\begin{adjustbox}{max width=\linewidth}
\begin{tabular}{l|cc|cc|cc|cc|cc|cc}
\toprule
\multirow{3}{*}{\textbf{Method}}
 & \multicolumn{6}{c|}{\textbf{MNIST}}
 & \multicolumn{6}{c}{\textbf{FMNIST}} \\
\cmidrule(lr){2-7}\cmidrule(lr){8-13}
 & \multicolumn{2}{c|}{\textbf{LL full}}
 & \multicolumn{2}{c|}{\textbf{LL diag}}
 & \multicolumn{2}{c|}{\textbf{LL KFAC}}
 & \multicolumn{2}{c|}{\textbf{LL full}}
 & \multicolumn{2}{c|}{\textbf{LL diag}}
 & \multicolumn{2}{c}{\textbf{LL KFAC}} \\
 & $|\Delta\lambda|$ & $|\Delta\NLL|$
 & $|\Delta\lambda|$ & $|\Delta\NLL|$
 & $|\Delta\lambda|$ & $|\Delta\NLL|$
 & $|\Delta\lambda|$ & $|\Delta\NLL|$
 & $|\Delta\lambda|$ & $|\Delta\NLL|$
 & $|\Delta\lambda|$ & $|\Delta\NLL|$ \\
\midrule
SP + isotropic & $1.75_{\pm0.26}$ & $9.8_{\pm4.6}$ & $0.45_{\pm0.28}$ & $0.8_{\pm0.9}$ & $1.50_{\pm0.24}$ & $6.9_{\pm3.2}$ & $1.55_{\pm0.28}$ & $9.0_{\pm4.1}$ & $\mathbf{0.25}_{\pm0.35}$ & $0.9_{\pm1.2}$ & $1.85_{\pm0.24}$ & $13.2_{\pm3.2}$ \\
SP + metric & $3.25_{\pm0.26}$ & $9.6_{\pm2.1}$ & $4.55_{\pm0.28}$ & $14.8_{\pm1.9}$ & $3.50_{\pm0.24}$ & $10.7_{\pm1.9}$ & $3.45_{\pm0.28}$ & $39.0_{\pm6.3}$ & $5.25_{\pm0.35}$ & $67.2_{\pm7.3}$ & $3.15_{\pm0.24}$ & $33.3_{\pm5.5}$ \\
$\mup$ + isotropic & $4.75_{\pm0.26}$ & $15.7_{\pm3.0}$ & $5.30_{\pm0.26}$ & $20.6_{\pm3.2}$ & $4.75_{\pm0.26}$ & $16.5_{\pm2.7}$ & $4.60_{\pm0.21}$ & $34.9_{\pm2.8}$ & $4.55_{\pm0.28}$ & $47.2_{\pm5.0}$ & $4.45_{\pm0.28}$ & $30.5_{\pm2.6}$ \\
\textbf{$\sigmat$ (ours)} & $\mathbf{0.25}_{\pm0.26}$ & $\mathbf{0.2}_{\pm0.3}$ & $\mathbf{0.30}_{\pm0.26}$ & $\mathbf{0.1}_{\pm0.2}$ & $\mathbf{0.25}_{\pm0.26}$ & $\mathbf{0.2}_{\pm0.3}$ & $\mathbf{0.40}_{\pm0.21}$ & $\mathbf{0.4}_{\pm0.4}$ & $0.45_{\pm0.28}$ & $\mathbf{0.6}_{\pm0.6}$ & $\mathbf{0.55}_{\pm0.28}$ & $\mathbf{0.3}_{\pm0.3}$ \\
\addlinespace[2pt]
SP + empirical & $1.35_{\pm0.34}$ & $7.50_{\pm3.3}$ & $\mathbf{0.3}_{\pm0.35}$ & $0.5_{\pm0.5}$ & $0.50_{\pm0.41}$ & $1.7_{\pm0.4}$ & $1.60_{\pm0.32}$ & $10.1_{\pm4.1}$ & $0.45_{\pm0.28}$ & $0.75_{\pm0.36}$ & $2.05_{\pm0.37}$ & $14.8_{\pm4.0}$ \\
$\mup$ + empirical & $0.35_{\pm0.34}$ & $0.65_{\pm0.42}$ & $0.35_{\pm0.34}$ & $\mathbf{0.1}_{\pm0.3}$ & $0.4_{\pm0.211}$ & $0.35_{\pm0.3}$ & $0.6_{\pm0.3}$ & $\mathbf{0.4}_{\pm0.5}$ & $0.60_{\pm0.32}$ & $1.10_{\pm0.8}$ & $0.65_{\pm0.34}$ & $0.6_{\pm0.4}$ \\
\bottomrule
\end{tabular}
\end{adjustbox}
\endgroup
\end{table}

$\sigmat$ has the smallest reported mean NLL transfer gap across all
regression and classification settings, tying $\mup$ + empirical on
MNIST LL-diagonal and FMNIST LL-full.
These comparisons concern mean transfer gaps, not statistical significance
or absolute predictive quality.

\subsection{Additional classification datasets}
\label{app:tabular-classification}

We evaluate prior-precision transfer on
PenDigits~\citep{alpaydin1996pen} and
Letter Recognition~\citep{slate1991letter}, using two-hidden-layer
ReLU MLPs with proxy width $32$ and target width $128$.
For each dataset, we use $5{,}000$ training and $1{,}000$ validation
examples and standardize inputs using training-set statistics.
PenDigits retains its provided test split; Letter uses its final
$4{,}000$ examples for testing. The $\mup$ models are trained with
MuSGD for $500$ epochs, batch size $128$, momentum $0.9$, and an
initial learning rate of $0.05$ with cosine decay.

We use full-GGN Laplace over all parameters, with the common-logit
head direction held fixed. At each trained checkpoint, we select
the prior precision by maximizing the fixed-checkpoint Laplace
log-evidence. The proxy-selected precision is then applied unchanged
to the target. Table~\ref{tab:efficiency} reports the resulting
target test-NLL change relative to target-side evidence selection.
Timings include the complete Laplace fit and precision search,
excluding network training.

These all-parameter evidence searches extend beyond the convergence
guarantee of Corollary~\ref{cor:evidence-transfer}. Empirically,
proxy-side selection gives approximately $22\times$ and $32\times$
speedups on PenDigits and Letter, respectively, with target test-NLL
changes of $-0.007$ and $0.078$ relative to target-side evidence selection.

\section{Evidence-selected transfer on MNIST/FMNIST}
\label{app:evidence-transfer}

Table~\ref{tab:mnist_fmnist_evidence} repeats the main-text comparison with
$\lambda$ chosen by the fixed-checkpoint plug-in Laplace evidence score instead of validation NLL.
The selection gaps tell the same story, but the NLL column must be read with
care: at width $4096$ the target's own plug-in score optimum sits away from the
test optimum, so a transferred precision can attain a \emph{lower} test NLL
than the target's own selection. This is criterion miscalibration, not a
transfer gain, which is why validation NLL is the primary criterion here.

\begin{table*}[th]
\centering
\caption{Evidence-selected prior transfer (proxy $128\to$ target $4096$,
10 seeds, exact implementations; per-seed absolute values, $|\Delta\NLL|$
scaled by $10^2$; notation as in Table~\ref{tab:mnist_fmnist_approx_transfer}).}
\label{tab:mnist_fmnist_evidence}
\footnotesize \setlength{\tabcolsep}{2.5pt}
\begin{adjustbox}{width=0.95\textwidth}
\begin{tabular}{l|cc|cc|cc|cc|cc|cc}
\toprule
\multirow{3}{*}{\textbf{Method}}
  & \multicolumn{6}{c|}{\textbf{MNIST}} & \multicolumn{6}{c}{\textbf{FMNIST}} \\
\cmidrule(lr){2-7} \cmidrule(lr){8-13}
  & \multicolumn{2}{c|}{\textbf{LL full}} & \multicolumn{2}{c|}{\textbf{LL diag}}
  & \multicolumn{2}{c|}{\textbf{LL KFAC}}
  & \multicolumn{2}{c|}{\textbf{LL full}} & \multicolumn{2}{c|}{\textbf{LL diag}}
  & \multicolumn{2}{c}{\textbf{LL KFAC}} \\
  & $|\Delta\lambda|$ & $|\Delta\NLL|$ & $|\Delta\lambda|$ & $|\Delta\NLL|$
  & $|\Delta\lambda|$ & $|\Delta\NLL|$
  & $|\Delta\lambda|$ & $|\Delta\NLL|$ & $|\Delta\lambda|$ & $|\Delta\NLL|$
  & $|\Delta\lambda|$ & $|\Delta\NLL|$ \\
\midrule
SP + isotropic & $1.15_{\pm.24}$ & $21.8_{\pm5.2}$ & $2.45_{\pm.16}$ & $69.2_{\pm5.4}$ & --- & ---
  & $1.75_{\pm.26}$ & $35.0_{\pm5.6}$ & $3.10_{\pm.32}$ & $82.6_{\pm7.3}$ & --- & --- \\
SP + metric & $6.15_{\pm.24}$ & $133.5_{\pm2.1}$ & $7.45_{\pm.16}$ & $144.1_{\pm3.2}$ & --- & ---
  & $6.75_{\pm.26}$ & $105.2_{\pm4.1}$ & $8.10_{\pm.32}$ & $75.7_{\pm6.2}$ & --- & --- \\
$\mup$ + isotropic & $5.75_{\pm.35}$ & $58.3_{\pm5.0}$ & $5.60_{\pm.32}$ & $68.2_{\pm7.1}$ & --- & ---
  & $5.30_{\pm.26}$ & $34.2_{\pm3.3}$ & $4.70_{\pm.26}$ & $7.9_{\pm4.6}$ & --- & --- \\
\textbf{$\sigmat$ (ours)} & {\boldmath$0.75_{\pm.35}$} & {\boldmath$17.4_{\pm8.1}$} & {\boldmath$0.60_{\pm.32}$} & {\boldmath$17.8_{\pm9.3}$} & --- & ---
  & {\boldmath$0.30_{\pm.26}$} & {\boldmath$6.2_{\pm5.4}$} & {\boldmath$0.30_{\pm.26}$} & {\boldmath$5.8_{\pm5.0}$} & --- & --- \\
\bottomrule
\end{tabular}
\end{adjustbox}
\end{table*}

\section{Additional OOD-transfer results}
\label{app:ood-transfer}

\begin{figure}[t]
\centering
\includegraphics[width=\linewidth]{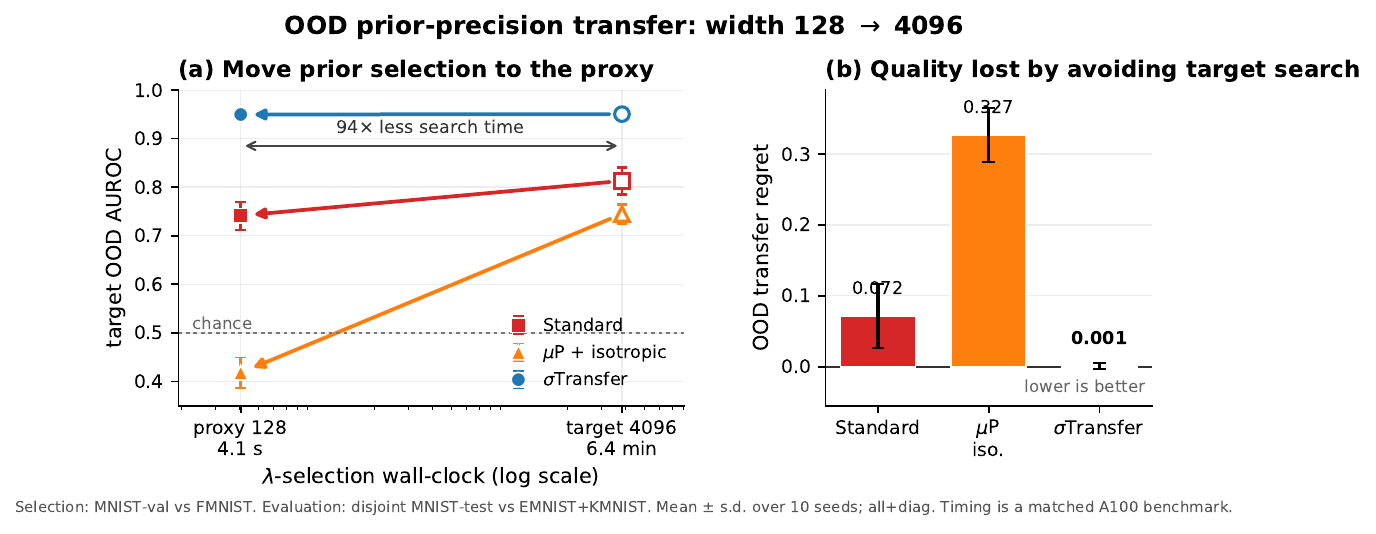}
\caption{\textbf{Time-to-quality tradeoff for OOD precision transfer}
($128\!\to\!4096$, all-parameter diagonal GGN). Filled markers use the
proxy-selected precision; open markers use target-side selection. Arrows point
toward the cheaper proxy search. The reported A100 times compare proxy-side
and target-side selection procedures ($4.1$ seconds versus $6.4$ minutes,
$94\times$). Zero-shot transfer removes the target sweep but retains one
target posterior at the transferred precision. The right panel reports paired
transfer regret over 10 seeds. Only $\sigmat$ removes the target search
without measurable OOD-quality loss.}
\label{fig:ood-time-quality}
\end{figure}

Figure~\ref{fig:ood-time-quality} shows the time-to-quality tradeoff,
and Table~\ref{tab:ood-transfer} gives the width-$4096$ OOD transfer
comparison summarized in the main text.
Table~\ref{tab:ood-transfer-widths} expands this
across all measured target widths. Selection and
evaluation use the same disjoint datasets.
The signed regret can be slightly negative because each target selects on
FMNIST but is evaluated on unseen EMNIST and on
KMNIST~\citep{clanuwat2018deep}; in those cases the
proxy-selected precision generalizes marginally better to the evaluation
distributions.

\begin{table}[ht]
\centering\small
\setlength{\tabcolsep}{8pt}
\begin{tabular}{lccc}
\toprule
\textbf{Method} & \textbf{AUROC using $\lambda^\star_{128}$} &
\textbf{AUROC using $\lambda^\star_{4096}$} &
\textbf{Transfer regret $\downarrow$} \\
\midrule
SP            & $0.741 \pm 0.029$ & $0.813 \pm 0.027$ & $0.072 \pm 0.046$ \\
$\mup$ + isotropic & $0.418 \pm 0.032$ & $0.745 \pm 0.020$ & $0.327 \pm 0.038$ \\
$\sigmat$ (ours) & $\mathbf{0.950 \pm 0.003}$ & $\mathbf{0.950 \pm 0.004}$ & $\mathbf{0.001 \pm 0.004}$ \\
\bottomrule
\end{tabular}
\caption{\textbf{OOD-aware precision transfer at width $4096$} (10 seeds;
mean $\pm$ sample s.d.). The precision is selected on
MNIST-validation versus FMNIST and evaluated on disjoint MNIST-test versus
EMNIST$+$KMNIST. Transfer regret is
$\mathrm{AUROC}(\lambda^\star_{4096})-\mathrm{AUROC}(\lambda^\star_{128})$;
lower is better.}
\label{tab:ood-transfer}
\end{table}

\begin{table}[ht]
\centering\small
\setlength{\tabcolsep}{4pt}
\begin{tabular}{llccc}
\toprule
\textbf{Method} & \textbf{Target width $n$} &
\textbf{AUROC using $\lambda^\star_{128}$} &
\textbf{AUROC using $\lambda^\star_n$} & \textbf{Regret $\downarrow$} \\
\midrule
\multirow{4}{*}{SP}
& 128  & $0.957\pm0.005$ & $0.957\pm0.005$ & $0.0000\pm0.0000$ \\
& 512  & $0.909\pm0.008$ & $0.909\pm0.008$ & $0.0000\pm0.0000$ \\
& 2048 & $0.807\pm0.012$ & $0.802\pm0.013$ & $-0.0044\pm0.0193$ \\
& 4096 & $0.741\pm0.029$ & $0.813\pm0.027$ & $0.0716\pm0.0455$ \\
\midrule
\multirow{4}{*}{$\mup$ + isotropic}
& 128  & $0.957\pm0.004$ & $0.957\pm0.004$ & $0.0000\pm0.0000$ \\
& 512  & $0.868\pm0.024$ & $0.868\pm0.024$ & $0.0000\pm0.0000$ \\
& 2048 & $0.445\pm0.025$ & $0.743\pm0.022$ & $0.2977\pm0.0280$ \\
& 4096 & $0.418\pm0.032$ & $0.745\pm0.020$ & $0.3269\pm0.0383$ \\
\midrule
\multirow{4}{*}{$\sigmat$}
& 128  & $0.962\pm0.003$ & $0.962\pm0.003$ & $0.0000\pm0.0000$ \\
& 512  & $0.958\pm0.004$ & $0.958\pm0.004$ & $0.0003\pm0.0005$ \\
& 2048 & $0.952\pm0.004$ & $0.951\pm0.004$ & $-0.0006\pm0.0012$ \\
& 4096 & $\mathbf{0.950\pm0.003}$ & $\mathbf{0.950\pm0.004}$ & $\mathbf{0.0008\pm0.0039}$ \\
\bottomrule
\end{tabular}
\caption{\textbf{OOD-aware precision transfer across width} (10 seeds;
mean $\pm$ sample s.d.). Both AUROC columns evaluate target width $n$;
$\lambda^\star_{128}$ is selected on the proxy and $\lambda^\star_n$ on that
target. Regret is
$\mathrm{AUROC}(\lambda^\star_n)-\mathrm{AUROC}(\lambda^\star_{128})$ on the
held-out evaluation distributions. The main text reports $n=4096$.}
\label{tab:ood-transfer-widths}
\end{table}

\section{Sequential active learning details}
\label{app:al-details}

For sequential acquisition on MNIST and Fashion-MNIST, we use
three-hidden-layer ReLU MLPs of widths $128$ and $4096$. Each run starts
with $20$ labelled examples and acquires $20$ per round until $1{,}000$
are labelled, giving $50$ evaluations including the initial fit. The
unlabelled pool initially contains $2{,}000$ examples. Each round scores
$500$ randomly sampled remaining candidates using EPIG, $200$ reference
inputs and $256$ posterior samples. We use $500$ validation and $1{,}000$
test examples. Initial labelled, pool and validation sets are disjoint,
class-balanced subsets of the official training split; reference and
test inputs are disjoint, class-balanced subsets of the official test
split. EPIG uses only the reference inputs.

Acquisition uses last-layer diagonal Laplace, including the head bias,
at fixed $\lambda=1$. We warm-start the weights each round and use
full-batch Adam (SP) or MuAdam ($\mup$), both at learning rate $10^{-3}$,
for at most $500$ updates. Validation NLL is checked every $10$ updates,
with patience $150$ updates, and the best checkpoint is restored. The
strategies share initial splits across $20$ seeds.

\begin{figure}[t]
\centering
\includegraphics[width=\linewidth]{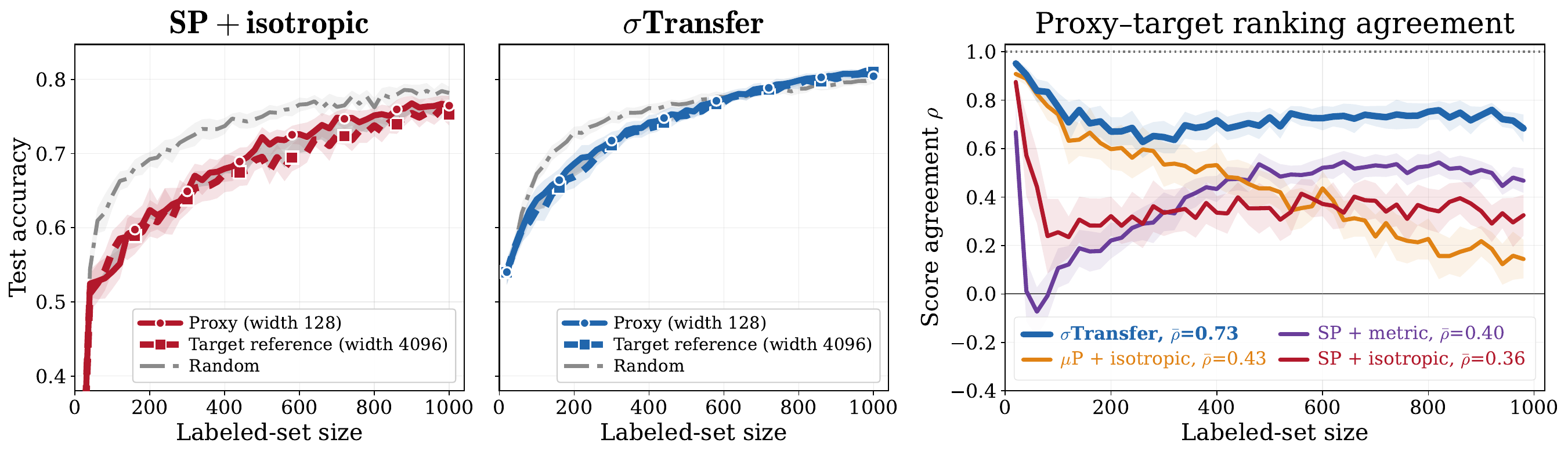}
\caption{\textbf{Sequential active learning on Fashion-MNIST}
(20 seeds; panels and encoding as in Figure~\ref{fig:al-pair}). The
agreement ordering replicates with a compressed baseline spread
($\bar\rho=0.73$ for $\sigmat$ against $0.36$--$0.43$). On this task the
acquisition curves do not separate from random, so we claim decision transfer
on both datasets and downstream utility on MNIST only.}
\label{fig:al-pair-fmnist}
\end{figure}

\begin{figure}[t]
\centering
\includegraphics[width=\linewidth]{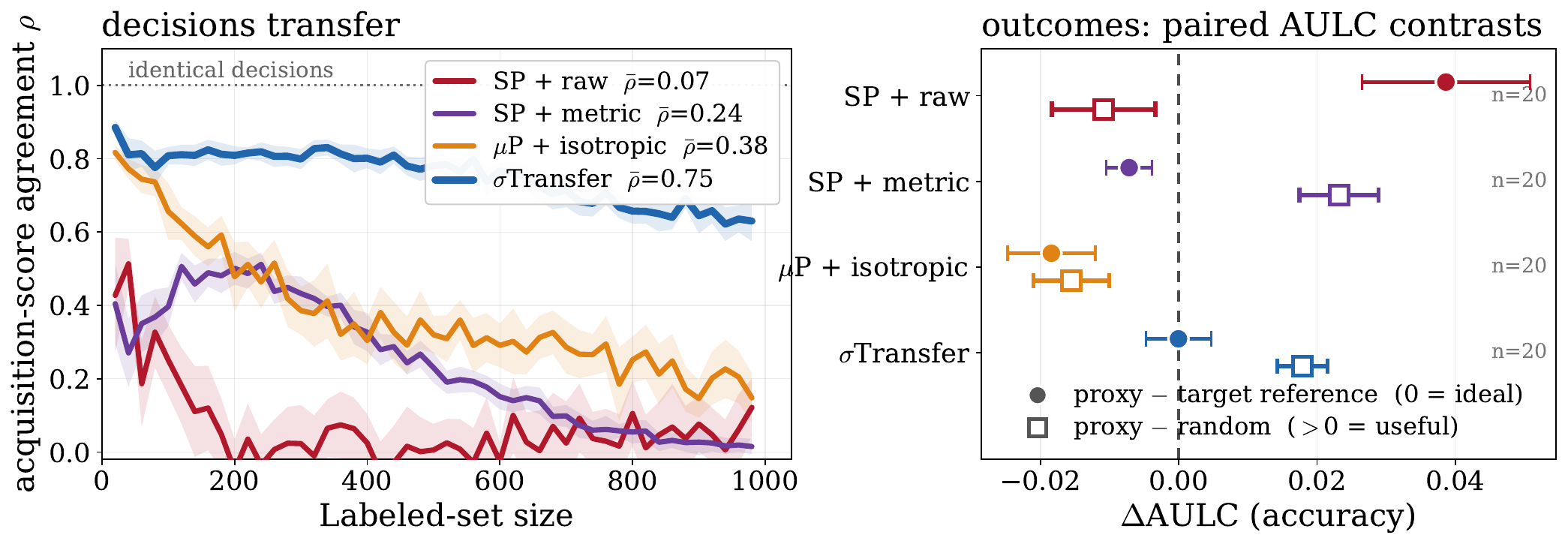}
\caption{\textbf{Sequential active learning: decisions and outcomes}
(10-class MNIST, last-layer diagonal posterior at fixed $\log_2\lambda=0$,
width $128\!\to\!4096$, 20 seeds). \emph{Left:} per-round Spearman
correlation between the width-$128$ and width-$4096$ acquisition scores
(band: 95\% CI of the mean). $\sigmat$ holds
$\rho\!\approx\!0.6$--$0.8$ for the whole trajectory while every baseline
decays toward zero, and the trajectory means are
$\bar\rho=0.75\pm0.01$ ($\sigmat$), $0.38\pm0.02$ ($\mup$ + isotropic),
$0.24\pm0.01$ (SP + metric) and $0.07\pm0.03$ (SP + isotropic). \emph{Right:}
paired area-under-the-learning-curve contrasts (per seed, then averaged;
bars: 95\% CI). Filled: proxy $-$ target reference, so $0$ means the
proxy acquires as well as the target's own posterior; open: proxy $-$
random. Only $\sigmat$ is simultaneously faithful (its fidelity interval
is the only one covering zero, $-0.0001$, $[-0.005,+0.005]$) and useful
($+0.018$, $[+0.014,+0.022]$); SP + isotropic's apparent $+0.039$ fidelity reflects
a degenerate target reference that itself falls $0.050$ below random. Full learning curves:
Figure~\ref{fig:mnist-al}; Fashion-MNIST: Figures~\ref{fig:al-summary-fmnist}
and~\ref{fig:fmnist-al}.}
\label{fig:al-summary}
\end{figure}

\begin{figure}[t]
\centering
\includegraphics[width=\linewidth]{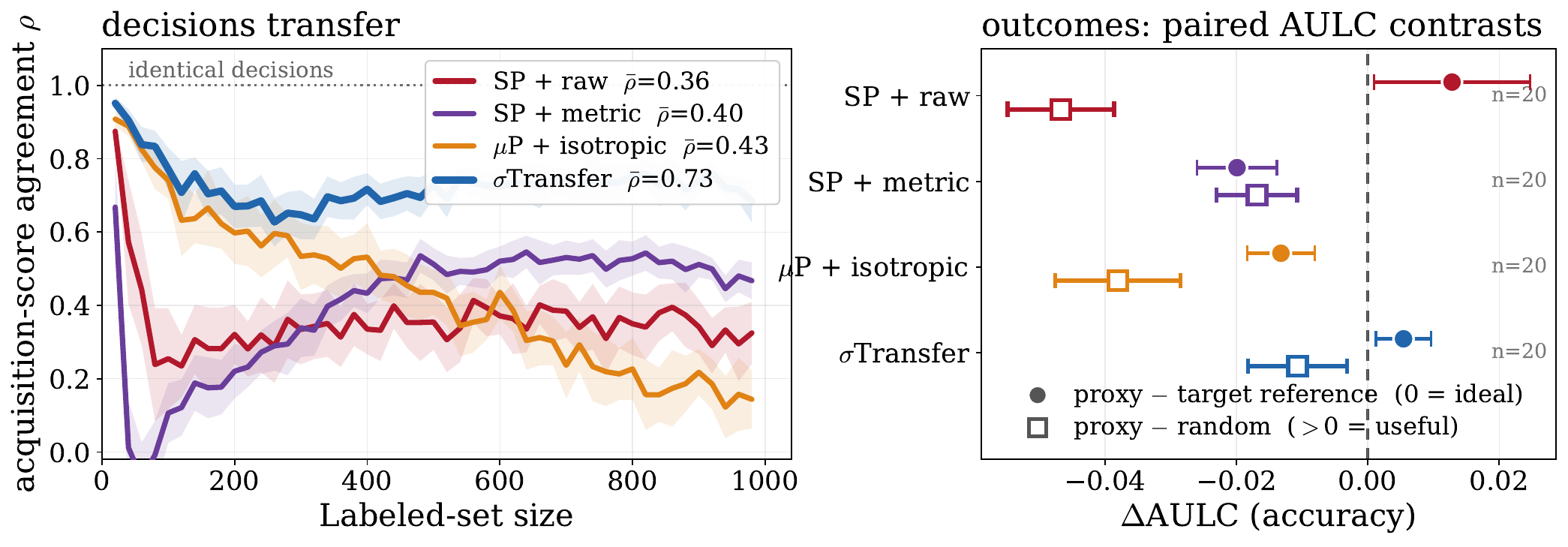}
\caption{\textbf{Sequential active learning on Fashion-MNIST: decisions and
outcomes} (20 seeds; same protocol and panels as
Figure~\ref{fig:al-summary}). The agreement ordering replicates
($\bar\rho=0.73\pm0.02$ for $\sigmat$ against $0.36$--$0.43$; see also
Figure~\ref{fig:al-pair-fmnist}), with a compressed baseline spread. The outcome panel is negative for every method:
on this task EPIG acquisition does not beat random over the trajectory
(a cold-start effect that endpoint accuracy alone conceals). At $20$ seeds
every method's utility interval excludes zero; $\sigmat$ is the least affected
($-0.011$, $[-0.018,-0.003]$) against $-0.047$ $[-0.055,-0.039]$ for
SP + isotropic. We
therefore claim decision transfer on both datasets and downstream utility on
MNIST only.}
\label{fig:al-summary-fmnist}
\end{figure}

\begin{figure}[t]
\centering
\includegraphics[width=\linewidth]{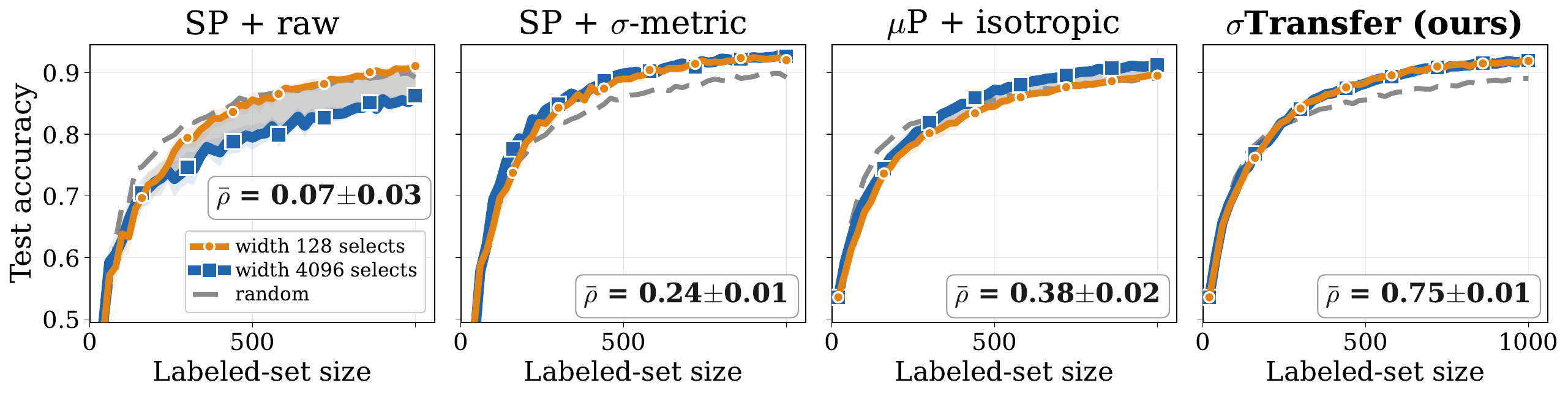}
\caption{\textbf{Sequential active learning on 10-class MNIST}
(last-layer diagonal posterior, 20 seeds; bands: 95\% CI of the mean). Every curve is
averaged over the same complete-trajectory seed set. One panel per method, titled
above each panel. Orange selects each acquisition batch with the width-$128$ posterior and
blue with the width-$4096$ posterior; both are then evaluated on the same
width-$4096$ model, and the grey band between them is the
acquisition-transfer gap. The thin orange line is drawn over the thicker blue
one, so a method in which the proxy reproduces the target's decisions shows
orange tracing blue with no blue visible beneath. Markers (circles for width
$128$, squares for width $4096$) are stamped every seventh acquisition round
and carry no information beyond series identity. $\bar\rho$ is the mean rank correlation between
the two posteriors' acquisition scores over the trajectory, the direct
measure of how closely the small model reproduces the large model's
decisions. Accuracy on this task is largely insensitive to acquisition
quality, but the decisions are not: $\bar\rho$ rises from $0.07$ (SP + isotropic)
through $0.24$ (SP + metric) and $0.38$ ($\mup$ + isotropic) to $0.75$ under
$\sigmat$, so neither the parametrization nor the prior geometry alone
suffices.}
\label{fig:mnist-al}
\end{figure}

\begin{figure}[t]
\centering
\includegraphics[width=\linewidth]{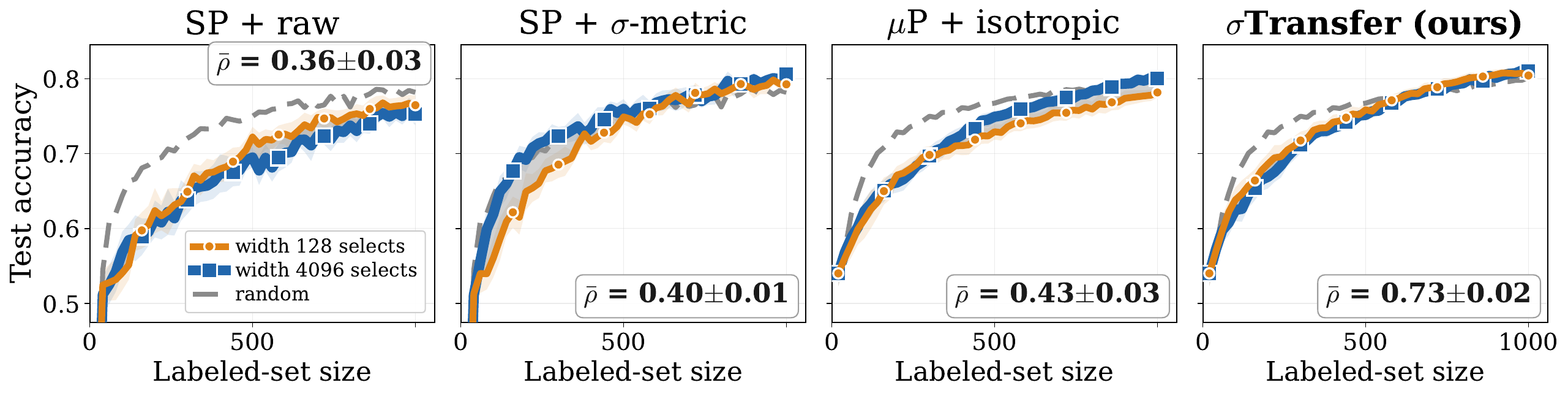}
\caption{\textbf{Sequential active learning on Fashion-MNIST}
(last-layer diagonal posterior, $20$ seeds; bands: 95\% CI of the mean; same protocol
and encoding as Figure~\ref{fig:mnist-al}). The decision-agreement ordering
matches MNIST, with $\bar\rho=0.73$ under $\sigmat$, compared with
$0.36$--$0.43$ for the three baselines. SP + isotropic has higher agreement
on Fashion-MNIST than on MNIST ($0.36$ versus $0.07$).}
\label{fig:fmnist-al}
\end{figure}

\begin{figure}[t]
\centering
\includegraphics[width=\linewidth]{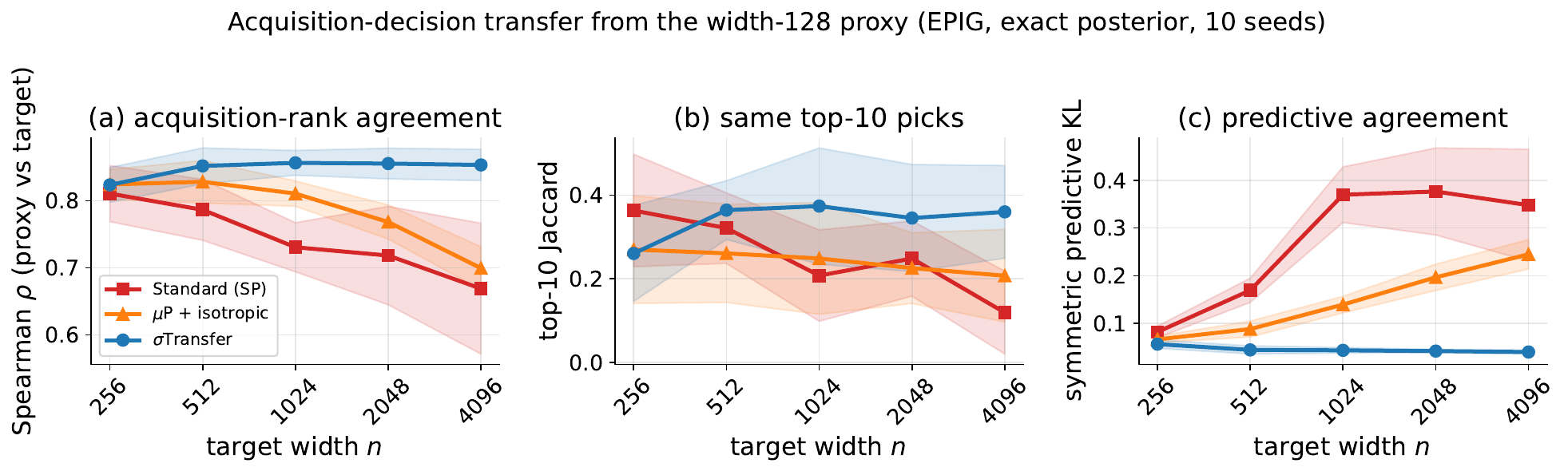}
\caption{\textbf{Direct acquisition transfer} (EPIG, exact last-layer
posterior, validation-NLL-selected transferred precision, 10 seeds; bands
95\% CI of the mean). \textbf{(a)} Proxy--target acquisition-rank agreement.
\textbf{(b)} Top-10 Jaccard overlap. \textbf{(c)} Symmetric predictive KL.
The target posterior is used only to measure agreement; the proxy choices do
not require it.}
\label{fig:m5a}
\end{figure}

Figure~\ref{fig:mnist-al} gives the full per-method learning curves for the
$50$-round sequential active-learning experiment summarised in
Section~\ref{sec:mnist-al}; Table~\ref{tab:mnist-al-utility} gives the
endpoint statistics at $n\!=\!1000$ labelled points, and
Figure~\ref{fig:m5a} and Table~\ref{tab:m5a-epig} the one-shot EPIG
acquisition-agreement metrics for both datasets.

\begin{table}[t]
\centering
\caption{\textbf{EPIG acquisition transfer on MNIST and FMNIST} (proxy 128
$\to$ target 4096, validation-NLL-selected $\lambda$, exact last-layer
posterior; MNIST 10 seeds, FMNIST 5 seeds). Agreement between the proxy and
target posteriors, with the target evaluated at the transferred
precision: Spearman rank correlation of EPIG scores, top-10 acquisition-set
Jaccard, and symmetric predictive KL (lower better). \textbf{Bold} = best
per column.}
\label{tab:m5a-epig}
\footnotesize \setlength{\tabcolsep}{2.5pt}
\begin{adjustbox}{width=\linewidth}
\begin{tabular}{l|ccc|ccc}
\toprule
\multirow{2}{*}{\textbf{Method}}
  & \multicolumn{3}{c|}{\textbf{MNIST}} & \multicolumn{3}{c}{\textbf{FMNIST}} \\
\cmidrule(lr){2-4} \cmidrule(lr){5-7}
  & Spearman $\rho$ & Jacc$_{10}$ & KL$_{\mathrm{sym}}$
  & Spearman $\rho$ & Jacc$_{10}$ & KL$_{\mathrm{sym}}$ \\
\midrule
SP + isotropic & $0.669_{\pm.103}$ & $0.119_{\pm.105}$ & $0.348_{\pm.124}$
  & $0.589_{\pm.137}$ & $0.118_{\pm.099}$ & $0.944_{\pm.193}$ \\
$\mup$ + isotropic & $0.700_{\pm.034}$ & $0.207_{\pm.117}$ & $0.245_{\pm.032}$
  & $0.629_{\pm.063}$ & $0.134_{\pm.124}$ & $0.418_{\pm.060}$ \\
\textbf{$\sigmat$ (ours)} & {\boldmath$0.854_{\pm.025}$} & {\boldmath$0.360_{\pm.117}$} & {\boldmath$0.040_{\pm.004}$}
  & {\boldmath$0.750_{\pm.066}$} & {\boldmath$0.198_{\pm.104}$} & {\boldmath$0.167_{\pm.021}$} \\
\bottomrule
\end{tabular}
\end{adjustbox}
\end{table}

\begin{table}[ht]
\centering\small
\setlength{\tabcolsep}{7pt}
\begin{tabular}{llccc}
\toprule
 & & \textbf{Test acc.} & \textbf{Test NLL} & \textbf{Regret} \\
\midrule
SP & random      & $0.901 \pm 0.016$ & $0.976 \pm 0.141$ & $+0.009$ \\
SP & proxy EPIG  & $0.920 \pm 0.012$ & $0.774 \pm 0.171$ & $-0.011$ \\
SP & oracle EPIG & $0.910 \pm 0.008$ & $0.931 \pm 0.111$ & --- \\
\addlinespace[2pt]
$\mup$ + isotropic & random      & $0.890 \pm 0.009$ & $0.920 \pm 0.120$ & $+0.028$ \\
$\mup$ + isotropic & proxy EPIG  & $0.889 \pm 0.021$ & $0.909 \pm 0.298$ & $+0.029$ \\
$\mup$ + isotropic & oracle EPIG & $0.918 \pm 0.004$ & $0.694 \pm 0.107$ & --- \\
\addlinespace[2pt]
$\sigmat$ & random      & $0.888 \pm 0.006$ & $0.953 \pm 0.152$ & $+0.037$ \\
$\sigmat$ & proxy EPIG  & $\mathbf{0.923 \pm 0.011}$ & $\mathbf{0.700 \pm 0.109}$ & $\mathbf{+0.003}$ \\
$\sigmat$ & oracle EPIG & $0.926 \pm 0.010$ & $0.661 \pm 0.153$ & --- \\
\bottomrule
\end{tabular}
\caption{\textbf{Sequential active-learning utility at $n=1000$}
(10-class MNIST, last-layer diagonal posterior at fixed $\log_2\lambda=0$,
width $128\!\to\!4096$; mean $\pm$ sample s.d.). Regret $=$ oracle-EPIG
accuracy $-$ method accuracy. Round wall-clock is dominated by retraining the
width-$4096$ model and is the same for every strategy, so this experiment is a
decision-transfer result, not a speed result.}
\label{tab:mnist-al-utility}
\end{table}

\section{Decision transfer for language models}
\label{app:ood-gate}

We use a width-$128$ model to decide whether a width-$2048$ model should
answer a prompt. The small model computes the uncertainty score, and the
large model supplies the prediction.

\paragraph{Models and pretraining.}
We train decoder-only Transformers at hidden widths
$128,256,512,1024,2048$. All models have $8$ layers, $8$ attention heads,
context length $512$, and the Pythia padded vocabulary of $50{,}304$
entries~\citep{biderman2023pythia}. They are trained on the same stream of
$10^9$ FineWeb-Edu tokens~\citep{penedo2024fineweb}. We use one pretraining
seed for each width and parametrization.

The $\mup$ models use \texttt{MuReadout}, \texttt{set\_base\_shapes},
and MuAdamW, with base width $128$ and base learning rate $0.01$.
SP uses \texttt{nn.Linear} and AdamW at learning rate $0.0003$. These
rates are fixed across widths. Both use $\beta=(0.9,0.95)$, weight decay
$0.1$, $2\%$ warmup, cosine decay to $10\%$ of the initial rate, and
gradient-norm clipping at $1$, without dropout. Each update accumulates
four batches of $32$ length-$512$ sequences.

\paragraph{Classification heads.}
We freeze the backbone and fit a linear classifier to the
final-layer-normalization output $h$ at the last real token. Its logits
are $W(h/c)+b$, where $c^2$ is the mean squared feature norm on the
training inputs. We minimize summed cross-entropy plus
$(\lVert W\rVert_F^2+\lVert b\rVert^2)/2$ using L-BFGS in float64.
For each backbone and task, prior comparisons share the fitted head,
which is held fixed throughout the precision sweep.

We apply last-layer Laplace with width-normalized or isotropic priors.
For raw features of dimension $d$, their weight covariance scales are
$d^{-1}I$ and $I$, respectively. In the fitted head's coordinates, these
become $S_W=(c^2/d)I$ and $S_W=c^2I$; both use $S_b=I$. These definitions
apply to both backbone parametrizations; $\mup$ with the width-normalized
prior gives $\sigmat$. We use full-GGN curvature in float64 and evaluate
prior precision on a $61$-point $\log_2\lambda$ grid.
For precision selection and OOD gating, scores use $200$ Monte Carlo
samples of the Gaussian logits, with common random numbers.

\paragraph{Active learning on AG News.}
Figure~\ref{fig:llm-decisions} starts with $50$, $500$ or $5{,}000$
labelled examples and acquires $250$ labels one at a time from a common
pool of $512$ examples, disjoint from the initial labelled sets. EPIG
uses $256$ validation reference inputs and $2{,}048$ posterior samples.
We refit the heads after each acquisition and use full-GGN Laplace at
fixed $\lambda=1$. After the final acquisition, we evaluate the target's
MAP predictions on $7{,}600$ test examples. The $\sigmat$ arm uses the
width-normalized prior defined above. The feature normalization $c$
is fixed from the full training feature cache throughout acquisition.

\paragraph{OOD data.}
For each classification task, we take OOD prompts from two other
datasets, denoted $A$ and $B$. We score them using the ID task's
classification head and count abstention as the correct action.
Sentiment-to-sentiment pairs are excluded. Table~\ref{tab:ood-pools}
gives the source assignments and test-set sizes.

\begin{table}[ht]
\centering\small
\caption{OOD source assignments and test-set sizes.}
\label{tab:ood-pools}
\begin{tabularx}{\linewidth}{@{}Xll@{}}
\toprule
ID tasks (test-set sizes) & Source $A$ (pool size) & Source $B$ (pool size) \\
\midrule
AG News (7600), DBpedia (8000) & SST-2 (872) & TREC (500) \\
IMDB (8000), Rotten Tomatoes (1066), Yelp Polarity (8000), SST-2 (872)
& AG News (7600) & TREC (500) \\
Emotion (2000), SNLI (8000), Yahoo (8000), TREC (500)
& AG News (7600) & SST-2 (872) \\
\bottomrule
\end{tabularx}
\end{table}

Dataset sources are \citet{zhang2015character} for AG News,
DBpedia, Yahoo Answers, and Yelp Polarity;
\citet{socher2013recursive} for SST-2;
\citet{li2002learning} for TREC;
\citet{maas2011learning} for IMDB;
\citet{pang2005seeing} for Rotten Tomatoes;
\citet{saravia2018carer} for Emotion; and
\citet{bowman2015large} for SNLI.

\paragraph{Decision rule and calibration.}
We choose the proxy's prior precision using ID validation data and OOD
source $A$. The selected $\lambda_{\mathrm{ood}}$ maximizes the AUROC of
BALD~\citep{houlsby2011bayesian} for distinguishing ID prompts from
source $A$. At this precision, we choose a threshold $\tau$ that
maximizes Youden's $J$, defined as ID retention plus OOD recall minus one.

The large model answers a prompt when the proxy's BALD score is at most
$\tau$; otherwise, the system abstains. Answered prompts use the large
model's MAP predictions. We evaluate on ID test data and OOD source $B$,
which was not used for calibration. We then exchange $A$ and $B$ and
average the two results.

\paragraph{Metrics and results.}
\emph{OOD recall} is the fraction of OOD prompts rejected, and
\emph{ID retention} is the fraction of ID prompts answered.
\emph{Answered accuracy} is the accuracy among answered prompts, counting
answered OOD prompts as errors. \emph{Abstention agreement} is the
fraction of the proxy's rejected prompts that are also rejected by the
separately calibrated target model.

Table~\ref{tfull} gives the results for each task. The final row reports
the mean and population standard deviation across the ten tasks, after averaging
the two source orderings within each task.

\begin{table}[ht]
\centering\small
\caption{Proxy-based abstention results for width $128\to2048$.
Each entry reports OOD recall / ID retention / answered accuracy /
abstention agreement, averaged over the two source orderings.}
\label{tfull}
\setlength{\tabcolsep}{4pt}
\begin{adjustbox}{max width=\linewidth}
\begin{tabular}{l cc}
\toprule
Dataset & SP $+$ isotropic & $\sigmat$ \\
\midrule
AG News         & $0.846$ / $0.969$ / $0.872$ / $0.784$ & $0.895$ / $0.963$ / $0.866$ / $0.702$ \\
DBpedia         & $0.981$ / $0.999$ / $0.899$ / $0.985$ & $0.987$ / $0.998$ / $0.869$ / $0.966$ \\
Emotion         & $1.000$ / $0.999$ / $0.485$ / $0.998$ & $0.998$ / $0.999$ / $0.479$ / $0.998$ \\
IMDB            & $0.605$ / $0.826$ / $0.603$ / $0.846$ & $0.725$ / $0.837$ / $0.643$ / $0.873$ \\
Rotten Tomatoes & $0.875$ / $0.977$ / $0.477$ / $0.955$ & $0.937$ / $0.984$ / $0.547$ / $0.969$ \\
SNLI            & $0.958$ / $0.992$ / $0.496$ / $0.953$ & $0.974$ / $0.986$ / $0.512$ / $0.921$ \\
SST-2           & $0.836$ / $0.968$ / $0.456$ / $0.950$ & $0.732$ / $0.939$ / $0.431$ / $0.912$ \\
TREC            & $0.998$ / $1.000$ / $0.696$ / $1.000$ & $0.999$ / $0.996$ / $0.655$ / $0.998$ \\
Yahoo           & $0.676$ / $0.976$ / $0.468$ / $0.774$ & $0.662$ / $0.968$ / $0.454$ / $0.816$ \\
Yelp Polarity   & $0.775$ / $0.927$ / $0.713$ / $0.729$ & $0.854$ / $0.926$ / $0.760$ / $0.734$ \\
\midrule
Mean $\pm$ SD  & $0.855${\scriptsize$\pm.13$} / $0.963${\scriptsize$\pm.05$} / $0.616${\scriptsize$\pm.16$} / $0.898${\scriptsize$\pm.10$}
                & $0.876${\scriptsize$\pm.12$} / $0.959${\scriptsize$\pm.05$} / $0.621${\scriptsize$\pm.16$} / $0.889${\scriptsize$\pm.10$} \\
\bottomrule
\end{tabular}
\end{adjustbox}
\end{table}

\paragraph{Comparison with target decisions.}
For evaluation, we also apply the same calibration procedure to the
target's own posterior. This gives mean OOD recall of $0.972/0.980$,
ID retention of $0.975/0.978$, and answered accuracy of $0.693/0.694$ for
SP/$\sigmat$. Using the proxy instead reduces OOD recall by approximately
$10$--$12$ percentage points, ID retention by $1$--$2$ points, and
answered accuracy by $7$--$8$ points. Deployment requires only the proxy
posterior and target MAP predictions.

\paragraph{Additional comparisons.}
The target's mean NLL on answered ID prompts differs from ungated
prediction by $-0.004/-0.005$ nats for SP/$\sigmat$. Replacing BALD with
one minus the maximum probability of the posterior predictive
($1-\mathrm{MSP}$; \citealp{hendrycks2016baseline}) gives proxy OOD recall of
$0.41$--$0.42$ while retaining approximately half the ID prompts.
Applying this alternative to the target gives OOD recall of
$0.84$--$0.85$ and answered accuracy of $0.64$--$0.66$.

Selecting $\lambda$ using ID evidence alone gives OOD recall of
$0.85$--$0.86$; selecting $\tau$ still uses source $A$. BALD magnitudes
vary across widths. In the experiment reported here, the threshold is
always applied to the proxy's scores.

\section{Prior-precision transfer in language models}
\label{app:llm-lambda}

\begin{figure}[ht]
\centering
\includegraphics[width=\linewidth]{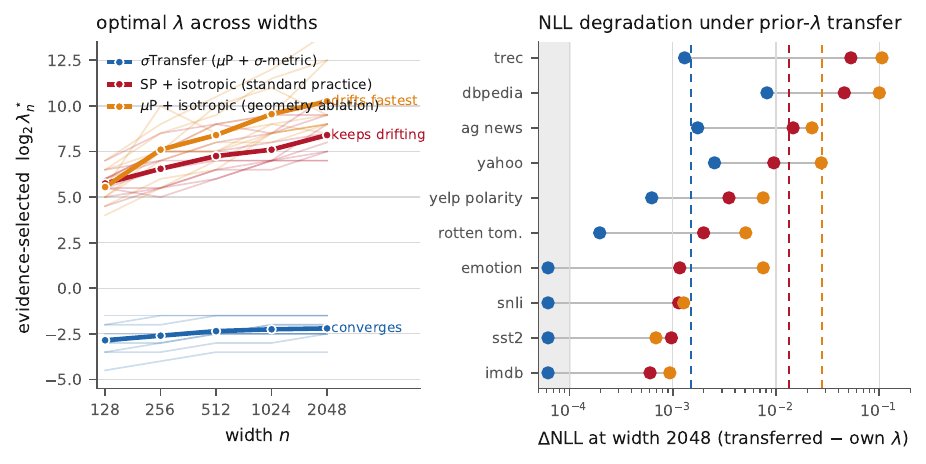}
\caption{Figure~\ref{fig:llm-lambda} with the geometry ablation added:
the same $\mup$ checkpoints with an isotropic prior drift faster
than SP ($5.5\to10.2$ bits mean optimum, $4.7\pm1.3$ bits at
$128\to2048$) and produce the largest mean target-NLL degradation ($0.028$,
reaching $0.1$ on TREC).}
\label{fig:llm-lambda-3arm}
\end{figure}

Both tables use the fixed-checkpoint plug-in evidence-score criterion on ID training data at
the proxy, the shared half-bit grid, and the target's own evidence
search only as an evaluation reference. In zero-shot deployment, the
target constructs one posterior at the transferred proxy precision.
$\Delta$NLL is transferred-minus-own test NLL at
the target; sweep arrays use $200$ MC samples.

\begin{table}[ht]
\centering\small
\caption{Our language models, width $128\to2048$
(Section~\ref{sec:llm-lambda}). $\lambda$ values are $\log_2$.}
\label{tab:llm-lambda-internal}
\vspace{3pt}
\setlength{\tabcolsep}{5pt}
\begin{tabular}{l ccc ccc}
\toprule
& \multicolumn{3}{c}{$\sigmat$} & \multicolumn{3}{c}{SP $+$ isotropic} \\
\cmidrule(lr){2-4}\cmidrule(lr){5-7}
Dataset & $\lambda^\star_{128}$ & $\lambda^\star_{2048}$ & $\Delta$NLL
        & $\lambda^\star_{128}$ & $\lambda^\star_{2048}$ & $\Delta$NLL \\
\midrule
AG News         & $-3.5$ & $-2.5$ & $+0.0017$ & $+4.5$ & $+7.0$  & $+0.0146$ \\
DBpedia         & $-4.5$ & $-3.5$ & $+0.0082$ & $+5.0$ & $+7.5$  & $+0.0458$ \\
Emotion         & $-3.0$ & $-2.5$ & $-0.0001$ & $+6.0$ & $+8.5$  & $+0.0012$ \\
IMDB            & $-2.0$ & $-1.5$ & $-0.0000$ & $+6.5$ & $+9.5$  & $+0.0006$ \\
Rotten Tomatoes & $-2.0$ & $-1.5$ & $+0.0002$ & $+7.0$ & $+10.5$ & $+0.0020$ \\
SNLI            & $-1.5$ & $-1.5$ & $+0.0000$ & $+5.5$ & $+7.5$  & $+0.0011$ \\
SST-2           & $-3.0$ & $-2.5$ & $+0.0000$ & $+6.0$ & $+9.0$  & $+0.0010$ \\
TREC            & $-2.5$ & $-2.0$ & $+0.0013$ & $+5.5$ & $+8.0$  & $+0.0530$ \\
Yahoo           & $-3.5$ & $-2.5$ & $+0.0025$ & $+5.5$ & $+7.0$  & $+0.0095$ \\
Yelp Polarity   & $-3.0$ & $-2.0$ & $+0.0006$ & $+6.0$ & $+9.5$  & $+0.0035$ \\
\midrule
Mean gap (bits) / $\Delta$NLL & \multicolumn{3}{c}{$0.65\pm0.34$ / $+0.0014$}
                              & \multicolumn{3}{c}{$2.65\pm0.63$ / $+0.0132$} \\
\bottomrule
\end{tabular}
\end{table}

\begin{table}[ht]
\centering\small
\caption{Public u-$\mup$/SP pairs, 1B$\to$7B, all ten tasks (primary
cells; extends Table~\ref{tab:umup-lambda}). Each model's search time is its
curvature eigendecomposition plus grid sweep (u-$\mup$ family; SP within
$10\%$); Yahoo and DBpedia run on CPU because their $41$k/$57$k-dimensional
eigenproblems exceed the GPU solver. The reported ratios compare proxy- and
target-side search procedures, not end-to-end Regime-A deployment; the 7B
model still constructs one posterior at the transferred precision. Timing
results are available for nine tasks; SST-2 is excluded from the timing
summary. Speedups are approximate ratios of the displayed, rounded times.}
\label{tab:llm-lambda-public}
\vspace{3pt}
\setlength{\tabcolsep}{4.5pt}
\begin{tabular}{l cc cc ccc}
\toprule
& \multicolumn{2}{c}{u-$\mup$ $+$ $\sigma$-metric}
& \multicolumn{2}{c}{SP $+$ isotropic} & & & \\
\cmidrule(lr){2-3}\cmidrule(lr){4-5}
Dataset & gap & $\Delta$NLL & gap & $\Delta$NLL
        & search 1B & search 7B & speedup \\
\midrule
AG News         & $0.0$ & $+0.0000$ & $0.5$ & $+0.0030$ & $4$\,s  & $9$\,s    & ${\sim}2.3\times$ \\
DBpedia         & $0.0$ & $+0.0000$ & $1.0$ & $+0.0247$ & $40$\,s & $3.7$\,h  & $330\times$ \\
Emotion         & $0.5$ & $+0.0002$ & $1.0$ & $+0.0030$ & $5$\,s  & $24$\,s   & $5\times$ \\
IMDB            & $0.0$ & $+0.0000$ & $0.5$ & $-0.0016$ & $2$\,s  & $4$\,s    & $2\times$ \\
Rotten Tomatoes & $0.5$ & $+0.0002$ & $0.5$ & $+0.0011$ & $2$\,s  & $3$\,s    & $2\times$ \\
SNLI            & $0.5$ & $+0.0000$ & $1.5$ & $+0.0011$ & $3$\,s  & $6$\,s    & $2\times$ \\
SST-2           & $0.0$ & $+0.0000$ & $1.0$ & $+0.0017$ & ---     & ---       & --- \\
TREC            & $0.0$ & $+0.0000$ & $0.0$ & $+0.0000$ & $5$\,s  & $23$\,s   & $5\times$ \\
Yahoo           & $0.0$ & $+0.0000$ & $0.5$ & $+0.0048$ & $19$\,s & $81$\,min & $260\times$ \\
Yelp Polarity   & $0.0$ & $+0.0000$ & $0.5$ & $-0.0015$ & $3$\,s  & $4$\,s    & $1\times$ \\
\midrule
Mean & $0.15\pm0.24$ & $+0.0000$ & $0.70\pm0.42$ & $+0.0036$ & & & \\
\bottomrule
\end{tabular}
\end{table}

\subsection{Comparison with the diagonal neural g-prior}
\label{app:gprior}

We test whether the diagonal neural g-prior of
\citet{antoran2022sampling} supports prior-precision transfer in our
fixed-checkpoint Laplace setting. Its precision is $\alpha D$, where
$D=\operatorname{diag}(H)$ and $H$ is the summed last-layer GGN. We
compute $D$ from training data at each width and transfer $\alpha$ from
width $128$ to $2048$. The fitted classification heads remain fixed
throughout the precision search.

\begin{table}[ht]
\centering\small
\caption{Diagonal neural g-prior on our language models
($128\to2048$, ten tasks, one checkpoint seed). $|\Delta\log_2\alpha|$ is the
mean absolute proxy--target selection gap and $\Delta\NLL$ is mean
transferred-minus-target-local test NLL, both under the fixed-checkpoint plug-in evidence score.
Counts give tasks with both proxy and target selections strictly inside the
search grid. Validation-NLL gaps are omitted because only five of twenty
pairs satisfy this condition.}
\label{tab:gprior-internal}
\vspace{3pt}
\setlength{\tabcolsep}{5pt}
\begin{tabular}{l ccc c}
\toprule
Parametrization & $|\Delta\log_2\alpha|$ & $\Delta\NLL$
& Evidence: interior & Val. NLL: interior \\
\midrule
SP     & $3.95$ & $+0.0367$ & $10/10$ & $4/10$ \\
$\mup$ & $3.50$ & $+0.0229$ & $10/10$ & $1/10$ \\
\bottomrule
\end{tabular}
\end{table}

The mean absolute precision gap is $3.95$ bits for SP and $3.50$ bits
for $\mup$, with mean target test-NLL increases of $0.0367$ and $0.0229$,
respectively, when transferring the proxy precision
(Table~\ref{tab:gprior-internal}). Thus this normalization leaves
substantial precision drift in our fixed-checkpoint experiment.

\section{External validation on public u-\texorpdfstring{$\mup$}{muP}/SP pairs: details}
\label{app:umup-details}

\paragraph{Checkpoints.}
We use public unit-scaled $\mup$ (u-$\mup$) and SP checkpoints from
\citet{blake2025u} at approximately 1B ($d{=}2048$, 16 layers) and 7B
($d{=}4096$, 32 layers), pretrained on 300B SlimPajama tokens. Width and
depth increase together. We evaluate one checkpoint per size and
parametrization; SP training hyperparameters were not retuned for each size.
Checkpoint revisions and inference code are pinned. The extracted
final-normalization representations reproduce the released logits for both
1B models. Last-layer inference follows Appendix~\ref{app:ood-gate}.

\begin{table}[ht]
\centering\small
\caption{Public 1B$\to$7B prior-precision transfer on eight tasks excluding
Yahoo and DBpedia. Precision gaps are in bits (mean $\pm$ s.d.\ across tasks);
the $61$-point grid has $0.5$-bit spacing. $\Delta$NLL is mean target test NLL
at the proxy's precision minus that at the target's own precision, both
selected by fixed-checkpoint plug-in evidence. The two primary comparisons
cover all ten tasks in Table~\ref{tab:llm-lambda-public}. Sweep arrays use
$200$ MC samples in fp16.}
\label{tab:umup-lambda}
\vspace{3pt}
\begin{tabular}{l ccc}
\toprule
Cell & gap, plug-in evid.\ (bits) & gap, corrected evid.\ & $\Delta$NLL \\
\midrule
u-$\mup$ $+$ $\sigma$-metric & $0.19\pm0.26$ & $0.56\pm0.32$ & $+4\times10^{-5}$ \\
SP $+$ $\sigma$-metric       & $0.06\pm0.18$ & $0.38\pm0.23$ & $-1\times10^{-4}$ \\
u-$\mup$ $+$ isotropic             & $0.88\pm0.23$ & $1.19\pm0.65$ & $+1.1\times10^{-3}$ \\
SP $+$ isotropic (standard)        & $0.69\pm0.46$ & $0.56\pm0.42$ & $+8.6\times10^{-4}$ \\
\bottomrule
\end{tabular}
\end{table}

\paragraph{Precision transfer.}
Table~\ref{tab:umup-lambda} compares precision transfer across
the four combinations of parametrization and prior.
The corrected evidence score adds a quadratic estimate of the gain in log
posterior from locally adjusting the trained weights at each prior precision.
All selected precisions are interior to the grid. The correction changes the
selection gaps and method ordering: $\sigmat$ and SP with an isotropic prior
both have a mean gap of $0.56$ bits under the corrected criterion.

Using the gate protocol in Appendix~\ref{app:ood-gate}, both public-model
proxy gates retain approximately $99.8\%$ of in-distribution inputs and reject
$96\%$ of out-of-distribution inputs on average, with similar performance on
a common target.

\section{Technical formulation of the \texorpdfstring{$\sigmat$}{sigmaTransfer} prior}
\label{app:sn}

We derive the prior scales used in Section~\ref{sec:keyidea-revised}
for the MLP setting of Theorem~\ref{thm:prior-kernel-stability} and give
the implementation details.

\paragraph{Prior coordinates.}
For the zero-mean prior $\theta_n\sim\mathcal N(0,\lambda^{-1}S_n)$
with $\lambda>0$, write $S_n=\blockdiag(s_{1,n}I,\ldots,s_{G,n}I)$,
with one block per parameter tensor and $s_{g,n}>0$.
Taking $T_n=\blockdiag(\sqrt{s_{1,n}}I,\ldots,\sqrt{s_{G,n}}I)$ gives
$S_n=T_nT_n^\top$. The prior $\phi_n\sim\mathcal N(0,\lambda^{-1}I)$
then induces the stated prior on $\theta_n=T_n\phi_n$. At the trained checkpoint, the Jacobians
satisfy $J_{\phi,n}=J_nT_n$, so
\[
K_n(x,x')=J_n(x)S_nJ_n(x')^\top
        =J_{\phi,n}(x)J_{\phi,n}(x')^\top.
\]
A common width-independent factor in $S_n$ can be absorbed into
$\lambda$; relative block constants specify the prior.

\paragraph{MLP construction.}
Consider a scalar-output MLP with fixed depth $L$, fixed input dimension
$n_0=d_{\rm in}$, and hidden widths $n_1=\cdots=n_L=n$. Set $h^0(x)=x$ and
\[
\begin{aligned}
z_i^\ell(x)&=\frac{1}{\sqrt{n_{\ell-1}}}
  \sum_jW_{ij}^\ell h_j^{\ell-1}(x)+b_i^\ell,
\qquad h_i^\ell(x)=\operatorname{ReLU}(z_i^\ell(x)),\\
f_n(x)&=\frac1n a^\top h^L(x)+c.
\end{aligned}
\]
All quantities below are evaluated at $\hat\theta_n=\theta_{n,T}$,
with $T$ fixed independently of width and training satisfying
Assumptions~\ref{ass:tp-mup} and~\ref{ass:relu-kink}.
In these coordinates, choose
\[
\begin{aligned}
S_{W^1,n}&=n\,d_{\rm in}I,\qquad
S_{W^\ell,n}=nI\quad(2\leq\ell\leq L),\\
S_{b^\ell,n}&=nI,\qquad S_{a,n}=nI,\qquad S_{c,n}=1.
\end{aligned}
\]
Thus $T_n$ has factor $\sqrt{n d_{\rm in}}$ on the input-weight block,
$\sqrt n$ on the remaining weight and hidden-bias blocks, and $1$ on
the output bias.
For an internal matrix, $\widetilde W^\ell=W^\ell/\sqrt n$ is the
canonical $\mup$ matrix parameter; parameter and optimizer scalings
transform together under \citet[Proposition~2.2.3 and Definition~2.9.12]{yang2023tensor}.
The prior map $T_n$ is defined by the choice of $S_n$ above.

\paragraph{Kernel derivation.}
\label{app:sn-mlp-derivation}
For $\ell=1,\ldots,L$, define the sensitivity
$r_i^\ell(x)=n\,\partial f_n(x)/\partial z_i^\ell(x)$ and the Gram entries
\[
\Sigma_{\ell,n}(x,x')=\frac{h^\ell(x)^\top h^\ell(x')}{n},\qquad
\Pi_{\ell,n}(x,x')=\frac{r^\ell(x)^\top r^\ell(x')}{n}.
\]
For the input layer, set
$\Sigma_{0,n}(x,x')=\Sigma_0(x,x')=d_{\rm in}^{-1}x^\top x'$.
The parameter derivatives are
\[
\begin{aligned}
\frac{\partial f_n}{\partial a_i}&=\frac{h_i^L}{n},&
\frac{\partial f_n}{\partial b_i^\ell}&=\frac{r_i^\ell}{n},\\
\frac{\partial f_n}{\partial W_{ij}^\ell}
 &=\frac{r_i^\ell h_j^{\ell-1}}{n\sqrt{n_{\ell-1}}},&
\frac{\partial f_n}{\partial c}&=1.
\end{aligned}
\]
Multiplying by the prior blocks and summing gives the exact
finite-width identity
\begin{equation}
\label{eq:mlp-sigma-kernel-decomposition}
\begin{aligned}
K_n(x,x')={}&1+\Sigma_{L,n}(x,x')
+\Pi_{1,n}(x,x')\bigl(1+d_{\rm in}\Sigma_0(x,x')\bigr)\\
&+\sum_{\ell=2}^L\Pi_{\ell,n}(x,x')
\bigl(1+\Sigma_{\ell-1,n}(x,x')\bigr).
\end{aligned}
\end{equation}
Under the stated assumptions, Lemma~\ref{lem:tp-gram} establishes joint
almost-sure convergence of these Gram entries on every fixed finite
input set. The kernel therefore converges by continuity of finite sums
and products. Corollary~\ref{cor:fixed-output-kernel} gives the extension
to any fixed output dimension.

\paragraph{Stored-coordinate scales.}
\label{app:stored-scales}
Under an invertible linear change $\theta'_n=R_n\theta_n$, the prior and
Jacobian transform as $S'_n=R_nS_nR_n^\top$ and $J'_n=J_nR_n^{-1}$,
preserving $J_nS_nJ_n^\top$. Absorbing the forward factors into
$W_{\rm stored}^\ell=W^\ell/\sqrt{n_{\ell-1}}$ and
$a_{\rm stored}=a/n$ gives input-weight scale $n$, internal-weight
scales $n/n_{\ell-1}$ for $\ell\geq2$, and readout scale $1/n$.
For a readout $f=c_{\rm out}w^\top h+c$, the weight scale is
$1/(\mathrm{fan\_in}\,c_{\rm out}^2)$.
Table~\ref{tab:stored-scales} lists the implementation scales.

\begin{table}[ht]
\centering\small
\begin{tabular}{ll}
\toprule
Parameter block & Stored-coordinate $s_{g,n}$ \\
\midrule
Input weight & $\mathrm{fan\_out}$ \\
Hidden weight & $\mathrm{fan\_out}/\mathrm{fan\_in}$ \\
Hidden bias & $\mathrm{fan\_out}$ \\
\texttt{MuReadout} weight & $1/(\mathrm{fan\_in}\,c_{\rm out}^2)$ \\
Output bias & $1$ \\
\bottomrule
\end{tabular}
\caption{Prior-covariance scales for Linear/MLP parameters, with
$c_{\rm out}=\texttt{output\_mult}/\texttt{width\_mult}$.}
\label{tab:stored-scales}
\end{table}

\paragraph{Implementation and checks.}
\label{app:implementation}
The adapter supports \texttt{nn.Linear} and \texttt{MuReadout}
parameters. It supplies \texttt{laplace-torch} with precision
$\lambda/s_{g,n}$ for every parameter in block $g$: a per-parameter
vector for full or diagonal curvature, and one scalar per parameter
tensor for KFAC. The structured-curvature results use undamped KFAC
(\texttt{damping=False}), whose blocks are $Q_g+(\lambda/s_{g,n})I$.
For \texttt{MuReadout}, \texttt{width\_mult} is frozen before functional
Jacobian evaluation, and extracted last-layer features include
$c_{\rm out}$ to match the forward pass.

The prior-kernel coordinate identity above is checked numerically to
machine precision. Full-curvature posterior-precision checks against
$H_n+\lambda S_n^{-1}$ have relative Frobenius error below
$2\times10^{-8}$ for all-parameter regression, all-parameter
classification, and last-layer classification. The diagonal and
undamped-KFAC function-covariance identities in
Proposition~\ref{prop:structured-coordinate-covariance}, under blockwise
rescaling, are verified to relative errors below $2\times10^{-15}$.

\section{Proofs of the theoretical statements}
\label{app:proofs}

This appendix states the formal versions of
Theorems~\ref{thm:prior-kernel-stability}--\ref{thm:decision-transfer}
(summarized informally in Section~\ref{sec:proof-goals}) and gives
their proofs.
For the first result, finite-set forward and backward Gram convergence
is derived from TP-admissible finite-step $\mup$ training rather than
assumed. The subsequent posterior and transfer results additionally
assume stable data-space curvature, compact prior-precision search,
convergence of the selected Laplace parameter subset's prior-weighted
squared norm for fixed-checkpoint plug-in evidence-score transfer, and
nonzero decision margins wherever exact argmax agreement is claimed.

Throughout, $D$ denotes a finite training set, $V$ a finite validation
set, $\mathcal P$ a finite candidate pool, and $\U$ a fixed finite input
collection containing all points needed by a decision score. For finite input sets $A,B$, write
\[
    K_n(A,B)
    =
    J_n(A)S_nJ_n(B)^\top
\]
for the width-normalized prior kernel used by $\sigmat$, induced by the zero-mean weight prior
$\theta_n\sim\mathcal N(0,\sigma_p^2S_n)$.
The Laplace posterior mean and covariance at prior precision $\lambda=\sigma_p^{-2}$ are
\[
    m_n^\lambda(X)=f_n(X;\hat\theta_n),
    \qquad
    C_n^\lambda(X,X')
    =
    J_n(X)
    \left(
    H_n+\lambda S_n^{-1}
    \right)^{-1}
    J_n(X')^\top.
\]
The results after Theorem~\ref{thm:prior-kernel-stability} are
deterministic continuity statements applied pathwise. When their
premises are supplied by that theorem, convergence and eventual
agreement are understood almost surely. If analogous premises hold only
in probability, the numerical conclusions hold in probability and the
probability of the stated decision agreement tends to one.

\subsection{Formal statement and proof of Theorem~\ref{thm:prior-kernel-stability}: prior-kernel stability}
\label{app:proof-kernel}

\begin{assumption}[TP-admissible finite-step $\mup$ training]
\label{ass:tp-mup}
The depth, input and output dimensions, training and evaluation sets, and
number of optimizer steps $T<\infty$ are fixed independently of width.
The width sequence uses the Gaussian setup and general $\mup$ scaling of
\citet[Setup~2.6.3 and Definition~2.9.12]{yang2023tensor}, up to the
equivalent finite-width reparametrization of their Proposition~2.2.3.
Wide parameter groups are independently initialized, zero-initialized
biases and deterministic scalar parameters are handled as in their
Remarks~2.9.10 and~2.9.27, and all widths are realized on a common
probability space. The minibatch path and other external training choices
are shared across widths. Apart from the ReLU derivative treated below,
the loss-error and optimizer-history maps satisfy the regularity
conditions of \citet[Assumption~2.9.17]{yang2023tensor}; in particular,
the optimizer is
entrywise and globally Lipschitz in each fixed finite gradient history.
This includes SGD and positive-stabilizer Adam/AdamW with the prescribed
$\mup$ gradient, or equivalently stabilizer, scaling
\citep[Remark~2.2.6]{yang2023tensor}. When used, decoupled weight decay has
width-independent multiplicative shrinkage as in their Section~2.10.1.
\end{assumption}

\begin{assumption}[ReLU smoothing condition]
\label{ass:relu-kink}
Use the convention $\operatorname{ReLU}'(0)=0$.
At every $t\leq T$, layer $\ell\leq L$, and input $x$ in the fixed
training/evaluation set, the empirical preactivation law has no
asymptotic atom at the ReLU kink:
\[
    \lim_{\delta\downarrow0}\limsup_{n\to\infty}
    \frac1{n_\ell}\sum_{i=1}^{n_\ell}
    \mathbf 1\!\left\{
        \abs{z_{i,n,t}^{\ell}(x)}\leq\delta
    \right\}
    =0
    \qquad\text{almost surely}.
\]
For every fixed $p<\infty$, the empirical $p$th moments of all forward,
backward, and optimizer-history coordinates through time $T$ are almost
surely bounded uniformly in $n$, using $n^{-1}\sum_i$ for vector histories
and $n^{-2}\sum_{i,j}$ for matrix-indexed histories.
\end{assumption}

\begin{lemma}[Finite-horizon ReLU stability]
\label{lem:relu-stability}
Under Assumptions~\ref{ass:tp-mup} and~\ref{ass:relu-kink}, couple the
exact-ReLU network with a network trained from the same initialization
and along the same external training path, but with ReLU replaced by a
smooth function $\varphi_\delta$ satisfying
\[
    \sup_u\abs{\varphi_\delta(u)-\operatorname{ReLU}(u)}
    \leq C\delta,
    \qquad
    0\leq\varphi_\delta'\leq1,
    \qquad
    \varphi_\delta'(u)=\mathbf 1\{u>0\}
    \quad\text{when }\abs{u}>\delta .
\]
Along a fixed countable sequence $\delta\downarrow0$, for every fixed
$t\leq T$, layer $\ell$, and input $x$,
\[
    \lim_{\delta\downarrow0}\limsup_{n\to\infty}
    \left(
        \norm{h_{n,t}^{\ell,\delta}(x)-h_{n,t}^{\ell}(x)}_{n,2}
        +
        \norm{r_{n,t}^{\ell,\delta}(x)-r_{n,t}^{\ell}(x)}_{n,2}
    \right)
    =0
    \qquad\text{almost surely},
\]
where $\norm{v}_{n,2}^2=n^{-1}\sum_i\abs{v_i}^2$. The analogous
discrepancies vanish in operator norm for each effective hidden matrix,
in $\norm{\cdot}_{n,2}$ for each wide vector parameter, and in absolute
value for scalar parameters. Consequently, the trained outputs and the
forward and backward Gram entries are asymptotically unchanged by
smoothing.
\end{lemma}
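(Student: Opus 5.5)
We argue by induction over the training time $t=0,1,\ldots,T$ with a Gronwall-type bound. Fix the countable sequence $\delta\downarrow0$ and a path in the almost-sure event on which Assumption~\ref{ass:relu-kink} holds for every $t,\ell,x$. On this event the empirical $p$th moments are uniformly bounded for every $p$; we also intersect with the event that the smoothed networks' moments are bounded. The smoothed network is covered by \citet[Assumption~2.9.17]{yang2023tensor}, since $\varphi_\delta$ is smooth with bounded derivative.

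For each $t$ we track a single discrepancy quantity $E_t(n,\delta)$. It is the sum over layers and inputs of $\norm{h^{\ell,\delta}_{n,t}-h^\ell_{n,t}}_{n,2}$ and $\norm{r^{\ell,\delta}_{n,t}-r^\ell_{n,t}}_{n,2}$, together with the operator-norm discrepancies of the effective hidden matrices, the $\norm{\cdot}_{n,2}$ discrepancies of the wide vector parameters, and the absolute discrepancies of the scalar parameters. The goal is
\[
\lim_{\delta\downarrow0}\limsup_{n\to\infty}E_t(n,\delta)=0
\]
for each $t\le T$. At $t=0$ the parameters coincide because the initialization is shared.

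Within a fixed step, the forward pass is handled by layer induction. We write
\[
h^{\ell,\delta}-h^\ell=\bigl(\varphi_\delta(z^{\ell,\delta})-\varphi_\delta(z^\ell)\bigr)+\bigl(\varphi_\delta(z^\ell)-\operatorname{ReLU}(z^\ell)\bigr).
\]
The first term is bounded by $\abs{z^{\ell,\delta}-z^\ell}$ because $\varphi_\delta$ is $1$-Lipschitz. The second is at most $C\delta$ entrywise. The preactivation difference is controlled by the matrix operator-norm discrepancy times the moment bound, plus the operator norm of the (bounded) effective matrix times the previous-layer discrepancy. The operator norm of the effective matrix is almost surely bounded by the standard TP random-matrix bound for $\mup$ finite-step weights.

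The backward pass is where the kink matters. For the sensitivities we have
\[
r^{\ell,\delta}-r^\ell=\varphi_\delta'(z^{\ell,\delta})\odot(g^{\delta}-g)+\bigl(\varphi_\delta'(z^{\ell,\delta})-\mathbf 1\{z^\ell>0\}\bigr)\odot g,
\]
where $g$ is the backpropagated signal from layer $\ell+1$. The first term is handled as in the forward pass, using $0\le\varphi'_\delta\le1$. For the second term we split the coordinates into two sets. On the set where $\abs{z^\ell_i}>2\delta$ and $\abs{z^{\ell,\delta}_i-z^\ell_i}\le\delta$, both indicators agree, so the term vanishes. The remaining coordinates are those with $\abs{z^\ell_i}\le2\delta$, plus those where the preactivation discrepancy exceeds $\delta$. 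By Markov's inequality, the fraction of the latter is at most $\delta^{-2}\norm{z^{\ell,\delta}-z^\ell}_{n,2}^2$. On this bad set we apply Cauchy--Schwarz with the fourth-moment bound on $g$, giving a bound of order
\[
\bigl(\text{bad fraction}\bigr)^{1/4}\,\bigl(n^{-1}\textstyle\sum_i g_i^4\bigr)^{1/4}.
\]
Assumption~\ref{ass:relu-kink} makes $\limsup_n$ of the fraction $\abs{z_i}\le2\delta$ vanish as $\delta\to0$.

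The Markov part introduces a $\delta^{-2}$ factor multiplying $E_t$, and this is the main obstacle: it prevents a naive double limit. I would resolve it inductively. By the induction hypothesis, for each fixed $\eta>0$ the $\limsup_n$ of the preactivation discrepancy is at most $\eta\delta$ once $\delta$ is small enough. More precisely, I would define the induction in terms of $\limsup_n E_t/\delta$ being bounded, or use a second smoothing scale $\delta'\gg\delta$ in the indicator split. In the latter case the kink mass is taken at scale $\delta'$ and the Markov bound is $E_t^2/\delta'^2$. We then let $\delta\to0$ first, which sends $E_t\to0$ because the $C\delta$ terms drive it, and afterwards $\delta'\to0$. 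This two-scale argument is the step I expect to require the most care.

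Given the forward and backward bounds at step $t$, the parameter update at step $t+1$ is handled through the optimizer. The gradients are outer products $r\,h^\top/n$, or inner products for vectors. Their discrepancies, in operator norm and $\norm{\cdot}_{n,2}$, are bounded by products of the discrepancies with the moment bounds; for outer products the operator norm is at most $\norm{r}_{n,2}\norm{h}_{n,2}$ up to the $\mup$ scaling. The loss-error maps are Lipschitz in the finitely many outputs. The optimizer is entrywise globally Lipschitz in the finite gradient history by Assumption~\ref{ass:tp-mup}, and weight decay is width-independent multiplicative shrinkage. Together these give
\[
E_{t+1}\le C_T\bigl(E_{\le t}+\delta+\text{kink term}\bigr).
\]
Since $T$ is finite, iterating this bound closes the induction.

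Finally, the trained outputs $f_n=n^{-1}a^\top h^L+c$ are controlled by Cauchy--Schwarz using the $\norm{\cdot}_{n,2}$ bounds. For the Gram entries $\Sigma_{\ell,n}$ and $\Pi_{\ell,n}$ we use
\[
\abs{\langle u^\delta,v^\delta\rangle_n-\langle u,v\rangle_n}\le\norm{u^\delta-u}_{n,2}\norm{v^\delta}_{n,2}+\norm{u}_{n,2}\norm{v^\delta-v}_{n,2}.
\]
With the moment bounds, both right-hand terms vanish in the iterated limit, so smoothing leaves the trained outputs and the Gram entries asymptotically unchanged.
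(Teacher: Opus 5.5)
Your proposal is correct and follows essentially the same route as the paper: a finite induction through the forward, reverse, and update instructions, with the $1$-Lipschitz-plus-$C\delta$ forward bound, a gate split controlled by Assumption~\ref{ass:relu-kink} and Markov's inequality, H\"older with the moment bounds, Lipschitz optimizer updates bounded in operator norm through the Frobenius norm, and Cauchy--Schwarz for the outputs and Gram entries. Of your two fixes for the $\delta^{-2}$ factor, only the two-scale one works; it is exactly the paper's split at a second scale $\epsilon>\delta$, with limits taken $n\to\infty$, then $\delta\downarrow0$, then $\epsilon\downarrow0$. Controlling $\limsup_n E_t/\delta$ instead fails: boundedness of that ratio leaves the Markov term of order one, and smallness is unavailable because the $C\delta$ smoothing error and, after one update, the gate errors typically make $E_t$ of order $\delta$ or larger.
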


\begin{proof}
The forward nonlinearity obeys
\[
    \norm{\varphi_\delta(u^\delta)-\operatorname{ReLU}(u)}_{n,2}
    \leq
    \norm{u^\delta-u}_{n,2}+C\delta .
\]
For the derivative gate, whenever $\epsilon>\delta$,
\[
    \abs{\mathbf 1\{u>0\}-\varphi_\delta'(v)}
    \leq
    \mathbf 1\{\abs{u}\leq\epsilon\}
    +
    \mathbf 1\{\abs{u-v}\geq\epsilon-\delta\}.
\]
After averaging, Assumption~\ref{ass:relu-kink} controls the first term,
whereas Markov's inequality bounds the second by
$\norm{u-v}_{n,2}^2/(\epsilon-\delta)^2$. Sending first $n\to\infty$,
then $\delta\downarrow0$, and finally $\epsilon\downarrow0$ gives
normalized $L^2$ convergence of the gates. The moment condition and
H\"older's inequality give the same conclusion after multiplication by
the corresponding reverse signals.

Updates preserve the coupling. For an effective hidden matrix,
$\Delta\widetilde W_{ij}=\eta_\vartheta n^{-1}
Q_{\vartheta,t}(G_{ij})$, where $G_{ij}$ is its finite gradient history.
Global Lipschitz continuity of $Q_{\vartheta,t}$ gives
\[
    \norm{\Delta\widetilde W^\delta-\Delta\widetilde W}_{\op}
    \leq
    \eta_\vartheta L_{\vartheta,t}
    \left[
        \frac1{n^2}\sum_{i,j}
        \norm{G_{ij}^\delta-G_{ij}}_2^2
    \right]^{1/2}.
\]
Each history component is a finite sum of forward--reverse outer
products, so the right-hand side vanishes by Cauchy--Schwarz and the
preceding estimates. The same calculation, using
$\abs{Q(g)}\leq\abs{Q(0)}+L\norm{g}_2$ and the moment bound, shows that
every increment has $O(1)$ operator norm. Gaussian initialization
supplies $O(1)$ effective-matrix norms
\citep[Proposition~2.6.8]{yang2023tensor}; hence the trained effective
matrices remain $O(1)$ through the fixed number of steps. Moreover,
\[
    \norm{W^\delta v^\delta-Wv}_{n,2}
    \leq
    \norm{W^\delta}_{\op}\norm{v^\delta-v}_{n,2}
    +
    \norm{W^\delta-W}_{\op}\norm{v}_{n,2},
\]
and likewise for transpose multiplication. Vector and scalar updates
are analogous, and fixed multiplicative weight decay is linear.

These estimates close a finite induction through every forward,
reverse, and update instruction. Finally, the normalized readout
$f=n^{-1}a^\top h^L+c$ is stable by Cauchy--Schwarz, as are the empirical
forward and backward inner products. This proves the lemma.
\end{proof}

\begin{lemma}[Tensor-program Gram convergence]
\label{lem:tp-gram}
Let $h_{n,t}^\ell(x)$ be the layer-$\ell$ activations and define the
rescaled backward sensitivities
\[
    r_{n,t}^\ell(x)
    =
    n_\ell
    \frac{\partial f_n(x;\theta_{n,t})}
         {\partial z_n^\ell(x;\theta_{n,t})}.
\]
Under Assumptions~\ref{ass:tp-mup} and~\ref{ass:relu-kink}, for every
$t\leq T$, $\ell\leq L$, and pair $x,x'$ in a fixed finite input set,
\[
    \Sigma_{\ell,n}^{t}(x,x')
    =
    \frac1{n_\ell}\sum_i
    h_{i,n,t}^\ell(x)h_{i,n,t}^\ell(x'),
    \qquad
    \Pi_{\ell,n}^{t}(x,x')
    =
    \frac1{n_\ell}\sum_i
    r_{i,n,t}^\ell(x)r_{i,n,t}^\ell(x')
\]
converge jointly almost surely to finite limits. The corresponding
outputs converge jointly as well.
\end{lemma}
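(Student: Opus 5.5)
The plan is to reduce Lemma~\ref{lem:tp-gram} to the Tensor Programs Master Theorem \citep[Theorem~2.6.10]{yang2023tensor}. We apply it to a smoothed surrogate network and then transfer the conclusion back to exact ReLU via Lemma~\ref{lem:relu-stability}. The surrogate is needed because the Master Theorem requires pseudo-Lipschitz nonlinearities, and $\operatorname{ReLU}'$ is discontinuous, so backward sensitivities of the exact network are not directly expressible by an admissible program.

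\emph{Step 1 (smoothed program).} Fix $\delta$ in the countable sequence and let $\varphi_\delta$ be as in Lemma~\ref{lem:relu-stability}. I will write the full finite-step $\mup$ training of the $\varphi_\delta$-network as a Tensor Program; Assumption~\ref{ass:tp-mup} is exactly what licenses this \citep[Setup~2.6.3, Definition~2.9.12]{yang2023tensor}. The program covers:
\begin{itemize}
\item the forward passes on all points of $D$ and the fixed evaluation set;
\item the backward passes producing $r^\ell$, using $\varphi_\delta'$, which is smooth;
\item the optimizer updates, as Lipschitz maps of the finite gradient histories.
\end{itemize}
Because inputs, step count and minibatch path are width-independent, this is a program of fixed length. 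The quantities $\Sigma^{t,\delta}_{\ell,n}(x,x')$ and $\Pi^{t,\delta}_{\ell,n}(x,x')$ are averages $n^{-1}\sum_i\psi(\cdot)$ with $\psi(u,v)=uv$ pseudo-Lipschitz. The outputs $n^{-1}a^\top h^L+c$ are also such averages, plus scalars. The Master Theorem therefore gives joint a.s.\ convergence to limits $\Sigma^{t,\delta}_\ell,\Pi^{t,\delta}_\ell,f^{t,\delta}$, which are expectations under the limiting Gaussian/$Z$-variable law. Scalar parameters and zero-initialized biases are handled via Remarks~2.9.10 and~2.9.27.

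\emph{Step 2 (transfer to ReLU).} By Lemma~\ref{lem:relu-stability} and Cauchy--Schwarz with the moment bounds of Assumption~\ref{ass:relu-kink}, we have
\[
\lim_{\delta\downarrow0}\limsup_{n\to\infty}\bigl|\Sigma^{t,\delta}_{\ell,n}-\Sigma^t_{\ell,n}\bigr|=0,
\]
and the same holds for $\Pi$ and for the outputs. Using the identity $|\langle u,v\rangle_n-\langle u',v'\rangle_n|\le\|u-u'\|_{n,2}\|v\|_{n,2}+\|u'\|_{n,2}\|v-v'\|_{n,2}$, it follows that the smoothed limits are Cauchy as $\delta\downarrow0$. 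Call their limit $\Sigma^t_\ell$. Then
\[
\limsup_n|\Sigma^t_{\ell,n}-\Sigma^t_\ell|\le\limsup_n|\Sigma^t_{\ell,n}-\Sigma^{t,\delta}_{\ell,n}|+|\Sigma^{t,\delta}_\ell-\Sigma^t_\ell|,
\]
and both terms vanish as $\delta\downarrow0$. Only a countable sequence of $\delta$ is used, so the almost-sure events intersect, which yields joint convergence. Finiteness of the limits follows from the uniform moment bounds.

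\emph{Main obstacle.} I expect the main difficulty to be Step 1's bookkeeping rather than any estimate. The backward sensitivities must be expressed with the $n_\ell$-rescaling so that $r^\ell$ has $\Theta(1)$ coordinates under $\mup$; with the $1/n$ readout, the backward vectors arising in the program are $r^\ell/n$. We also need the optimizer nonlinearities, for example Adam with a positive stabilizer, to be admissible outer nonlinearities. My first attempt would be to follow the NE$\otimes$OR$\top$ formalism, treating each gradient history entry as an outer-product instruction exactly as in the proof of Lemma~\ref{lem:relu-stability}, and to cite Assumption~2.9.17 of \citet{yang2023tensor} for admissibility.
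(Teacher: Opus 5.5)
Your proposal is correct and follows the paper's proof essentially step for step. You fix $\delta$ in a countable sequence, write the $\varphi_\delta$-network's finite-step $\mup$ training as one total NE$\otimes$OR$^\top$ program, and append the coordinatewise products $h^\ell_i(x)h^\ell_i(x')$ and $r^\ell_i(x)r^\ell_i(x')$ so the Master Theorem applies; you then use Lemma~\ref{lem:relu-stability} with Cauchy--Schwarz and the moment bounds to show the fixed-$\delta$ limits are Cauchy, and conclude with a three-$\epsilon$ argument and a countable intersection of almost-sure events. The only bookkeeping point is that the paper takes the program's scaled reverse variables to be exactly $r^\ell=n\,\partial f/\partial z^\ell$ under the $1/n$ readout (the raw gradients being $r^\ell/n$), which is what you intend.
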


\begin{proof}
Fix $\delta$. The smoothed MLP is
$\mathrm{NE}\otimes\mathrm{OR}^{\top}$-representable
\citep[Section~2.6 and Example~2.9.6]{yang2023tensor}; its vector and
scalar biases are admissible parameter types
\citep[Definitions~2.9.1 and~2.9.7]{yang2023tensor}. The forward and
scaled reverse computations form one total program
\citep[Definitions~2.9.14 and~2.9.16]{yang2023tensor}, and the displayed
$1/n$ readout makes the reverse variables precisely the sensitivities
$r_{n,t}^{\ell}$. The smooth derivative and the loss and optimizer maps
satisfy the required regularity conditions. Thus fixed-$T$ training,
including memoryful Adam/AdamW and decoupled weight decay, is covered by
\citet[Theorem~2.9.30 and Section~2.10.1]{yang2023tensor}.

Append the coordinatewise products
\[
    h_{i,n,t}^{\ell}(x)h_{i,n,t}^{\ell}(x'),
    \qquad
    r_{i,n,t}^{\ell}(x)r_{i,n,t}^{\ell}(x'),
\]
and their averaging instructions to this program. The product map is
pseudo-Lipschitz, so the
$\mathrm{NE}\otimes\mathrm{OR}^{\top}$ Master Theorem
\citep[Theorem~2.6.10]{yang2023tensor} gives joint almost-sure
convergence of the two empirical averages and of the output for this
fixed $\delta$. Intersect the probability-one events over the chosen
countable sequence $\delta\downarrow0$.

Lemma~\ref{lem:relu-stability} and Cauchy--Schwarz make the exact-ReLU
Gram entries and outputs uniformly close, in the stated iterated limit,
to their smoothed counterparts. The fixed-$\delta$ limits are therefore
Cauchy as $\delta\downarrow0$, and a three-$\epsilon$ argument gives the
claimed limits for exact ReLU. There are only finitely many inputs,
layers, and training steps, so the convergence is joint.
\end{proof}

\begin{formaltheorem}[Formal statement of Theorem~\ref{thm:prior-kernel-stability}]
Consider the scalar-output, fixed-depth MLP and prior covariance $S_n$
defined in Appendix~\ref{app:sn}, with fixed input dimension and hidden
widths $n_1=\cdots=n_L=n$. Let
$\hat\theta_n=\theta_{n,T}$ be obtained under
Assumptions~\ref{ass:tp-mup} and~\ref{ass:relu-kink}, and let $J_n$ be
the Jacobian at this checkpoint. On a fixed finite input set $X$, write
$\Sigma_\ell(x,x')$ and $\Pi_\ell(x,x')$ for the almost-sure
$t=T$ limits in Lemma~\ref{lem:tp-gram}, and set
$\Sigma_0(x,x')=n_0^{-1}x^\top x'$. Then, jointly over $x,x'\in X$,
\[
    K_n(x,x')
    =
    J_n(x)S_nJ_n(x')^\top
    \xrightarrow{\mathrm{a.s.}}
    K_\infty(x,x'),
\]
where
\[
\begin{aligned}
K_\infty(x,x')={}&1+\Sigma_L(x,x')
+\Pi_1(x,x')\bigl(1+d_{\rm in}\Sigma_0(x,x')\bigr)\\
&+\sum_{\ell=2}^{L}\Pi_\ell(x,x')
\bigl(1+\Sigma_{\ell-1}(x,x')\bigr).
\end{aligned}
\]
The trained outputs $f_n(X;\theta_{n,T})$ converge jointly almost surely
as well.
\end{formaltheorem}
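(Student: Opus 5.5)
The plan is to reduce the statement to the exact finite-width identity~\eqref{eq:mlp-sigma-kernel-decomposition} and then invoke Lemma~\ref{lem:tp-gram}.

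\textbf{Step 1: the finite-width identity.} Fix $x,x'\in X$. I would first re-derive~\eqref{eq:mlp-sigma-kernel-decomposition} at the checkpoint $\hat\theta_n=\theta_{n,T}$. Because $S_n$ is block-diagonal, $K_n(x,x')$ splits into a sum over parameter blocks. Each block contributes the inner product of the block Jacobians at $x$ and $x'$, multiplied by the block scale. Using the parameter derivatives listed in Appendix~\ref{app:sn}, the blocks contribute as follows.
\begin{itemize}
\item The output bias contributes $1$.
\item The readout contributes $n\cdot n^{-2}\,h^L(x)^\top h^L(x')=\Sigma_{L,n}$.
\item Each hidden bias contributes $n\cdot n^{-2}\,r^\ell(x)^\top r^\ell(x')=\Pi_{\ell,n}$.
\item Each weight matrix $W^\ell$ factorizes as a product, because $\partial f/\partial W^\ell_{ij}$ is an outer product $r_i^\ell h_j^{\ell-1}$. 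The sum over $(i,j)$ therefore equals $(r^\ell{}^\top r^{\ell\prime})(h^{\ell-1}{}^\top h^{\ell-1\prime})$. With scale $n$ and normalization $n^{-2}n_{\ell-1}^{-1}$, this gives $\Pi_{\ell,n}\Sigma_{\ell-1,n}$ for $\ell\ge2$.
\item For $\ell=1$, the scale $n d_{\rm in}$ and the factor $d_{\rm in}^{-1}$ give $\Pi_{1,n}\,x^\top x'=\Pi_{1,n}\,d_{\rm in}\Sigma_0$.
\end{itemize}
Collecting terms gives exactly the displayed decomposition. The identity is deterministic and holds for every $n$.

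\textbf{Step 2: pass to the limit.} Next I would apply Lemma~\ref{lem:tp-gram} at $t=T$. For each of the finitely many pairs in $X\times X$ and each layer, it gives joint almost-sure convergence $\Sigma^T_{\ell,n}\to\Sigma_\ell$ and $\Pi^T_{\ell,n}\to\Pi_\ell$, all on a single probability-one event. The term $\Sigma_0$ does not depend on $n$. $K_n$ is a fixed polynomial in finitely many such entries, so continuity of sums and products yields $K_n(x,x')\to K_\infty(x,x')$ on that same event, jointly over $x,x'\in X$. The trained outputs $f_n(X;\theta_{n,T})$ converge by the final clause of Lemma~\ref{lem:tp-gram}.

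\textbf{Main obstacle.} All the analytic difficulty lies in Lemma~\ref{lem:tp-gram}, specifically the ReLU smoothing of Lemma~\ref{lem:relu-stability}. Since both results are already established, the only delicate point here is bookkeeping.
\begin{itemize}
\item The sensitivities $r^\ell=n\,\partial f/\partial z^\ell$ in the kernel derivation must be exactly the tensor-program reverse variables of Lemma~\ref{lem:tp-gram}. They are, because $n_\ell=n$ and the Jacobian is evaluated at $\theta_{n,T}$.
\item The $1/\sqrt{n_{\ell-1}}$ forward multipliers must be consistent with $n_0=d_{\rm in}$ in the first layer.
\item The convention $\operatorname{ReLU}'(0)=0$ must be the one used in the backward pass that defines $J_n$, so that the gates entering $r^\ell$ are the same as in Assumption~\ref{ass:relu-kink}.
\end{itemize}
With these identifications checked, the proof is complete.
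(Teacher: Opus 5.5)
Your proposal is correct and follows the paper's proof essentially verbatim. The paper also takes the exact finite-width identity \eqref{eq:mlp-sigma-kernel-decomposition}, applies Lemma~\ref{lem:tp-gram} at $t=T$ to obtain joint almost-sure convergence of the finitely many Gram factors, concludes by continuity of finite sums and products, and takes output convergence from the lemma's final clause; your block-by-block re-derivation of the identity (readout $\Sigma_{L,n}$, hidden biases $\Pi_{\ell,n}$, weights $\Pi_{\ell,n}\Sigma_{\ell-1,n}$ and $\Pi_{1,n}d_{\rm in}\Sigma_0$) checks out against the scales in Appendix~\ref{app:sn}.
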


\begin{proof}
Equation~\eqref{eq:mlp-sigma-kernel-decomposition} is an exact
finite-width identity. By Lemma~\ref{lem:tp-gram}, all of its Gram
factors converge jointly almost surely on the fixed finite set $X$.
Since $L$ and $d_{\rm in}$ are fixed, continuity of finite products and sums gives the
displayed $K_\infty$ and almost-sure entrywise convergence of the finite
kernel matrix. Output convergence is the final assertion of
Lemma~\ref{lem:tp-gram}.
\end{proof}

\begin{corollary}[Fixed-dimensional outputs]
\label{cor:fixed-output-kernel}
Let the preceding network have a fixed output dimension $q$, a shared
hidden trunk, and separate readout vectors and biases for its outputs.
For output index $a$, define
\[
    r_i^{\ell,a}(x)
    =
    n_\ell\frac{\partial f_{n,a}(x)}{\partial z_i^\ell(x)},
\]
and, for output indices $a,b$, define
\[
    \Pi_{\ell,n}^{ab}(x,x')
    =
    \frac1{n_\ell}\sum_i
    r_i^{\ell,a}(x)r_i^{\ell,b}(x').
\]
Under Assumptions~\ref{ass:tp-mup} and~\ref{ass:relu-kink}, all these
entries converge jointly almost surely. The prior-kernel blocks
\[
    K_n^{ab}(x,x')
    :=
    J_{n,a}(x)S_nJ_{n,b}(x')^\top
    \xrightarrow{\mathrm{a.s.}}
    K_\infty^{ab}(x,x')
\]
converge jointly, where
\[
\begin{aligned}
K_\infty^{ab}(x,x')={}&\delta_{ab}\bigl(1+\Sigma_L(x,x')\bigr)
+\Pi_1^{ab}(x,x')\bigl(1+d_{\rm in}\Sigma_0(x,x')\bigr)\\
&+\sum_{\ell=2}^{L}\Pi_\ell^{ab}(x,x')
\bigl(1+\Sigma_{\ell-1}(x,x')\bigr).
\end{aligned}
\]
Here $\delta_{ab}$ is the Kronecker delta.
The $q$-dimensional trained outputs converge jointly almost surely as
well.
\end{corollary}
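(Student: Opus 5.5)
We follow the scalar-output argument: first derive an exact finite-width decomposition of each block $K_n^{ab}$ into Gram averages, then invoke the multi-output analogue of Lemma~\ref{lem:tp-gram}. Because the trunk is shared, $h^\ell$, and hence every $\Sigma_{\ell,n}$, is the same for all outputs. Output $a$ reads $f_{n,a}(x)=n^{-1}a_a^\top h^L(x)+c_a$. The derivatives are
\[
\partial f_{n,a}/\partial a_{b,i}=\delta_{ab}h_i^L/n,\qquad
\partial f_{n,a}/\partial c_b=\delta_{ab},
\]
and, for trunk parameters,
\[
\partial f_{n,a}/\partial W^\ell_{ij}=r_i^{\ell,a}h_j^{\ell-1}/(n\sqrt{n_{\ell-1}}),\qquad
\partial f_{n,a}/\partial b_i^\ell=r_i^{\ell,a}/n .
\]
We use the same block scales $S_n$ as in Appendix~\ref{app:sn}, with each readout vector $a_b$ given scale $nI$ and each output bias $c_b$ given scale $1$. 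Summing blockwise exactly as in the derivation of Eq.~\eqref{eq:mlp-sigma-kernel-decomposition} then reproduces the displayed formula with $n$-indexed Gram entries. The readout and output-bias blocks contribute only when $a=b$, which gives $\delta_{ab}(1+\Sigma_{L,n})$. The trunk blocks contribute $\Pi_{\ell,n}^{ab}$ times the forward factor: $1+d_{\rm in}\Sigma_0$ for $\ell=1$, and $1+\Sigma_{\ell-1,n}$ for $\ell\ge 2$. This identity is exact at every finite width.

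Given the identity, the corollary reduces to joint almost-sure convergence of $\Sigma_{\ell,n}$, $\Pi^{ab}_{\ell,n}$ and the $q$ outputs. We redo the proof of Lemma~\ref{lem:tp-gram} with $q$ readouts. After smoothing ReLU to $\varphi_\delta$, the network with $q$ readout vectors and $q$ scalar biases is still $\mathrm{NE}\otimes\mathrm{OR}^\top$-representable. Since $q$ is fixed, the readouts are finitely many independently initialized wide vector parameters. The training loss now depends on a $q$-dimensional output, so the loss-error map is vector-valued. It is still pseudo-Lipschitz and satisfies the regularity conditions of Assumption~\ref{ass:tp-mup}. The backward pass at each evaluation input spawns $q$ reverse vectors $r^{\ell,a}$, one per output. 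Each is obtained by seeding backpropagation with the $a$-th readout, and is therefore a legitimate program vector. We append the coordinatewise products $r_i^{\ell,a}(x)r_i^{\ell,b}(x')$ for all $a,b$, which are again pseudo-Lipschitz, together with their averages. The Master Theorem then gives joint convergence at each fixed $\delta$. Intersecting over a countable sequence $\delta\downarrow 0$ keeps this almost sure.

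Next we remove the smoothing with Lemma~\ref{lem:relu-stability}. Its gate estimate and its induction through forward, reverse and update instructions are unchanged. The only difference is a finite number of additional reverse signals, each seeded by a readout whose normalized norm is bounded by the moment condition in Assumption~\ref{ass:relu-kink}. Cauchy--Schwarz then shows that each $\Pi^{ab}_{\ell,n}$ for exact ReLU is close to its smoothed counterpart in the iterated limit. The three-$\epsilon$ argument of Lemma~\ref{lem:tp-gram} yields the limits. Continuity of finite sums and products then gives $K_n^{ab}\to K_\infty^{ab}$, and output convergence is inherited directly.

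The main obstacle is checking that the multi-output training loop stays TP-admissible. In particular, the gradient history for each trunk matrix is now a sum over outputs of outer products $r^{\ell,a}\otimes h^{\ell-1}$, weighted by the vector loss-error. We must confirm that the Lipschitz bound in Lemma~\ref{lem:relu-stability} on $\Delta\widetilde W$ still controls this sum. We handle it by bounding the sum term by term, which costs a factor $q$ that is harmless since $q$ is fixed.
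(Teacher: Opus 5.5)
Your proposal is correct and takes essentially the same route as the paper. It uses the exact finite-width block identity, in which the separate readout and output-bias blocks give $\delta_{ab}(1+\Sigma_{L,n})$ and the shared trunk blocks give the $\Pi^{ab}_{\ell,n}$ terms, and then reruns Lemma~\ref{lem:tp-gram} (smoothing, Master Theorem, then Lemma~\ref{lem:relu-stability}) on one total program containing the backpropagation for all $q$ outputs. The paper compresses this into a citation of Definition~2.9.16 of \citet{yang2023tensor} for the joint program, so your version mainly makes explicit what it leaves implicit, including the harmless fixed-$q$ bookkeeping for the training gradient.
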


\begin{proof}
Definition~2.9.16 of \citet{yang2023tensor} places the forward and
backpropagation programs for all $q$ output coordinates in one total
program. Since $q$ is fixed, the proof of Lemma~\ref{lem:tp-gram} applies
jointly to every output and output pair, giving both output convergence
and the limits $\Pi_\ell^{ab}$. Separate readout and output-bias blocks
contribute the factor $\delta_{ab}$; the shared hidden blocks give the
displayed sum.
\end{proof}

\subsection{Coordinate-invariant interpretation of Theorem~\ref{thm:prior-kernel-stability}}

Let $\phi_n$ denote the width-normalized local perturbation coordinates
used by $\sigmat$ and suppose the stored implementation parameters satisfy
\[
    \theta_n=T_n\phi_n,
\]
where $T_n$ is block diagonal. Then
\[
    J_{\phi,n}(X)
    =
    J_{\theta,n}(X)T_n.
\]
An isotropic zero-mean prior in these width-normalized coordinates,
\[
    \phi_n\sim\mathcal N(0,\sigma_p^2I),
\]
induces in stored coordinates
\[
    \theta_n
    =
    T_n\phi_n
    \sim
    \mathcal N(0,\sigma_p^2T_nT_n^\top).
\]
Therefore
\[
    S_n=T_nT_n^\top.
\]
The induced prior kernel is
\[
    J_{\theta,n}(X)S_nJ_{\theta,n}(X')^\top
    =
    J_{\theta,n}(X)T_nT_n^\top J_{\theta,n}(X')^\top
    =
    J_{\phi,n}(X)J_{\phi,n}(X')^\top.
\]
Thus the $\sigmat$ prior is equivalently an isotropic local prior in
width-normalized perturbation coordinates. Prior-kernel stability is the
statement that this normalized-coordinate Jacobian kernel has a stable
large-width limit.

\subsection{Formal statement and proof of Theorem~\ref{thm:posterior-covariance-stability}: posterior-covariance stability}
\label{app:proof-posterior}

\begin{formaltheorem}[Formal statement of Theorem~\ref{thm:posterior-covariance-stability}]
Fix finite sets $D,X,X'$, and assume $S_n\succ0$ for every $n$. Suppose
that for every $A,B\subseteq D\cup X\cup X'$,
\[
    K_n(A,B)
    =
    J_n(A)S_nJ_n(B)^\top
    \longrightarrow
    K_\infty(A,B).
\]
Assume the Laplace curvature has generalized Gauss--Newton form
$H_n = J_n(D)^\top\Omega_nJ_n(D)$,
where $\Omega_n\succeq0$ and $\Omega_n\to\Omega_\infty$ in operator norm.
Then, for every compact interval
$\Lambda=[\lambda_-,\lambda_+]\subset(0,\infty)$,
\[
    \sup_{\lambda\in\Lambda}
    \norm{
    C_n^\lambda(X,X')-C_\infty^\lambda(X,X')
    }_{\op}
    \to0,
\]
where
\begingroup\small
\[
\begin{aligned}
C_\infty^\lambda(X,X')
&=
\lambda^{-1}K_\infty(X,X')
-
\lambda^{-2}
K_\infty(X,D)\Omega_\infty^{1/2}
\bigl(
I+\lambda^{-1}
\Omega_\infty^{1/2}
K_\infty(D,D)
\Omega_\infty^{1/2}
\bigr)^{-1}
\\
&\qquad\times
\Omega_\infty^{1/2}K_\infty(D,X').
\end{aligned}
\]
\endgroup
\end{formaltheorem}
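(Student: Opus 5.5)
Our plan has two parts. First we establish the exact finite-width function-space identity of Eq.~\eqref{eq:function-space-posterior-revised} for general $S_n\succ0$. Then we pass to the limit by continuity, with the uniformity in $\lambda$ coming from a uniform bound on the inverted matrix.

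For the identity, write $J_D=J_n(D)$ and $B_n=\Omega_n^{1/2}J_D$, so that $H_n=B_n^\top B_n$. Since $S_n\succ0$ and $\lambda>0$, the matrix $H_n+\lambda S_n^{-1}$ is invertible. The Woodbury identity gives
\[
(\lambda S_n^{-1}+B_n^\top B_n)^{-1}=\lambda^{-1}S_n-\lambda^{-2}S_nB_n^\top\bigl(I+\lambda^{-1}B_nS_nB_n^\top\bigr)^{-1}B_nS_n .
\]
Sandwiching between $J_n(X)$ and $J_n(X')^\top$ and using $J_n(A)S_nJ_n(B)^\top=K_n(A,B)$ yields the finite-$n$ formula. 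It has the same form as $C_\infty^\lambda$, with $K_n$ and $\Omega_n^{1/2}$ in place of $K_\infty$ and $\Omega_\infty^{1/2}$. In this form every matrix has a fixed, width-independent size. The matrix $I+\lambda^{-1}\Omega_n^{1/2}K_n(D,D)\Omega_n^{1/2}$ is invertible even though $B_n$ is not square, because it is $I$ plus a PSD matrix.

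For the limit, note that $\Omega_n\to\Omega_\infty$ with $\Omega_n\succeq0$. Operator-norm continuity of the PSD square root (e.g.\ $\|A^{1/2}-B^{1/2}\|\le\|A-B\|^{1/2}$) then gives $\Omega_n^{1/2}\to\Omega_\infty^{1/2}$. Convergence of $K_n(A,B)$ holds entrywise on fixed finite sets, hence in operator norm. Set $M_n=\Omega_n^{1/2}K_n(D,D)\Omega_n^{1/2}$. This matrix is PSD, and $M_n\to M_\infty$, which is also PSD as a limit of PSD matrices. Hence, for all $\lambda>0$,
\[
\|(I+\lambda^{-1}M_n)^{-1}\|_{\op}\le1 .
\]
The resolvent identity
\[
R_n-R_\infty=R_n\,\lambda^{-1}(M_\infty-M_n)\,R_\infty
\]
then gives
\[
\|R_n-R_\infty\|_{\op}\le\lambda_-^{-1}\|M_n-M_\infty\|_{\op},
\]
uniformly over $\Lambda$.

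It remains to bound $C_n^\lambda-C_\infty^\lambda$. We split the difference into a telescoping sum over the factors $\lambda^{-1}K(X,X')$, $K(X,D)$, $\Omega^{1/2}$, $R$, $\Omega^{1/2}$ and $K(D,X')$. Each term is a product of factors that are bounded uniformly in $n$ (convergent sequences) and uniformly in $\lambda\in\Lambda$ (via $\lambda^{-1}\le\lambda_-^{-1}$, $\lambda^{-2}\le\lambda_-^{-2}$, and $\|R\|\le1$), times one difference factor that tends to zero independently of $\lambda$. Summing the finitely many terms gives
\[
\sup_{\lambda\in\Lambda}\|C_n^\lambda(X,X')-C_\infty^\lambda(X,X')\|_{\op}\to0 .
\]
The upper endpoint $\lambda_+$ is not even needed.

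The main obstacle is the uniform-in-$\lambda$ control of the inverse; the PSD structure resolves it through the bound $\|R\|\le1$. A secondary subtlety is that the curvature convergence is stated for $\Omega_n$ rather than for $\Omega_n^{1/2}$. The square-root continuity step handles this. Alternatively, one can rewrite the expression as
\[
K(X,D)\bigl(I+\lambda^{-1}\Omega K(D,D)\bigr)^{-1}\Omega K(D,X'),
\]
which avoids square roots entirely. The invertibility of that matrix follows from similarity to the symmetric form above.
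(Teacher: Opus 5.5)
Your proposal is correct and follows essentially the same route as the paper. Both arguments use Woodbury to reach the fixed-size data-space form, continuity of the PSD square root, the bound $\|(I+\lambda^{-1}M_n)^{-1}\|_{\op}\le 1$ coming from positive semidefiniteness, and a resolvent-type estimate to get uniformity over $\Lambda$. The differences are cosmetic: you apply Woodbury directly to $\lambda S_n^{-1}+B_n^\top B_n$ instead of first whitening with $S_n^{1/2}$, and you spell out the telescoping bound and the observation that uniformity in fact holds on all of $[\lambda_-,\infty)$.
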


\begin{proof}
For any finite set $A$, define the whitened Jacobian
\[
    G_{A,n}
    =
    J_n(A)S_n^{1/2}.
\]
Then
\[
    K_n(A,B)
    =
    G_{A,n}G_{B,n}^\top.
\]
The GGN posterior precision can be written as
\[
\begin{aligned}
    J_n(D)^\top\Omega_nJ_n(D)+\lambda S_n^{-1}
    &=
    S_n^{-1/2}
    \left(
    G_{D,n}^\top\Omega_nG_{D,n}
    +
    \lambda I
    \right)
    S_n^{-1/2}.
\end{aligned}
\]
Therefore
\[
\begin{aligned}
    C_n^\lambda(X,X')
    &=
    J_n(X)
    \left(
    J_n(D)^\top\Omega_nJ_n(D)+\lambda S_n^{-1}
    \right)^{-1}
    J_n(X')^\top
    \\
    &=
    G_{X,n}
    \left(
    G_{D,n}^\top\Omega_nG_{D,n}
    +
    \lambda I
    \right)^{-1}
    G_{X',n}^\top.
\end{aligned}
\]

Apply the Woodbury identity to
\[
    G_{D,n}^\top\Omega_nG_{D,n}+\lambda I.
\]
With
\[
    M_n(\lambda)
    =
    I+\lambda^{-1}
    \Omega_n^{1/2}
    G_{D,n}G_{D,n}^\top
    \Omega_n^{1/2},
\]
we obtain
\[
\begin{aligned}
    C_n^\lambda(X,X')
    &=
    \lambda^{-1}
    G_{X,n}G_{X',n}^\top
    \\
    &\quad
    -
    \lambda^{-2}
    G_{X,n}G_{D,n}^\top
    \Omega_n^{1/2}
    M_n(\lambda)^{-1}
    \Omega_n^{1/2}
    G_{D,n}G_{X',n}^\top.
\end{aligned}
\]
Since
\[
    G_{A,n}G_{B,n}^\top
    =
    K_n(A,B),
\]
this becomes the finite-dimensional data-space expression
\[
\begin{aligned}
    C_n^\lambda(X,X')
    &=
    \lambda^{-1}K_n(X,X')
    \\
    &\quad
    -
    \lambda^{-2}
    K_n(X,D)\Omega_n^{1/2}
    \left(
    I+\lambda^{-1}
    \Omega_n^{1/2}
    K_n(D,D)
    \Omega_n^{1/2}
    \right)^{-1}
    \Omega_n^{1/2}K_n(D,X').
\end{aligned}
\]

By assumption,
\[
    K_n(A,B)\to K_\infty(A,B)
\]
for all finite $A,B\subseteq D\cup X\cup X'$, and
\[
    \Omega_n\to\Omega_\infty.
\]
The square-root map is continuous on finite-dimensional positive semidefinite matrices, so
\[
    \Omega_n^{1/2}\to\Omega_\infty^{1/2}.
\]
Moreover,
\[
    M_n(\lambda)
    =
    I+\lambda^{-1}
    \Omega_n^{1/2}
    K_n(D,D)
    \Omega_n^{1/2}
    \succeq I.
\]
Hence $M_n(\lambda)$ is invertible and
\[
    \norm{M_n(\lambda)^{-1}}_{\op}\le 1.
\]
The convergence of $M_n(\lambda)$ to
\[
    M_\infty(\lambda)
    =
    I+\lambda^{-1}
    \Omega_\infty^{1/2}
    K_\infty(D,D)
    \Omega_\infty^{1/2}
\]
is uniform for $\lambda\in[\lambda_-,\lambda_+]$, because $\lambda^{-1}$ is bounded and continuous on this compact interval. Since both $M_n(\lambda)$ and $M_\infty(\lambda)$ are bounded below by $I$,
\[
    \norm{
    M_n(\lambda)^{-1}-M_\infty(\lambda)^{-1}
    }_{\op}
    \le
    \norm{
    M_n(\lambda)-M_\infty(\lambda)
    }_{\op},
\]
so the inverses also converge uniformly. Substituting each uniformly convergent finite-dimensional factor into the Woodbury expression yields
\[
    \sup_{\lambda\in\Lambda}
    \norm{
    C_n^\lambda(X,X')-C_\infty^\lambda(X,X')
    }_{\op}
    \to0.
\]
\end{proof}

\begin{corollary}[Last-layer Laplace stability]
\label{cor:last-layer-stability}
For the MLP of Appendix~\ref{app:sn}, consider a last-layer Laplace approximation over only the readout weights $a$ and output bias $c$. If the final-layer feature Gram matrix satisfies
\[
    \frac{1}{n_L}
    \sum_{i=1}^{n_L}
    h_i^L(x)h_i^L(x')
    \to
    \Sigma_L(x,x'),
\]
then the last-layer prior kernel
\[
    K_{LL,n}(x,x')
    =
    1+
    \frac{1}{n_L}
    \sum_{i=1}^{n_L}
    h_i^L(x)h_i^L(x')
\]
converges to
\[
    K_{LL,\infty}(x,x')
    =
    1+\Sigma_L(x,x').
\]
Consequently, the last-layer Laplace posterior covariance is stable under the assumptions of Theorem~\ref{thm:posterior-covariance-stability}.
\end{corollary}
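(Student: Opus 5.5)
The argument computes the last-layer kernel directly from the MLP construction of Appendix~\ref{app:sn} and then feeds it into the formal version of Theorem~\ref{thm:posterior-covariance-stability}. The last-layer Laplace parameters are the readout $a$ and the output bias $c$, with prior blocks $S_{a,n}=nI$ and $S_{c,n}=1$. The other blocks are frozen and contribute nothing.

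\textbf{Computing the kernel.} From the derivative table, $\partial f_n/\partial a_i=h_i^L/n$ and $\partial f_n/\partial c=1$. Multiplying by the prior blocks gives
\[
K_{LL,n}(x,x')=n\sum_i\frac{h_i^L(x)h_i^L(x')}{n^2}+1=1+\Sigma_{L,n}(x,x').
\]
This is exact at finite width and is just the $1+\Sigma_{L,n}$ part of Eq.~\eqref{eq:mlp-sigma-kernel-decomposition}. Under the hypothesis on the final-feature Gram, entrywise convergence to $1+\Sigma_L$ follows by continuity. Since every input set involved is finite, this convergence holds jointly on $D\cup X\cup X'$ and for all subsets $A,B$. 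I would note that under Assumptions~\ref{ass:tp-mup} and~\ref{ass:relu-kink} the hypothesis is itself supplied almost surely by Lemma~\ref{lem:tp-gram}.

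\textbf{Applying Theorem~\ref{thm:posterior-covariance-stability}.} The last-layer linearized Laplace model is a Gaussian model in the parameters $(a,c)$. Its Jacobian is $J_{LL,n}(x)=(h^L(x)^\top/n,\,1)$, its curvature is the GGN $J_{LL,n}(D)^\top\Omega_nJ_{LL,n}(D)$, and its prior covariance is $S_{LL,n}=\blockdiag(nI,1)\succ0$. The formal theorem uses only the following facts:
\begin{itemize}
\item the GGN structure of the curvature;
\item positive definiteness of $S_n$;
\item kernel convergence on the relevant finite sets;
\item $\Omega_n\to\Omega_\infty$ in operator norm, with $\Omega_n\succeq0$, which the corollary inherits as an assumption.
\end{itemize}
It never uses which parameters are included. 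So the theorem applies verbatim with $J_n,S_n$ replaced by $J_{LL,n},S_{LL,n}$. This gives uniform convergence of $C_{LL,n}^\lambda(X,X')$ over compact $\Lambda\subset(0,\infty)$, with the limit given by the Woodbury formula evaluated at $K_{LL,\infty}$.

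\textbf{Main obstacle.} The delicate point is justifying that the theorem is subset-agnostic. Its proof passes through the whitened Jacobian $G_{A,n}=J_n(A)S_n^{1/2}$ and the Woodbury identity, and both are valid for any parameter subset with positive definite prior. I would state this explicitly and check it against the proof.

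For fixed output dimension $q$, the same computation with separate readouts gives $K^{ab}_{LL,n}=\delta_{ab}(1+\Sigma_{L,n})$, consistent with Corollary~\ref{cor:fixed-output-kernel}, and the conclusion follows identically.
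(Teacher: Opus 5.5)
Your proposal is correct and follows the paper's proof. Both compute $K_{LL,n}=1+\Sigma_{L,n}$ from $\partial f_n/\partial a_i=h_i^L/n$, $S_{a,n}=nI$ and the unit output-bias term, and then apply Theorem~\ref{thm:posterior-covariance-stability} with $K_n=K_{LL,n}$. Your explicit check that the theorem does not depend on which parameter subset carries the prior is a harmless extra step that the paper leaves implicit.
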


\begin{proof}
For the readout weights,
\[
    \frac{\partial f_n(x)}{\partial a_i}
    =
    \frac{h_i^L(x)}{n_L},
    \qquad
    S_{a,n}=n_LI_a.
\]
Therefore
\[
    n_L
    \sum_{i=1}^{n_L}
    \frac{\partial f_n(x)}{\partial a_i}
    \frac{\partial f_n(x')}{\partial a_i}
    =
    \frac{1}{n_L}
    \sum_{i=1}^{n_L}
    h_i^L(x)h_i^L(x').
\]
The output-bias contribution is $1$. Hence
\[
    K_{LL,n}(x,x')
    =
    1+
    \frac{1}{n_L}
    \sum_{i=1}^{n_L}
    h_i^L(x)h_i^L(x')
    \to
    1+\Sigma_L(x,x').
\]
The posterior-covariance statement then follows by applying Theorem~\ref{thm:posterior-covariance-stability} with $K_n=K_{LL,n}$.
\end{proof}

\begin{corollary}[Exact-Hessian residual condition]
\label{cor:exact-hessian-residual}
Under the kernel and curvature assumptions of
Theorem~\ref{thm:posterior-covariance-stability}, suppose the exact
Hessian decomposes as
\[
    H_n
    =
    J_n(D)^\top\Omega_nJ_n(D)+R_n.
\]
If
\[
    E_n
    =
    S_n^{1/2}R_nS_n^{1/2}
    \to0
\]
in operator norm, then the exact-Hessian Laplace posterior covariance has the same large-width limit as the GGN posterior covariance.
The convergence is uniform for $\lambda$ in every compact interval
$\Lambda\subset(0,\infty)$.
\end{corollary}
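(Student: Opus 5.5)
The plan is to reduce the exact-Hessian covariance to the whitened form used in the proof of Theorem~\ref{thm:posterior-covariance-stability} and treat $E_n$ as a vanishing perturbation. With $G_{A,n}=J_n(A)S_n^{1/2}$ as before, the exact-Hessian posterior precision factors as
\[
    J_n(D)^\top\Omega_nJ_n(D)+R_n+\lambda S_n^{-1}
    =
    S_n^{-1/2}\bigl(G_{D,n}^\top\Omega_nG_{D,n}+E_n+\lambda I\bigr)S_n^{-1/2}.
\]
This gives $\widetilde C_n^\lambda(X,X')=G_{X,n}(A_n(\lambda)+E_n)^{-1}G_{X',n}^\top$, where $A_n(\lambda)=G_{D,n}^\top\Omega_nG_{D,n}+\lambda I\succeq\lambda I$. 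I will assume, as is implicit for the Laplace posterior to exist, that the perturbed precision is positive definite. The operator-norm convergence $E_n\to0$ then gives $\norm{E_n}_{\op}\le\lambda_-/2$ for large $n$, so $A_n(\lambda)+E_n\succeq(\lambda_-/2)I$ uniformly on $\Lambda$.

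Next I will bound the difference from the GGN covariance $C_n^\lambda$ using the resolvent identity
\[
    (A_n+E_n)^{-1}-A_n^{-1}=-A_n^{-1}E_n(A_n+E_n)^{-1}.
\]
This yields
\[
    \norm{\widetilde C_n^\lambda-C_n^\lambda}_{\op}
    \le
    \norm{G_{X,n}}_{\op}\,\norm{G_{X',n}}_{\op}\,
    \frac{2}{\lambda_-^2}\,\norm{E_n}_{\op}.
\]
The key observation is that the Jacobian factors are controlled by the kernel alone. Since $\norm{G_{X,n}}_{\op}^2=\norm{K_n(X,X)}_{\op}$ and $K_n(X,X)\to K_\infty(X,X)$ by hypothesis, these norms are bounded uniformly in $n$. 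This holds even though the parameter dimension grows with width. Hence the difference tends to zero uniformly over $\lambda\in\Lambda$.

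Finally, Theorem~\ref{thm:posterior-covariance-stability} gives $C_n^\lambda\to C_\infty^\lambda$ uniformly on $\Lambda$, and the triangle inequality finishes the argument.

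The main obstacle is that the dimension of $E_n$ grows with $n$, so one cannot pass to a finite data-space Woodbury form as in the GGN case. The resolvent bound above sidesteps this: it needs only operator-norm control of $E_n$ and of the whitened Jacobians, and the latter is supplied by kernel convergence. The one delicate point is positivity when $R_n$ is indefinite. If positive definiteness of the exact precision is not assumed, the eventual bound $\norm{E_n}_{\op}<\lambda_-$ still makes $A_n+E_n$ invertible for large $n$, which is all the argument requires.
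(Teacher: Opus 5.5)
Your proposal is correct and follows essentially the same route as the paper's proof. Both whiten to $G_{X,n}(A_n(\lambda)+E_n)^{-1}G_{X',n}^\top$, use the eventual bound $\norm{E_n}_{\op}<\lambda_-/2$, apply the resolvent identity to get a difference of order $\lambda_-^{-2}\norm{E_n}_{\op}$, and bound $\norm{G_{X,n}}_{\op}^2=\norm{K_n(X,X)}_{\op}$ by kernel convergence. Two of your steps are left implicit in the paper: the final triangle inequality with Theorem~\ref{thm:posterior-covariance-stability}, and the observation that the exact precision is automatically positive definite for large $n$.
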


\begin{proof}
In whitened coordinates, the exact-Hessian covariance is
\[
    \widetilde C_n^\lambda(X,X')
    =
    G_{X,n}
    \left(
    G_{D,n}^\top\Omega_nG_{D,n}
    +
    E_n
    +
    \lambda I
    \right)^{-1}
    G_{X',n}^\top.
\]
Let
\[
    A_n(\lambda)
    =
    G_{D,n}^\top\Omega_nG_{D,n}
    +
    \lambda I.
\]
Then
\[
    A_n(\lambda)\succeq\lambda_- I,
    \qquad
    \norm{A_n(\lambda)^{-1}}_{\op}\le\lambda_-^{-1}
\]
uniformly for $\lambda\in\Lambda$. For large $n$,
$\norm{E_n}_{\op}<\lambda_-/2$, so
\[
    A_n(\lambda)+E_n\succeq\frac{\lambda_-}{2}I
\]
and
\[
    \norm{
    \left(A_n(\lambda)+E_n\right)^{-1}
    }_{\op}
    \le
    \frac{2}{\lambda_-}.
\]
The resolvent identity gives
\[
\begin{aligned}
    \widetilde C_n^\lambda(X,X')-C_n^\lambda(X,X')
    &=
    -
    G_{X,n}
    \left(A_n(\lambda)+E_n\right)^{-1}
    E_n
    A_n(\lambda)^{-1}
    G_{X',n}^\top.
\end{aligned}
\]
Since
\[
    \norm{G_{X,n}}_{\op}^2
    =
    \norm{K_n(X,X)}_{\op}
    =
    O(1)
\]
and similarly for $X'$, we obtain
\[
    \sup_{\lambda\in\Lambda}
    \norm{
    \widetilde C_n^\lambda(X,X')-C_n^\lambda(X,X')
    }_{\op}
    \le
    O(1)\frac{2}{\lambda_-^2}\norm{E_n}_{\op}
    \to0.
\]
Thus the exact-Hessian residual does not change the function-space covariance limit.
\end{proof}

\subsection{Structured-curvature extensions: diagonal and KFAC}
\label{app:structured-approximations}

Theorem~\ref{thm:posterior-covariance-stability} uses the full GGN because
its data-space form is determined entirely by $K_n$ and $\Omega_n$.
Diagonal and Kronecker-factored approximations retain the width-normalized coordinate
interpretation, but their width limits depend on more than $K_n$. We separate
these two facts below.

Let $\theta_n=T_n\phi_n$ map width-normalized coordinates $\phi_n$ to stored
coordinates $\theta_n$, let $S_n=T_nT_n^\top$, and write
$J_{\phi,n}=J_{\theta,n}T_n$. For an approximate positive-semidefinite
curvature $\widetilde H_{\theta,n}$ in stored coordinates, define
$\widetilde H_{\phi,n}=T_n^\top\widetilde H_{\theta,n}T_n$ and
\[
    \widetilde C_n^\lambda(A,B)
    =
    J_n(A)
    \left(\widetilde H_{\theta,n}+\lambda S_n^{-1}\right)^{-1}
    J_n(B)^\top.
\]

\begin{proposition}[Finite-width coordinate covariance]
\label{prop:structured-coordinate-covariance}
At every finite width, the following stored-coordinate computations equal
the corresponding computations with an isotropic prior in the
width-normalized coordinates.
\begin{enumerate}[leftmargin=1.5em,itemsep=0.2em]
\item For diagonal Laplace, the claim holds whenever $T_n$ is diagonal.
\item For block-Kronecker Laplace, it holds whenever
$T_n=\blockdiag_g(t_{n,g}I_g)$ with $t_{n,g}>0$, the curvature blocks and prior groups are
aligned, and the block approximation $\mathcal Q_g$ is positively
homogeneous:
\[
    \mathcal Q_g(t^2H)=t^2\mathcal Q_g(H).
\]
The Kronecker statement concerns additive prior damping
$\mathcal Q_g(H)+\lambda I_g$, not multiplicative factor damping.
\end{enumerate}
The verified Linear/MLP width-normalized prior satisfies these structural
conditions because its scale is constant within every weight or bias tensor.
\end{proposition}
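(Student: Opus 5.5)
The plan is a direct change of variables carried out at a fixed finite width. Set $\widetilde H_{\phi,n}=T_n^\top\widetilde H_{\theta,n}T_n$. The whole argument rests on one algebraic identity: since $T_n$ is invertible,
\[
\widetilde H_{\theta,n}+\lambda S_n^{-1}=T_n^{-\top}\bigl(T_n^\top\widetilde H_{\theta,n}T_n+\lambda I\bigr)T_n^{-1}.
\]
Inverting and sandwiching with $J_n(A)$ and $J_n(B)^\top$, then using $J_{\phi,n}=J_nT_n$, gives
\[
\widetilde C_n^\lambda(A,B)=J_{\phi,n}(A)\bigl(\widetilde H_{\phi,n}+\lambda I\bigr)^{-1}J_{\phi,n}(B)^\top .
\]
This holds for any positive-semidefinite $\widetilde H_{\theta,n}$. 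What remains is to show that the structured approximation commutes with the coordinate change. That is, applying the approximation in $\phi$-coordinates to the full $\phi$-curvature $T_n^\top H_{\theta,n}T_n$ must return exactly $T_n^\top\widetilde H_{\theta,n}T_n$.

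\emph{Diagonal case.} Take $T_n=\operatorname{diag}(t_i)$ and $\widetilde H_{\theta,n}=\operatorname{diag}(H_{\theta,n})$. Then
\[
(T_n^\top H_{\theta,n}T_n)_{ii}=t_i^2(H_{\theta,n})_{ii},
\]
and the off-diagonal entries are discarded in both coordinate systems. So
\[
\operatorname{diag}(T_n^\top H T_n)=T_n^\top\operatorname{diag}(H)\,T_n,
\]
and the identity above finishes the argument.

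\emph{Block-Kronecker case.} Take $T_n=\blockdiag_g(t_{n,g}I_g)$, with the prior groups aligned to the curvature blocks. The $g$-th diagonal block of $T_n^\top HT_n$ is $t_{n,g}^2H_{gg}$. Positive homogeneity gives $\mathcal Q_g(t_{n,g}^2H_{gg})=t_{n,g}^2\mathcal Q_g(H_{gg})$. The approximation is block diagonal, so off-diagonal blocks are dropped in both coordinate systems. Together these give
\[
\widetilde H_{\phi,n}=T_n^\top\widetilde H_{\theta,n}T_n .
\]
With additive damping, the stored-coordinate block is $\mathcal Q_g(H_{gg})+(\lambda/t_{n,g}^2)I_g$, which equals $t_{n,g}^{-2}\bigl(\mathcal Q_g(t_{n,g}^2H_{gg})+\lambda I_g\bigr)$. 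This is exactly what the implementation's per-tensor precision $\lambda/s_{g,n}$ produces.

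I expect the only real obstacle to be stating precisely which ``approximation in $\phi$-coordinates'' is meant in the Kronecker case. Under the scaling $t^2$, the Kronecker factors themselves can be redistributed, for example as $A\otimes B$ versus $(t^2A)\otimes B$. Homogeneity of the final product $\mathcal Q_g$ is what makes the result independent of this split, and it fails for multiplicative factor damping such as $(A+\sqrt\lambda I)\otimes(B+\sqrt\lambda I)$; this is why the statement restricts to additive damping. I would check that the undamped KFAC product satisfies homogeneity; I expect it does, since its factors are built from activations and backpropagated gradients that scale linearly. Finally, the last sentence of the statement follows from Table~\ref{tab:stored-scales}: each $s_{g,n}$ is constant within each weight or bias tensor, so $T_n$ is both diagonal and blockwise scalar, and both cases apply.
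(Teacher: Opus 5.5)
Your proof is correct and follows the paper's argument. The paper likewise uses $H_{\phi,n}=T_n^\top H_{\theta,n}T_n$, notes that diagonal extraction commutes with a diagonal $T_n$, and uses homogeneity to get $\mathcal Q_g(t_{n,g}^2H_{\theta,n,g})=t_{n,g}^2\mathcal Q_g(H_{\theta,n,g})$; the only difference is that you package the change of variables as one global identity $\widetilde H_{\theta,n}+\lambda S_n^{-1}=T_n^{-\top}(\widetilde H_{\phi,n}+\lambda I)T_n^{-1}$, whereas the paper treats the Kronecker case block by block and sums the block contributions.
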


\begin{proof}
The exact curvature transforms as
$H_{\phi,n}=T_n^\top H_{\theta,n}T_n$. If $T_n$ is diagonal, then
\[
    \operatorname{diag}(H_{\phi,n})
    =
    T_n^\top\operatorname{diag}(H_{\theta,n})T_n.
\]
Consequently,
\[
\begin{aligned}
    &J_{\phi,n}(A)
    \left(\operatorname{diag}(H_{\phi,n})+\lambda I\right)^{-1}
    J_{\phi,n}(B)^\top
    \\
    &\quad=
    J_{\theta,n}(A)
    \left(
    \operatorname{diag}(H_{\theta,n})+\lambda S_n^{-1}
    \right)^{-1}
    J_{\theta,n}(B)^\top.
\end{aligned}
\]
For a Kronecker block, $T_{n,g}=t_{n,g}I_g$ gives
$H_{\phi,n,g}=t_{n,g}^2H_{\theta,n,g}$. Homogeneity therefore gives
$\mathcal Q_g(H_{\phi,n,g})=t_{n,g}^2\mathcal Q_g(H_{\theta,n,g})$.
In particular,
\[
\begin{aligned}
    &J_{\phi,n,g}(A)
    \left(\mathcal Q_g(H_{\phi,n,g})+\lambda I_g\right)^{-1}
    J_{\phi,n,g}(B)^\top
    \\
    &\quad=
    J_{\theta,n,g}(A)
    \left(
    \mathcal Q_g(H_{\theta,n,g})
    +\lambda t_{n,g}^{-2}I_g
    \right)^{-1}
    J_{\theta,n,g}(B)^\top.
\end{aligned}
\]
Summing the block contributions proves the second statement.
\end{proof}

Finite-width covariance does not by itself imply convergence across width.
The following proposition states one sufficient condition that covers both
diagonal and Kronecker curvature.

\begin{proposition}[Conditional width stability for structured curvature]
\label{prop:structured-width-stability}
Work in the width-normalized coordinates and set
$G_n(A)=J_{\theta,n}(A)T_n$. Let the structured curvature have spectral
decomposition
\[
    \widetilde H_{\phi,n}
    =
    \sum_{r=1}^{p_n}q_{n,r}u_{n,r}u_{n,r}^\top,
    \qquad q_{n,r}\ge0.
\]
For fixed finite sets $A,B$, define the matrix-valued spectral measure
\[
    \mathcal M_n^{A,B}
    =
    \sum_{r=1}^{p_n}
    \bigl(G_n(A)u_{n,r}\bigr)
    \bigl(G_n(B)u_{n,r}\bigr)^\top
    \delta_{q_{n,r}}.
\]
For finite signed measures, write
\[
    d_{\mathrm{BL}}(\mu,\nu)
    =
    \sup_{\norm{h}_\infty+\operatorname{Lip}(h)\le1}
    \left|\int h\,d(\mu-\nu)\right|.
\]
Suppose that every scalar entry of $\mathcal M_n^{A,B}$ converges in
$d_{\mathrm{BL}}$ to a finite signed measure $\mathcal M_\infty^{A,B}$
for all required $A,B$. Then, for every compact
$\Lambda=[\lambda_-,\lambda_+]\subset(0,\infty)$,
\[
    \sup_{\lambda\in\Lambda}
    \left\|
    \widetilde C_n^\lambda(A,B)
    -
    \int_{[0,\infty)}\frac{1}{t+\lambda}
    \,d\mathcal M_\infty^{A,B}(t)
    \right\|
    \longrightarrow0.
\]
For diagonal Laplace, $q_{n,r}$ are the per-coordinate diagonal curvatures.
For undamped KFAC, they are the eigenvalues of its block-Kronecker curvature.
\end{proposition}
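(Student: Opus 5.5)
The plan is to reduce $\widetilde C_n^\lambda(A,B)$ to an integral of the function $t\mapsto (t+\lambda)^{-1}$ against the finite-width spectral measure. After that, bounded-Lipschitz convergence of $\mathcal M_n^{A,B}$ does the rest once this function is truncated into an admissible test function.

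\emph{Step 1: finite-width spectral identity.} Move to width-normalized coordinates. The argument of Proposition~\ref{prop:structured-coordinate-covariance} (the exact identity $T_n^\top(\widetilde H_{\theta,n}+\lambda S_n^{-1})T_n=\widetilde H_{\phi,n}+\lambda I$) gives
\[
\widetilde C_n^\lambda(A,B)=G_n(A)\bigl(\widetilde H_{\phi,n}+\lambda I\bigr)^{-1}G_n(B)^\top .
\]
I will treat $\{u_{n,r}\}_{r\le p_n}$ as an orthonormal eigenbasis of the whole $\phi$-space, with $p_n$ its dimension and zero eigenvalues allowed. Then
\[
(\widetilde H_{\phi,n}+\lambda I)^{-1}=\sum_r (q_{n,r}+\lambda)^{-1}u_{n,r}u_{n,r}^\top .
\]
Substituting this gives, entrywise,
\[
\widetilde C_n^\lambda(A,B)=\int_{[0,\infty)}\frac{1}{t+\lambda}\,d\mathcal M_n^{A,B}(t).
\]
This is an exact finite-width formula. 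If the decomposition is not meant to be complete, I will first augment it with zero eigenvalues on the orthogonal complement, which is harmless because those eigenvalues carry the correct $(0+\lambda)^{-1}$ weight.

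\emph{Step 2: admissible test functions.} Let $g_\lambda(t)=(t+\lambda)^{-1}$ for $t\ge0$, extended to $\mathbb R$ by the constant value $\lambda^{-1}$ on $t<0$. All measures are supported on $[0,\infty)$, so this extension does not change any integral. For $\lambda\in\Lambda$ we have $\|g_\lambda\|_\infty\le\lambda_-^{-1}$ and $\mathrm{Lip}(g_\lambda)\le\lambda_-^{-2}$. Setting $c_\Lambda=\lambda_-^{-1}+\lambda_-^{-2}$, the family $\{g_\lambda/c_\Lambda:\lambda\in\Lambda\}$ lies in the unit ball of the bounded-Lipschitz norm.

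\emph{Step 3: uniform convergence.} For each scalar entry $(i,j)$,
\[
\sup_{\lambda\in\Lambda}\Bigl|\int g_\lambda\,d\bigl([\mathcal M_n^{A,B}]_{ij}-[\mathcal M_\infty^{A,B}]_{ij}\bigr)\Bigr|\le c_\Lambda\, d_{\mathrm{BL}}\bigl([\mathcal M_n^{A,B}]_{ij},[\mathcal M_\infty^{A,B}]_{ij}\bigr)\to0 .
\]
The sets $A,B$ and the output dimension are fixed, so there are finitely many entries. All matrix norms are equivalent in finite dimension, so entrywise uniform convergence gives the claimed convergence in any norm.

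\emph{Identifying the curvatures.} For the diagonal case, $\widetilde H_{\phi,n}$ is diagonal, so the eigenvectors are coordinate vectors and the $q_{n,r}$ are the diagonal entries. For undamped KFAC, the eigenvalues of each block are products of factor eigenvalues, and collecting them across blocks gives the stated $q_{n,r}$.

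\emph{Main obstacle.} The delicate point is Step 2 together with the support issue. The $d_{\mathrm{BL}}$ supremum is over test functions on $\mathbb R$, so $g_\lambda$ must be extended without increasing its norm, which the constant extension achieves. The other place to be careful is the completeness of the spectral decomposition in Step 1. Beyond these, the total variation of $\mathcal M_n$ need not be controlled separately, because the BL bound already supplies uniformity in $\lambda$.
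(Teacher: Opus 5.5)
Your proposal is correct and follows essentially the paper's own proof: you use the exact spectral representation $\widetilde C_n^\lambda(A,B)=\int (t+\lambda)^{-1}\,d\mathcal M_n^{A,B}(t)$, the uniform bounds $\lambda_-^{-1}$ and $\lambda_-^{-2}$ that place the rescaled family $\{(t+\lambda)^{-1}\}_{\lambda\in\Lambda}$ in the bounded-Lipschitz unit ball, and then entrywise convergence plus equivalence of norms. Your extra points only make explicit what the paper leaves tacit: the constant extension to $t<0$ relies on $\mathcal M_\infty^{A,B}$ inheriting support in $[0,\infty)$ as a $d_{\mathrm{BL}}$-limit, and the eigendecomposition must be read as complete, since padding it with zero eigenvalues adds an atom at $0$ to $\mathcal M_n^{A,B}$, so the hypothesis has to refer to the complete one.
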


\begin{proof}
The spectral theorem gives the exact representation
\[
\begin{aligned}
    \widetilde C_n^\lambda(A,B)
    &=
    G_n(A)(\widetilde H_{\phi,n}+\lambda I)^{-1}G_n(B)^\top
    \\
    &=
    \int_{[0,\infty)}\frac{1}{t+\lambda}
    \,d\mathcal M_n^{A,B}(t).
\end{aligned}
\]
For $h_\lambda(t)=(t+\lambda)^{-1}$ and
$\lambda\in[\lambda_-,\lambda_+]$,
\[
    \norm{h_\lambda}_\infty\le\lambda_-^{-1},
    \qquad
    \operatorname{Lip}(h_\lambda)\le\lambda_-^{-2}.
\]
The family $\{h_\lambda:\lambda\in\Lambda\}$ therefore has uniformly
bounded norm in the bounded-Lipschitz class. Applying the assumed convergence entrywise
and using equivalence of norms in the fixed-dimensional output space yields
the stated uniform convergence.
\end{proof}

\begin{corollary}[Structured-approximation validation-NLL and decision transfer]
\label{cor:structured-transfer}
Under Proposition~\ref{prop:structured-width-stability}, if the corresponding
predictive means converge uniformly and the assumptions of
Theorems~\ref{thm:lambda-transfer} and~\ref{thm:decision-transfer} hold, then
their validation-NLL precision-transfer and decision-transfer conclusions hold with
$C_n^\lambda$ replaced by $\widetilde C_n^\lambda$.
\end{corollary}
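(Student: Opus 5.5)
The plan is to reduce Corollary~\ref{cor:structured-transfer} to the abstract arguments behind Theorems~\ref{thm:lambda-transfer} and~\ref{thm:decision-transfer}. Those arguments should use the Laplace posterior only through two inputs: the predictive mean $m_n^\lambda$ and the output covariance on a fixed finite set, converging uniformly over the compact interval $\Lambda$. Neither should use the Woodbury form or the full-GGN structure of $C_n^\lambda$.

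\textbf{Step 1: uniform covariance convergence.} Proposition~\ref{prop:structured-width-stability} gives, for each pair of required finite sets $A,B$ (here $A=B=V$ for the validation set, or $A,B\subseteq\U$ for the action scores),
\[
\sup_{\lambda\in\Lambda}\bigl\|\widetilde C_n^\lambda(A,B)-\widetilde C_\infty^\lambda(A,B)\bigr\|\to0,
\qquad
\widetilde C_\infty^\lambda(A,B)=\int_{[0,\infty)}(t+\lambda)^{-1}\,d\mathcal M_\infty^{A,B}(t).
\]
This limit serves as the structured analogue of $C_\infty^\lambda$. Together with the assumed uniform mean convergence, this is exactly the premise of Theorem~\ref{thm:lambda-transfer}. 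Two properties of the limit are needed, and I will check them here:
\begin{itemize}
\item \emph{Continuity in $\lambda$.} This follows by dominated convergence, since $(t+\lambda)^{-1}$ is bounded by $\lambda_-^{-1}$ and the measure is finite.
\item \emph{Positive semidefiniteness of $\widetilde C_\infty^\lambda(V,V)$.} Each $\widetilde C_n^\lambda(V,V)$ is PSD, and the PSD cone is closed, so the limit is PSD as well.
\end{itemize}
Continuity is what makes the limiting NLL curve continuous on $\Lambda$, and hence gives existence of its minimizer.

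\textbf{Step 2: precision transfer.} I will then reuse the proof of Theorem~\ref{thm:lambda-transfer} verbatim. The validation NLL is a continuous function of the (mean, covariance) pair, together with the observation noise or the probit/MC softmax average. Continuity on a compact neighbourhood of the limiting moments upgrades to uniform continuity, so uniform moment convergence yields uniform convergence of the NLL curves. The rest is standard:
\begin{itemize}
\item a unique limiting minimizer plus compactness of $\Lambda$ gives $\lambda_n^\star\to\lambda^\star$ by the usual argmin-continuity argument;
\item the excess-NLL bound follows from the three-term inequality $\NLL_N(\lambda_n^\star)-\NLL_N(\lambda_N^\star)\le[\NLL_N-\NLL_n](\lambda_n^\star)+[\NLL_n(\lambda_n^\star)-\NLL_n(\lambda_N^\star)]+[\NLL_n-\NLL_N](\lambda_N^\star)$, where the middle term is $\le0$.
\end{itemize}
For Theorem~\ref{thm:decision-transfer} at fixed $\lambda$, the scores are continuous in the moments on $\U$. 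Hence they converge, the positive gap yields eventual argmax agreement, and the same three-term bound gives the factor $2$.

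\textbf{Main obstacle.} The delicate point is making sure that the continuity and compactness steps inside the proofs of Theorems~\ref{thm:lambda-transfer} and~\ref{thm:decision-transfer} do not secretly rely on the full-GGN formula. The likely hidden dependence is a uniform bound on the covariances, or a lower bound keeping the Gaussian regression variance away from zero.
\begin{itemize}
\item \emph{Upper bound.} I would supply this directly: $\|\widetilde C_n^\lambda(A,A)\|\le\lambda_-^{-1}\|K_n(A,A)\|$ for $\lambda\ge\lambda_-$, because $\widetilde H_{\phi,n}\succeq0$. Convergence of the total mass of $\mathcal M_n^{A,A}$ (take $h\equiv1$) bounds $K_n(A,A)$.
\item \emph{Lower bound.} Positivity of the regression predictive variance comes from the observation noise, independently of the curvature.
\end{itemize}
With these bounds the moments stay in a fixed compact set, so the argument goes through unchanged.
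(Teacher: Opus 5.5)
Your proposal is correct and follows the paper's route: the paper's proof says only that Proposition~\ref{prop:structured-width-stability} supplies the uniform-in-$\lambda$ covariance convergence used by the proofs of Theorems~\ref{thm:lambda-transfer} and~\ref{thm:decision-transfer}, and that the remaining assumptions are unchanged. Your additional checks are valid details that the paper leaves implicit: continuity in $\lambda$ and positive semidefiniteness of the limiting $\widetilde C_\infty^\lambda$, and the bound $\widetilde C_n^\lambda(A,A)\preceq\lambda^{-1}K_n(A,A)$. The boundedness is automatic, though, since uniform convergence to a continuous limit on compact $\Lambda$ already implies it.
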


\begin{proof}
Proposition~\ref{prop:structured-width-stability} supplies the uniform
covariance convergence used in the proofs of
Theorems~\ref{thm:lambda-transfer} and~\ref{thm:decision-transfer}; the
remaining assumptions are unchanged.
\end{proof}

\paragraph{Status for the structured-curvature experiments.}
The finite-width coordinate-covariance identity is verified numerically,
but the projected spectral-measure convergence condition of
Proposition~\ref{prop:structured-width-stability} is not directly
tested. The diagonal/KFAC asymptotic extension is therefore conditional;
the reported finite-width transfer is empirical evidence for its
conclusion.

\paragraph{Why prior-kernel convergence alone is insufficient.}
Consider one training input $d$, one test input $x$, scalar output, and unit
likelihood curvature. Two normalized Jacobian pairs are
\[
\begin{array}{c|cc}
 & G(d) & G(x) \\
\hline
\text{I} & (1,0) & (0,1) \\
\text{II} & 2^{-1/2}(1,1) & 2^{-1/2}(-1,1).
\end{array}
\]
Both have exactly the same prior kernel:
$K(d,d)=K(x,x)=1$ and $K(d,x)=0$. Yet diagonal curvature gives
\[
    \widetilde C_{\mathrm{I}}^\lambda(x,x)=\lambda^{-1},
    \qquad
    \widetilde C_{\mathrm{II}}^\lambda(x,x)
    =(\lambda+\tfrac12)^{-1}.
\]
An alternating width sequence therefore has a constant $K_n$ but a
nonconvergent diagonal posterior covariance. The example does not challenge
Proposition~\ref{prop:structured-coordinate-covariance}: it uses a rotation,
whereas the width-normalized prior is diagonal and block-scalar. It shows why the
extra spectral condition in Proposition~\ref{prop:structured-width-stability}
is necessary. KFAC likewise depends on its retained block-factor spectrum,
not only on the sum $G_nG_n^\top$.

\paragraph{Other approximations.}
Proposition~\ref{prop:structured-width-stability} applies more generally to
any positive-semidefinite curvature approximation whose projected spectral
measures satisfy its assumption. The implementation checks cover full,
diagonal, and KFAC curvature for Linear/MLP models.

\subsection{Formal statement and proof of Theorem~\ref{thm:lambda-transfer}: inference-hyperparameter transfer}
\label{app:proof-precision}

\begin{formaltheorem}[Formal statement of Theorem~\ref{thm:lambda-transfer}]
Let $\Lambda=[\lambda_-,\lambda_+]\subset(0,\infty)$ be the prior-precision search interval.
Suppose that on a finite validation set $V$, the predictive mean and
covariance converge uniformly in $\lambda$:
\[
    \sup_{\lambda\in\Lambda}
    \norm{m_n^\lambda(V)-m_\infty^\lambda(V)} \to 0,
    \qquad
    \sup_{\lambda\in\Lambda}
    \norm{C_n^\lambda(V,V)-C_\infty^\lambda(V,V)} \to 0.
\]
For scalar Gaussian regression with observation noise $\tau^2>0$, or for softmax classification with Gaussian logit posterior, the validation NLL curves converge uniformly:
$\sup_{\lambda\in\Lambda} |\NLL_n(\lambda)-\NLL_\infty(\lambda)| \to 0$.
If $\NLL_\infty$ has a unique minimizer $\lambda^\star$, then every sequence
of minimizers satisfies $\lambda_n^\star \to \lambda^\star$.
Moreover, for a proxy width $n$ and target width $N$,
\[
    \NLL_N(\lambda_n^\star)
    -
    \min_{\lambda\in\Lambda}\NLL_N(\lambda)
    \le
    2\sup_{\lambda\in\Lambda}|\NLL_n(\lambda)-\NLL_N(\lambda)|.
\]
\end{formaltheorem}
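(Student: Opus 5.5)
The argument has three parts: uniform convergence of the NLL curves, convergence of the minimizers, and the excess-risk bound. The first part is the only one that needs care. The other two are standard argmin and oracle-inequality arguments.

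\emph{Uniform NLL convergence.} Write $\NLL_n(\lambda)=|V|^{-1}\sum_{x\in V}\ell\bigl(y_x;m_n^\lambda(x),C_n^\lambda(x,x)\bigr)$, where the per-point loss $\ell$ depends only on the marginal mean and covariance at $x$, through a map shared across widths. Define $\NLL_\infty$ with the limiting moments. By hypothesis, the moments $(m_n^\lambda(x),C_n^\lambda(x,x))$ converge uniformly on $\Lambda$. If the limits $\lambda\mapsto(m_\infty^\lambda,C_\infty^\lambda)$ are continuous on $\Lambda$, as they are for the Woodbury form of Theorem~\ref{thm:posterior-covariance-stability}, then the pairs $\{(m_\infty^\lambda,C_\infty^\lambda):\lambda\in\Lambda\}$ form a compact set $\mathcal K$. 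For large $n$, all width-$n$ moments lie in a fixed compact neighbourhood $\mathcal K_\epsilon$. It then suffices to show that $\ell$ is continuous, hence uniformly continuous, on $\mathcal K_\epsilon$.

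The two likelihoods are handled separately.
\begin{itemize}
\item For Gaussian regression, $\ell=\tfrac12\log\bigl(2\pi(C+\tau^2)\bigr)+\tfrac{(y-m)^2}{2(C+\tau^2)}$. This is smooth on $C\ge0$ because $\tau^2>0$ keeps the variance bounded away from zero. Covariances stay positive semidefinite, and $\mathcal K_\epsilon$ can be taken inside $\{C>-\tau^2/2\}$.
\item For softmax classification, $\ell=-\log\E_{z\sim\mathcal N(m,C)}[\mathrm{softmax}_y(z)]$. Write $z=m+C^{1/2}\xi$ and use continuity of the PSD square root together with dominated convergence, since softmax is bounded. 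This gives continuity of the expectation in $(m,C)$. On a compact set the expectation is bounded below by a positive constant: $\mathrm{softmax}_y(z)\ge e^{-2\|z\|_\infty}/d_{\mathrm{out}}$, and $\E e^{-2\|z\|}$ is bounded below uniformly on compacta. Hence $-\log$ is uniformly continuous there.
\end{itemize}
Averaging over the finite set $V$ then gives $\sup_\Lambda|\NLL_n-\NLL_\infty|\to0$. If the probit approximation is used instead, the same continuity argument applies to its closed form.

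\emph{Minimizer convergence.} $\NLL_\infty$ is continuous on the compact interval $\Lambda$, as a uniform limit of continuous functions or directly from the previous step. Suppose $\lambda^\star_n\not\to\lambda^\star$. By compactness there is a subsequence $\lambda^\star_{n_k}\to\bar\lambda\ne\lambda^\star$. Then
\[
\NLL_\infty(\bar\lambda)=\lim_k\NLL_{n_k}(\lambda^\star_{n_k})\le\lim_k\NLL_{n_k}(\lambda^\star)=\NLL_\infty(\lambda^\star).
\]
The first equality uses uniform convergence together with continuity of $\NLL_\infty$. This inequality contradicts uniqueness of the minimizer.

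\emph{Excess-NLL bound.} This is the standard oracle inequality. Let $\delta=\sup_\Lambda|\NLL_n-\NLL_N|$ and let $\lambda_N^\star$ be a target minimizer, which exists by continuity on the compact $\Lambda$. Then
\[
\NLL_N(\lambda_n^\star)\le\NLL_n(\lambda_n^\star)+\delta\le\NLL_n(\lambda_N^\star)+\delta\le\NLL_N(\lambda_N^\star)+2\delta.
\]
The bound is deterministic and holds at finite widths with no limit assumptions.

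The main obstacle is the classification case of the first step. There one needs the lower bound on the predictive probability and continuity of the Gaussian expectation uniformly over a neighbourhood of the limiting moments. Both follow from the compactness argument above.
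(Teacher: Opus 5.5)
Your proposal is correct and follows essentially the paper's proof: uniform continuity of the per-point loss on a compact neighbourhood of the limiting moments ($\tau^2$ keeps the regression variance away from zero, and the softmax predictive probability is bounded below), then argmin consistency, then the same three-term comparison for the excess-NLL bound. The differences are cosmetic. You use sequential compactness where the paper uses an explicit separation margin $\gamma_\varepsilon$. You give a direct bound $\mathrm{softmax}_y(z)\ge e^{-2\|z\|_\infty}/d_{\mathrm{out}}$ where the paper takes $\rho_i=\inf_{\Lambda}p_\infty>0$ and then $p_n\ge\rho_i/2$. Finally, you need not appeal to Theorem~\ref{thm:posterior-covariance-stability} for continuity of the limiting moments in $\lambda$: the finite-width curves are continuous for $\lambda\ge\lambda_->0$, and uniform limits inherit this.
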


\begin{proof}
Let
\[
    \mathcal L_n(\lambda)=\NLL_n(\lambda).
\]
Because $\lambda\ge\lambda_->0$, the finite-width Laplace mean and
covariance are continuous in $\lambda$; their uniform limits are therefore
continuous as well.
We first show uniform convergence of the validation NLL curves from uniform convergence of the predictive posterior mean and covariance.

\paragraph{Gaussian regression.}
For the usual sum of marginal predictive NLLs, put
\[
    s_{n,\lambda,i}
    =
    C_{n,\lambda}(x_i,x_i)+\tau^2,
    \qquad \tau^2>0.
\]
Then
\[
    \mathcal L_n(\lambda)
    =
    \frac12\sum_{i=1}^{|V|}
    \left[
    \frac{(y_i-m_{n,\lambda}(x_i))^2}{s_{n,\lambda,i}}
    +
    \log s_{n,\lambda,i}
    +
    \log(2\pi)
    \right].
\]
Positive semidefiniteness gives
$s_{n,\lambda,i}\geq\tau^2$. The assumptions imply uniform convergence
of every $m_{n,\lambda}(x_i)$ and $s_{n,\lambda,i}$. Because the
validation set and output dimension are fixed, the arguments remain in
a compact subset of
$\mathbb R\times[\tau^2,\infty)$, on which each displayed summand is
uniformly continuous. Therefore
\[
    \sup_{\lambda\in\Lambda}
    \abs{
    \mathcal L_n(\lambda)-\mathcal L_\infty(\lambda)
    }
    \to0.
\]

\paragraph{Softmax classification.}
For classification, let the Laplace posterior over logits at validation point $x_i$ be
\[
    f_i\sim
    \mathcal N
    \left(
    m_{n,\lambda}(x_i),
    C_{n,\lambda}(x_i,x_i)
    \right).
\]
The predictive probability of the observed class $y_i$ is
\[
    p_{n,\lambda}(y_i\mid x_i)
    =
    \E_{f_i}
    \left[
    \operatorname{softmax}_{y_i}(f_i)
    \right].
\]
The validation NLL is
\[
    \mathcal L_n(\lambda)
    =
    -
    \sum_{i=1}^{|V|}
    \log p_{n,\lambda}(y_i\mid x_i).
\]
The map
\[
    f\mapsto\operatorname{softmax}_{y_i}(f)
\]
is bounded and continuous. In finite dimension, expectations of bounded continuous functions under a Gaussian distribution depend continuously on the Gaussian mean and covariance. Because the posterior means and covariances converge uniformly over compact $\Lambda$, we get
\[
    \sup_{\lambda\in\Lambda}
    \abs{
    p_{n,\lambda}(y_i\mid x_i)
    -
    p_{\infty,\lambda}(y_i\mid x_i)
    }
    \to0
\]
for each $i$.
More explicitly, the limiting mean--covariance pairs form a compact
image of $\Lambda$, and the finite-width pairs eventually remain in a
compact neighborhood of that image; the Gaussian-expectation map is
therefore uniformly continuous on the relevant set.

For every finite mean and covariance,
\[
    p_{\infty,\lambda}(y_i\mid x_i)>0.
\]
The map
\[
    \lambda\mapsto p_{\infty,\lambda}(y_i\mid x_i)
\]
is continuous on compact $\Lambda$, so
\[
    \rho_i
    =
    \inf_{\lambda\in\Lambda}
    p_{\infty,\lambda}(y_i\mid x_i)
    >
    0.
\]
Uniform convergence then implies that for sufficiently large $n$,
\[
    p_{n,\lambda}(y_i\mid x_i)\ge\frac{\rho_i}{2}
\]
for all $\lambda\in\Lambda$. Since $-\log t$ is uniformly continuous on $[\rho_i/2,1]$,
\[
    \sup_{\lambda\in\Lambda}
    \abs{
    -\log p_{n,\lambda}(y_i\mid x_i)
    +
    \log p_{\infty,\lambda}(y_i\mid x_i)
    }
    \to0.
\]
Summing over the finite validation set gives
\[
    \sup_{\lambda\in\Lambda}
    \abs{
    \mathcal L_n(\lambda)-\mathcal L_\infty(\lambda)
    }
    \to0.
\]
The same argument covers a deterministic continuous probit
approximation and a fixed common-random-number Monte Carlo rule, using
the symmetric covariance square root for reparameterization.
Independent fixed-sample Monte Carlo draws at each width retain
simulation noise and are not covered without a growing sample count or
an additional probabilistic error bound.

It remains to prove convergence of minimizers. Let
\[
    \lambda^\star
    =
    \argmin_{\lambda\in\Lambda}\mathcal L_\infty(\lambda)
\]
be the unique limiting minimizer, and let
\[
    \lambda_n^\star
    \in
    \argmin_{\lambda\in\Lambda}\mathcal L_n(\lambda).
\]
Fix $\varepsilon>0$ and define
\[
    B_\varepsilon
    =
    \{
    \lambda\in\Lambda:
    \abs{\lambda-\lambda^\star}\ge\varepsilon
    \}.
\]
Since $\Lambda$ is compact, $B_\varepsilon$ is compact. Since $\mathcal L_\infty$ is continuous and has a unique minimizer at $\lambda^\star$,
\[
    \gamma_\varepsilon
    =
    \inf_{\lambda\in B_\varepsilon}
    \left[
    \mathcal L_\infty(\lambda)-\mathcal L_\infty(\lambda^\star)
    \right]
    >
    0.
\]
Uniform convergence gives, for sufficiently large $n$,
\[
    \sup_{\lambda\in\Lambda}
    \abs{
    \mathcal L_n(\lambda)-\mathcal L_\infty(\lambda)
    }
    <
    \frac{\gamma_\varepsilon}{3}.
\]
If $\lambda_n^\star\in B_\varepsilon$, then
\[
    \mathcal L_\infty(\lambda_n^\star)
    \ge
    \mathcal L_\infty(\lambda^\star)+\gamma_\varepsilon.
\]
Therefore
\[
    \mathcal L_n(\lambda_n^\star)
    \ge
    \mathcal L_\infty(\lambda_n^\star)
    -
    \frac{\gamma_\varepsilon}{3}
    \ge
    \mathcal L_\infty(\lambda^\star)
    +
    \frac{2\gamma_\varepsilon}{3}.
\]
But also
\[
    \mathcal L_n(\lambda^\star)
    \le
    \mathcal L_\infty(\lambda^\star)
    +
    \frac{\gamma_\varepsilon}{3}.
\]
Thus
\[
    \mathcal L_n(\lambda_n^\star)
    >
    \mathcal L_n(\lambda^\star),
\]
contradicting the fact that $\lambda_n^\star$ minimizes $\mathcal L_n$. Hence, for sufficiently large $n$,
\[
    \abs{\lambda_n^\star-\lambda^\star}<\varepsilon.
\]
Since $\varepsilon>0$ was arbitrary,
\[
    \lambda_n^\star\to\lambda^\star.
\]

Finally, for a proxy width $n$ and target width $N$, define
\[
    \Delta_{n,N}
    =
    \sup_{\lambda\in\Lambda}
    \abs{
    \mathcal L_n(\lambda)-\mathcal L_N(\lambda)
    }.
\]
Let
\[
    \lambda_N^\star
    \in
    \argmin_{\lambda\in\Lambda}
    \mathcal L_N(\lambda).
\]
Because $\lambda_n^\star$ minimizes $\mathcal L_n$,
\[
    \mathcal L_n(\lambda_n^\star)
    \le
    \mathcal L_n(\lambda_N^\star).
\]
Therefore
\[
\begin{aligned}
    \mathcal L_N(\lambda_n^\star)
    -
    \mathcal L_N(\lambda_N^\star)
    &=
    \left[
    \mathcal L_N(\lambda_n^\star)
    -
    \mathcal L_n(\lambda_n^\star)
    \right]
    \\
    &\quad
    +
    \left[
    \mathcal L_n(\lambda_n^\star)
    -
    \mathcal L_n(\lambda_N^\star)
    \right]
    \\
    &\quad
    +
    \left[
    \mathcal L_n(\lambda_N^\star)
    -
    \mathcal L_N(\lambda_N^\star)
    \right]
    \\
    &\le
    \Delta_{n,N}+0+\Delta_{n,N}
    =
    2\Delta_{n,N}.
\end{aligned}
\]
This proves the transferred excess-NLL bound.
\end{proof}

\begin{corollary}[Evidence-score precision transfer]
\label{cor:evidence-transfer}
Retain the fixed-training-set and full-GGN assumptions of
Theorem~\ref{thm:posterior-covariance-stability}. Use the zero-mean
$\sigmat$ prior $\theta_n\sim\mathcal N(0,\lambda^{-1}S_n)$; for each $n$, hold all likelihood
hyperparameters fixed and do not refit $\hat\theta_n$ as $\lambda$ varies.
Fix a compact
$\Lambda=[\lambda_-,\lambda_+]\subset(0,\infty)$. Let
\[
q_n=\hat\theta_n^\top S_n^{-1}\hat\theta_n.
\]
Assume $q_n\to q_\infty<\infty$, and define
\[
B_n=\Omega_n^{1/2}K_n(D,D)\Omega_n^{1/2},
\qquad
B_\infty=\Omega_\infty^{1/2}K_\infty(D,D)\Omega_\infty^{1/2}.
\]
Define the centered fixed-checkpoint plug-in Laplace evidence score used in our
experiments by
\[
\mathcal E_n(\lambda)
=
-\frac{\lambda}{2}q_n
+\frac12\log\det(\lambda S_n^{-1})
-\frac12\log\det(H_n+\lambda S_n^{-1}),
\qquad \lambda\in\Lambda.
\]
Terms independent of $\lambda$, including the likelihood at
$\hat\theta_n$, are omitted because they do not affect the selected
precision within this fixed-checkpoint score. Let
\[
\mathcal E_\infty(\lambda)
=
-\frac{\lambda}{2}q_\infty
-\frac12\log\det\!\left(I+\lambda^{-1}B_\infty\right).
\]
Then
\[
\sup_{\lambda\in\Lambda}
\abs{\mathcal E_n(\lambda)-\mathcal E_\infty(\lambda)}
\to0.
\]
If $\mathcal E_\infty$ has a unique maximizer
$\lambda_{\mathrm{ev}}^\star$, then every sequence
$\lambda_{n,\mathrm{ev}}^\star\in\argmax_{\lambda\in\Lambda}
\mathcal E_n(\lambda)$ satisfies
$\lambda_{n,\mathrm{ev}}^\star\to\lambda_{\mathrm{ev}}^\star$.
Moreover, for proxy width $n$ and target width $N$,
\[
\max_{\lambda\in\Lambda}\mathcal E_N(\lambda)
-\mathcal E_N(\lambda_{n,\mathrm{ev}}^\star)
\le
2\sup_{\lambda\in\Lambda}
\abs{\mathcal E_n(\lambda)-\mathcal E_N(\lambda)}.
\]
For a fixed finite grid $G\subset\Lambda$, if
$\lambda_{\mathrm{ev},G}^\star=
\argmax_{\lambda\in G}\mathcal E_\infty(\lambda)$ is unique, then
\[
\argmax_{\lambda\in G}\mathcal E_n(\lambda)
=\lambda_{\mathrm{ev},G}^\star
\]
for all sufficiently large $n$.
\end{corollary}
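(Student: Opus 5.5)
The approach is to rewrite the two log-determinant terms of $\mathcal E_n$ as a single finite-dimensional data-space determinant. The parameter dimension, which grows with width, then disappears. Uniform convergence follows by continuity, and the minimizer, regret and grid statements are inherited from the argument already given in the proof of Theorem~\ref{thm:lambda-transfer}.

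\textbf{Step 1 (exact reduction).} As in the proof of Theorem~\ref{thm:posterior-covariance-stability}, set $G_{D,n}=J_n(D)S_n^{1/2}$. Then
\[
H_n+\lambda S_n^{-1}=S_n^{-1/2}\bigl(G_{D,n}^\top\Omega_nG_{D,n}+\lambda I\bigr)S_n^{-1/2},
\]
so
\[
\log\det(H_n+\lambda S_n^{-1})-\log\det(\lambda S_n^{-1})=\log\det\bigl(I+\lambda^{-1}G_{D,n}^\top\Omega_nG_{D,n}\bigr).
\]
Sylvester's identity $\det(I+AB)=\det(I+BA)$, applied with $A=G_{D,n}^\top\Omega_n^{1/2}$ and $B=\Omega_n^{1/2}G_{D,n}$, turns this into $\log\det(I+\lambda^{-1}B_n)$, using $G_{D,n}G_{D,n}^\top=K_n(D,D)$. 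Hence, exactly at every finite width,
\[
\mathcal E_n(\lambda)=-\tfrac{\lambda}{2}q_n-\tfrac12\log\det(I+\lambda^{-1}B_n).
\]
The matrix $B_n$ is PSD and of fixed size $|D|d_{\rm out}$.

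\textbf{Step 2 (uniform convergence).} From $K_n(D,D)\to K_\infty(D,D)$, $\Omega_n\to\Omega_\infty$ and continuity of the PSD square root, we get $B_n\to B_\infty$. The linear term satisfies $\sup_\Lambda|\lambda(q_n-q_\infty)|/2\le\lambda_+|q_n-q_\infty|/2\to0$. For the log-determinant, write it as $\sum_k\log(1+\lambda^{-1}\mu_k)$ over the eigenvalues $\mu_k\ge0$. The map $(\lambda,B)\mapsto\log\det(I+\lambda^{-1}B)$ is jointly continuous on $\Lambda\times\{B\succeq0\}$, since the argument of $\det$ is $\succeq I$. Restricted to the compact set $\Lambda\times\{B\succeq0:\|B-B_\infty\|\le1\}$ it is uniformly continuous, which gives uniform convergence in $\lambda$. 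Alternatively, one can use the Lipschitz bound $|\log(1+a)-\log(1+b)|\le|a-b|$ for $a,b\ge0$ together with Weyl's inequality for the eigenvalues. This step is the only place where care is needed: the parameter-space log-determinants individually diverge with width, and Step 1 is what removes them.

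\textbf{Step 3 (selection consequences).} The limit $\mathcal E_\infty$ is continuous on $\Lambda$. The argument showing $\lambda_n^\star\to\lambda^\star$ in the proof of Theorem~\ref{thm:lambda-transfer} applies verbatim with $\mathcal L=-\mathcal E$: a positive gap $\gamma_\varepsilon$ away from the unique maximizer, combined with a uniform error below $\gamma_\varepsilon/3$, forces maximizers into the $\varepsilon$-ball. The same three-term telescoping gives the $2\sup|\mathcal E_n-\mathcal E_N|$ regret bound. For a finite grid $G$, let $\gamma=\min_{\lambda\in G\setminus\{\lambda^\star_{\mathrm{ev},G}\}}[\mathcal E_\infty(\lambda^\star_{\mathrm{ev},G})-\mathcal E_\infty(\lambda)]>0$. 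Once $\sup_G|\mathcal E_n-\mathcal E_\infty|<\gamma/2$, the grid point $\lambda^\star_{\mathrm{ev},G}$ strictly beats every other grid point, so it is the unique grid argmax for all large $n$.
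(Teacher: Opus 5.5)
Your proposal is correct and follows the paper's proof essentially step for step. The paper also whitens with $J_n(D)S_n^{1/2}$, factors out $S_n^{-1/2}$ and applies Sylvester's identity to obtain $\mathcal E_n(\lambda)=-\frac{\lambda}{2}q_n-\frac12\log\det(I+\lambda^{-1}B_n)$; it then bounds the uniform error by $\frac{\lambda_+}{2}|q_n-q_\infty|+\frac{r}{2\lambda_-}\|B_n-B_\infty\|_{\op}$ via Weyl's inequality (your alternative route; your compact-set uniform-continuity argument is equally valid but gives no explicit rate), and inherits the argmax, regret and grid conclusions from the argument of Theorem~\ref{thm:lambda-transfer}.
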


\begin{proof}
By definition, $\mathcal E_n$ is the $\lambda$-dependent part of the
fixed-checkpoint plug-in score. Let
$\widetilde J_n=J_n(D)S_n^{1/2}$.
Factoring $S_n^{-1/2}$ from the two parameter-space determinants and
then applying Sylvester's determinant identity gives
\[
\begin{aligned}
&\log\det(\lambda S_n^{-1})
-\log\det(H_n+\lambda S_n^{-1})
\\
&\quad=
-\log\det\!\left(I+\lambda^{-1}\widetilde J_n^\top\Omega_n\widetilde J_n\right)
=
-\log\det\!\left(I+\lambda^{-1}B_n\right).
\end{aligned}
\]
Hence
\[
\mathcal E_n(\lambda)
=
-\frac{\lambda}{2}q_n
-\frac12\log\det\!\left(I+\lambda^{-1}B_n\right).
\]
Each $B_n$ is positive semidefinite, so the matrix inside the
log-determinant is positive definite for every $\lambda>0$, including
when $\Omega_n$ is singular.
Because the stacked training output has fixed finite dimension,
convergence of $K_n(D,D)$ and $\Omega_n$, together with continuity of
the positive-semidefinite square-root map, implies
$B_n\to B_\infty$ in operator norm. If $r$ is the dimension of
$B_n$, Weyl's inequality and the Lipschitz bound for
$t\mapsto\log(1+t/\lambda)$ give
\[
\sup_{\lambda\in\Lambda}
\abs{\mathcal E_n(\lambda)-\mathcal E_\infty(\lambda)}
\le
\frac{\lambda_+}{2}\abs{q_n-q_\infty}
+\frac{r}{2\lambda_-}\norm{B_n-B_\infty}_{\op}
\to0.
\]
Applying the minimizer-consistency argument of
Theorem~\ref{thm:lambda-transfer} to $-\mathcal E_n$ proves convergence
of the maximizers. Applying the same three-term comparison used for
the excess-NLL bound proves the stated finite-width evidence-regret
bound.
The finite-grid conclusion follows from the same uniform convergence:
once the uniform error is smaller than half the positive gap between
the best and second-best limiting grid points, no other grid point can
overtake the maximizer.
\end{proof}

\paragraph{Scope of the evidence result.}
The convergence of $q_n$ is an additional assumption, not a consequence
of prior-kernel or posterior convergence: likelihood-null parameter
directions can change $q_n$ and the plug-in evidence-score optimum without changing
the predictive posterior. This condition can hold for normalized
last-layer readouts, but generally fails
for a zero-mean all-parameter prior containing $n\times n$ hidden
matrices: in the explicit-coordinate convention, one such block with
$O(1)$ entries contributes
$n^{-1}\norm{\hat W_n}_{F}^{2}=\Theta(n)$ to $q_n$. The corollary covers only the full-GGN
version of the fixed-checkpoint plug-in criterion on the chosen Laplace parameter
subset; $\hat\theta_n$ and $S_n$ are restricted to that subset. The
experiments do not by themselves establish asymptotic convergence of
$q_n$. This plug-in score is not the marginal likelihood obtained by
refitting the posterior mode at every $\lambda$. Such refitted evidence,
mode-relaxation corrections, and diagonal or
KFAC curvature require further mode, gradient, or log-determinant
stability assumptions and are not covered here.
The approximate-optimization and finite-grid statements below also
apply to this evidence criterion after replacing $\mathcal L_n$ by
$-\mathcal E_n$.

\begin{corollary}[Approximate hyperparameter optimization]
\label{cor:approx-hp}
If $\tilde\lambda_n$ is an approximate minimizer satisfying
\[
    \mathcal L_n(\tilde\lambda_n)
    \le
    \inf_{\lambda\in\Lambda}\mathcal L_n(\lambda)+\eta_n,
    \qquad
    \eta_n\to0,
\]
then $\tilde\lambda_n\to\lambda^\star$ under the assumptions of Theorem~\ref{thm:lambda-transfer}.
\end{corollary}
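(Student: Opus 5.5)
The plan is to reuse the two ingredients from the proof of Theorem~\ref{thm:lambda-transfer}: uniform convergence $\sup_{\lambda\in\Lambda}\abs{\mathcal L_n(\lambda)-\mathcal L_\infty(\lambda)}\to0$, and continuity of $\mathcal L_\infty$ on compact $\Lambda$ with the unique minimizer $\lambda^\star$. Both were established there under the present assumptions, so no new analytic input is needed. The only change is that the exact-minimizer contradiction is replaced by a bound that absorbs the slack $\eta_n$.

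Fix $\varepsilon>0$ and define, as before, $B_\varepsilon=\{\lambda\in\Lambda:\abs{\lambda-\lambda^\star}\ge\varepsilon\}$. Set $\gamma_\varepsilon=\inf_{\lambda\in B_\varepsilon}[\mathcal L_\infty(\lambda)-\mathcal L_\infty(\lambda^\star)]$. This is positive by compactness, continuity and uniqueness; if $B_\varepsilon$ is empty there is nothing to prove. Write $\Delta_n=\sup_{\lambda\in\Lambda}\abs{\mathcal L_n(\lambda)-\mathcal L_\infty(\lambda)}$ and choose $n$ large enough that $\Delta_n<\gamma_\varepsilon/3$ and $\eta_n<\gamma_\varepsilon/3$.

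Suppose, for contradiction, that $\tilde\lambda_n\in B_\varepsilon$. Then
\[
\mathcal L_n(\tilde\lambda_n)\ge\mathcal L_\infty(\tilde\lambda_n)-\Delta_n\ge\mathcal L_\infty(\lambda^\star)+\gamma_\varepsilon-\Delta_n .
\]
The approximate-minimizer hypothesis gives
\[
\mathcal L_n(\tilde\lambda_n)\le\mathcal L_n(\lambda^\star)+\eta_n\le\mathcal L_\infty(\lambda^\star)+\Delta_n+\eta_n .
\]
Combining the two yields $\gamma_\varepsilon\le2\Delta_n+\eta_n<\gamma_\varepsilon$, a contradiction. Hence $\abs{\tilde\lambda_n-\lambda^\star}<\varepsilon$ eventually, and since $\varepsilon>0$ was arbitrary, $\tilde\lambda_n\to\lambda^\star$.

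There is no substantive obstacle. The only points to check are that $\inf_{\Lambda}\mathcal L_n$ is finite, which holds because $\mathcal L_n$ is continuous on compact $\Lambda$, and that the positivity of $\gamma_\varepsilon$ is inherited unchanged from the earlier proof. The same argument applies verbatim to the evidence criterion with $\mathcal L_n$ replaced by $-\mathcal E_n$, using the uniform convergence from Corollary~\ref{cor:evidence-transfer}.
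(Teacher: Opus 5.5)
Your proposal is correct and follows the paper's proof: it uses the same limiting margin $\gamma_\varepsilon$, the same $\gamma_\varepsilon/3$ thresholds on the uniform error and on $\eta_n$, and the same contradiction for any approximate minimizer outside the $\varepsilon$-ball. Your chain $\gamma_\varepsilon\le 2\Delta_n+\eta_n<\gamma_\varepsilon$ spells out explicitly the step the paper states only in words, and your remark that the argument carries over to $-\mathcal E_n$ matches the paper's own scope note.
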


\begin{proof}
The proof is the same as the argmin-consistency part of Theorem~\ref{thm:lambda-transfer}. For fixed $\varepsilon>0$, use the same limiting margin $\gamma_\varepsilon>0$. For sufficiently large $n$,
\[
    \sup_{\lambda\in\Lambda}
    \abs{\mathcal L_n(\lambda)-\mathcal L_\infty(\lambda)}
    <
    \frac{\gamma_\varepsilon}{3},
    \qquad
    \eta_n<\frac{\gamma_\varepsilon}{3}.
\]
Any approximate minimizer outside the $\varepsilon$-ball around $\lambda^\star$ would then have objective value more than $\eta_n$ above the best attainable value, contradicting the definition of $\tilde\lambda_n$.
\end{proof}

\begin{corollary}[Finite-grid hyperparameter search]
\label{cor:grid-hp}
Let $G=\{\lambda_1,\ldots,\lambda_R\}\subset\Lambda$ be a finite grid. If
\[
    \max_{\lambda\in G}
    \abs{
    \mathcal L_n(\lambda)-\mathcal L_\infty(\lambda)
    }
    \to0
\]
and the limiting grid optimum
\[
    \lambda_G^\star
    =
    \argmin_{\lambda\in G}
    \mathcal L_\infty(\lambda)
\]
is unique, then
\[
    \argmin_{\lambda\in G}
    \mathcal L_n(\lambda)
    =
    \lambda_G^\star
\]
for all sufficiently large $n$.
\end{corollary}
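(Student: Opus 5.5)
The argument is a finite-set margin argument; continuity, compactness and the minimizer-consistency step of Theorem~\ref{thm:lambda-transfer} are not needed. Since $G$ is finite and $\lambda_G^\star$ is the unique minimizer of $\mathcal L_\infty$ over $G$, first define the limiting gap
\[
    \gamma_G=\min_{\lambda\in G\setminus\{\lambda_G^\star\}}
    \bigl[\mathcal L_\infty(\lambda)-\mathcal L_\infty(\lambda_G^\star)\bigr].
\]
It is strictly positive because it is a minimum over finitely many strictly positive numbers. If $G$ is the singleton $\{\lambda_G^\star\}$, the claim holds trivially.

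Next, invoke the assumed gridwise convergence to pick $n_0$ such that, for all $n\ge n_0$,
\[
    \max_{\lambda\in G}\abs{\mathcal L_n(\lambda)-\mathcal L_\infty(\lambda)}<\gamma_G/2 .
\]
Then, for any $\lambda\in G\setminus\{\lambda_G^\star\}$, chain the inequalities
\[
    \mathcal L_n(\lambda)
    >\mathcal L_\infty(\lambda)-\gamma_G/2
    \ge\mathcal L_\infty(\lambda_G^\star)+\gamma_G/2
    >\mathcal L_n(\lambda_G^\star).
\]
So every other grid point has strictly larger width-$n$ loss. Hence $\lambda_G^\star$ is the unique element of $\argmin_{\lambda\in G}\mathcal L_n(\lambda)$ for all $n\ge n_0$, which is the claim with the argmin understood as a set equality.

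There is no real obstacle. The only point needing care is that strict inequalities must be kept throughout, so that the width-$n$ argmin is a singleton and ties cannot occur. The hypothesis is automatically supplied by the uniform convergence in Theorem~\ref{thm:lambda-transfer}, since $G\subset\Lambda$. When the premises hold only almost surely or in probability, I would apply the argument pathwise, as the appendix preamble stipulates, giving eventual agreement almost surely or with probability tending to one respectively.
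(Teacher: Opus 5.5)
Your proof is correct and follows the paper's argument: you define the same grid margin $\gamma_G>0$, take $n$ large enough that the gridwise error is below $\gamma_G/2$, and conclude that no other grid point can overtake $\lambda_G^\star$. Your explicit chain of strict inequalities and your handling of the singleton grid only add detail to the paper's version.
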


\begin{proof}
Define the grid margin
\[
    \gamma_G
    =
    \min_{\lambda\in G,\lambda\ne\lambda_G^\star}
    \left[
    \mathcal L_\infty(\lambda)-\mathcal L_\infty(\lambda_G^\star)
    \right]
    >
    0.
\]
Once
\[
    \max_{\lambda\in G}
    \abs{
    \mathcal L_n(\lambda)-\mathcal L_\infty(\lambda)
    }
    <
    \frac{\gamma_G}{2},
\]
no grid point can overtake $\lambda_G^\star$. Hence the grid minimizer agrees exactly for all sufficiently large $n$.
\end{proof}

\subsection{Formal statement and proof of Theorem~\ref{thm:decision-transfer}: decision and active-learning transfer}
\label{app:proof-decision}

\begin{formaltheorem}[Formal statement of Theorem~\ref{thm:decision-transfer}]
Let $\mathcal P=\{u_1,\ldots,u_M\}$ be a finite candidate pool, let
$\U\supseteq\mathcal P$ be a fixed finite input collection containing
all candidate and auxiliary target/evaluation points used by the scores,
and let $\mathcal A$ be a finite action set, such as
$\mathcal A=\mathcal P$ for single-point acquisition or
$\mathcal A=\{S\subset\mathcal P: |S|=K\}$ for batch acquisition. Fix $\lambda>0$. Suppose
$m_n^\lambda(\U)\to m_\infty^\lambda(\U)$ and $C_n^\lambda(\U,\U)\to C_\infty^\lambda(\U,\U)$.
Let
$A_n^\lambda(a) = F_a(m_n^\lambda(\U), C_n^\lambda(\U,\U),\lambda)$,
where each $F_a$ is continuous at
$(m_\infty^\lambda(\U),C_\infty^\lambda(\U,\U),\lambda)$. Then
$\max_{a\in\mathcal A} |A_n^\lambda(a)-A_\infty^\lambda(a)| \to 0$.
If $a_\infty^\star = \argmax_{a\in\mathcal A} A_\infty^\lambda(a)$
is unique with margin
$\gamma = A_\infty^\lambda(a_\infty^\star) - \max_{a\ne a_\infty^\star} A_\infty^\lambda(a) > 0$,
then for all sufficiently large $n$,
$\argmax_{a\in\mathcal A} A_n^\lambda(a) = a_\infty^\star$.
\end{formaltheorem}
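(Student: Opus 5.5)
The argument is a deterministic continuity statement applied pathwise, in the same spirit as the proof of Theorem~\ref{thm:lambda-transfer}, but simpler because $\lambda$ is fixed and no uniformity in $\lambda$ is needed. First, I will reduce everything to a fixed finite-dimensional space. Because $\U$ is a fixed finite collection and the output dimension is fixed, the pair $(m_n^\lambda(\U),C_n^\lambda(\U,\U))$ is a point of a fixed Euclidean space, and the hypothesis says that this point converges to $(m_\infty^\lambda(\U),C_\infty^\lambda(\U,\U))$. These premises would be supplied, for instance, by the output convergence of Theorem~\ref{thm:prior-kernel-stability} and by Theorem~\ref{thm:posterior-covariance-stability} at the single value $\lambda$. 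The scoring maps $F_a$ are shared across widths, and each is continuous at the limiting point. Hence, for every fixed $a\in\mathcal A$, $A_n^\lambda(a)=F_a(m_n^\lambda(\U),C_n^\lambda(\U,\U),\lambda)\to F_a(m_\infty^\lambda(\U),C_\infty^\lambda(\U,\U),\lambda)=A_\infty^\lambda(a)$ by sequential continuity.

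Second, I will upgrade this pointwise convergence to convergence of the maximum over actions. Since $\mathcal A$ is finite (at most $\binom{M}{K}$ elements in the batch case), convergence for each $a$ immediately gives $\max_{a\in\mathcal A}\abs{A_n^\lambda(a)-A_\infty^\lambda(a)}\to0$. For a given $\varepsilon$, take the largest of the finitely many thresholds $n_a(\varepsilon)$. No equicontinuity or compactness argument is needed here, which is the main simplification relative to the $\lambda$-uniform results.

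Third, I will show argmax agreement using the margin, as in Corollary~\ref{cor:grid-hp}. Choose $n_0$ such that $\delta_n:=\max_a\abs{A_n^\lambda(a)-A_\infty^\lambda(a)}<\gamma/2$ for all $n\ge n_0$. Then for any $a\ne a_\infty^\star$,
\[
A_n^\lambda(a)\le A_\infty^\lambda(a)+\delta_n\le A_\infty^\lambda(a_\infty^\star)-\gamma+\delta_n<A_\infty^\lambda(a_\infty^\star)-\delta_n\le A_n^\lambda(a_\infty^\star).
\]
So $a_\infty^\star$ is the unique maximizer of $A_n^\lambda$. Applying this to two widths $n,N\ge n_0$ gives the advertised proxy--target agreement. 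For the informal finite-width regret claim, I will reuse the three-term comparison from the proof of Theorem~\ref{thm:lambda-transfer}, with $\mathcal A$ in place of $\Lambda$ and maximization in place of minimization. This yields $\max_a A_N^\lambda(a)-A_N^\lambda(a_n^\star)\le2\max_a\abs{A_n^\lambda(a)-A_N^\lambda(a)}$.

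The main obstacle is not the logic but the hypothesis that each score $F_a$ is continuous at the limiting moments. For scores such as EPIG or BALD defined through Gaussian expectations, I would verify this as in the softmax part of Theorem~\ref{thm:lambda-transfer}: expectations of bounded continuous functions of Gaussian logits depend continuously on the mean and covariance, with a common-random-number reparametrization for Monte Carlo versions. Logarithms of entropy-type quantities need a positive lower bound on the probabilities near the limit. If the limiting covariance is singular, continuity must be checked for the symmetric square root rather than a Cholesky factor. I would state these verifications as remarks and keep continuity as the theorem's assumption. Finally, I would note the probabilistic qualifiers: when the premises come from Theorem~\ref{thm:prior-kernel-stability}, all conclusions hold almost surely.
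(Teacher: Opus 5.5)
Your proposal is correct and follows essentially the same route as the paper: each $A_n^\lambda(a)$ converges because $F_a$ is continuous at the limiting moments, finiteness of $\mathcal A$ makes this convergence uniform, and choosing $n$ with $\delta_n<\gamma/2$ gives the margin argument for argmax agreement. The paper's proof additionally records the near-optimality bound $A_\infty^\lambda(a_\infty^\star)-A_\infty^\lambda(a_n)\le 2\delta_n$ with respect to the limiting scores, whereas your proxy--target regret bound is the content of the paper's separate Corollary on proxy-to-target decision transfer.
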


\begin{proof}
For readability, write $A_{n,\lambda}=A_n^\lambda$ and use the same
subscript convention for $m$ and $C$ throughout this proof.
For each action $a\in\mathcal A$,
\[
    A_{n,\lambda}(a)
    =
    F_a
    \left(
    m_{n,\lambda}(\U),
    C_{n,\lambda}(\U,\U),
    \lambda
    \right).
\]
By assumption,
\[
    m_{n,\lambda}(\U)\to m_{\infty,\lambda}(\U),
    \qquad
    C_{n,\lambda}(\U,\U)\to C_{\infty,\lambda}(\U,\U).
\]
Since $F_a$ is continuous at the limiting posterior,
\[
    A_{n,\lambda}(a)\to A_{\infty,\lambda}(a)
\]
for each fixed $a\in\mathcal A$. Because $\mathcal A$ is finite, pointwise convergence implies uniform convergence:
\[
    \delta_n
    :=
    \max_{a\in\mathcal A}
    \abs{
    A_{n,\lambda}(a)-A_{\infty,\lambda}(a)
    }
    \to0.
\]

Now let
\[
    a_n\in
    \argmax_{a\in\mathcal A}
    A_{n,\lambda}(a),
\]
and let
\[
    a_\infty^\star\in
    \argmax_{a\in\mathcal A}
    A_{\infty,\lambda}(a).
\]
Because $a_n$ maximizes $A_{n,\lambda}$,
\[
    A_{n,\lambda}(a_n)
    \ge
    A_{n,\lambda}(a_\infty^\star).
\]
Therefore
\[
\begin{aligned}
    A_{\infty,\lambda}(a_\infty^\star)
    -
    A_{\infty,\lambda}(a_n)
    &=
    \left[
    A_{\infty,\lambda}(a_\infty^\star)
    -
    A_{n,\lambda}(a_\infty^\star)
    \right]
    \\
    &\quad
    +
    \left[
    A_{n,\lambda}(a_\infty^\star)
    -
    A_{n,\lambda}(a_n)
    \right]
    \\
    &\quad
    +
    \left[
    A_{n,\lambda}(a_n)
    -
    A_{\infty,\lambda}(a_n)
    \right]
    \\
    &\le
    \delta_n+0+\delta_n
    =
    2\delta_n.
\end{aligned}
\]
Thus the proxy-selected action is asymptotically optimal for the limiting acquisition objective.

Finally assume the limiting maximizer is unique with margin
\[
    \gamma
    =
    A_{\infty,\lambda}(a_\infty^\star)
    -
    \max_{a\ne a_\infty^\star}
    A_{\infty,\lambda}(a)
    >
    0.
\]
For large enough $n$,
\[
    \delta_n<\frac{\gamma}{2}.
\]
For the optimal action,
\[
    A_{n,\lambda}(a_\infty^\star)
    \ge
    A_{\infty,\lambda}(a_\infty^\star)-\delta_n.
\]
For any $a\ne a_\infty^\star$,
\[
    A_{n,\lambda}(a)
    \le
    A_{\infty,\lambda}(a)+\delta_n
    \le
    A_{\infty,\lambda}(a_\infty^\star)-\gamma+\delta_n.
\]
Therefore
\[
    A_{n,\lambda}(a_\infty^\star)
    -
    A_{n,\lambda}(a)
    \ge
    \gamma-2\delta_n
    >
    0.
\]
So no suboptimal action can overtake $a_\infty^\star$, and
\[
    \argmax_{a\in\mathcal A}
    A_{n,\lambda}(a)
    =
    a_\infty^\star
\]
for all sufficiently large $n$.
\end{proof}

\begin{corollary}[Decision transfer at proxy-selected precision]
\label{cor:composed-transfer}
For each width $j$, define
$A_j^\lambda(a)=F_a(m_j^\lambda(\U),C_j^\lambda(\U,\U),\lambda)$.
Let $n_k\to\infty$ be proxy widths and let $N_k\to\infty$ be target
widths. Suppose $\lambda_{n_k}^\star\to\lambda^\star$ is selected on
each proxy either by validation NLL under
Theorem~\ref{thm:lambda-transfer} or by the fixed-checkpoint plug-in evidence
score under Corollary~\ref{cor:evidence-transfer}. Suppose that on the
finite input collection $\U$,
\[
\sup_{\lambda\in\Lambda}
\norm{m_n^\lambda(\U)-m_\infty^\lambda(\U)}\to0,
\qquad
\sup_{\lambda\in\Lambda}
\norm{C_n^\lambda(\U,\U)-C_\infty^\lambda(\U,\U)}\to0,
\]
that the limiting posterior is continuous in $\lambda$, and that each
$F_a$ is jointly continuous in $(m,C,\lambda)$. Then, uniformly over the
finite action set,
\[
A_{n_k}^{\lambda_{n_k}^\star}(a)\to
A_\infty^{\lambda^\star}(a),
\qquad
A_{N_k}^{\lambda_{n_k}^\star}(a)\to
A_\infty^{\lambda^\star}(a).
\]
If the limiting best action is unique with the margin of
Theorem~\ref{thm:decision-transfer}, the proxy decision and the target
decision evaluated at the transferred proxy precision agree for all
sufficiently large $k$.
\end{corollary}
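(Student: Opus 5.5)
We reduce Corollary~\ref{cor:composed-transfer} to the margin argument in the proof of Theorem~\ref{thm:decision-transfer}. The only new ingredient is that the precision now moves with $k$, so we need convergence of the scores along the sequence $(j,\lambda)=(n_k,\lambda_{n_k}^\star)$ and $(N_k,\lambda_{n_k}^\star)$. The plan is to show that the posterior moments at these moving precisions converge to the limiting moments at $\lambda^\star$. Continuity of each $F_a$ then carries this convergence to the scores.

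\emph{Step 1: convergence of the moments along the moving precisions.} Since $\Lambda$ is compact and $\lambda_{n_k}^\star\in\Lambda$, we may take $\lambda^\star\in\Lambda$. For $j\in\{n_k,N_k\}$, split with the triangle inequality:
\[
\norm{C_j^{\lambda_{n_k}^\star}(\U,\U)-C_\infty^{\lambda^\star}(\U,\U)}
\le
\sup_{\lambda\in\Lambda}\norm{C_j^\lambda(\U,\U)-C_\infty^\lambda(\U,\U)}
+\norm{C_\infty^{\lambda_{n_k}^\star}(\U,\U)-C_\infty^{\lambda^\star}(\U,\U)}.
\]
The first term vanishes because $j\to\infty$: both $n_k\to\infty$ and $N_k\to\infty$, and the convergence is assumed uniform over $\Lambda$. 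The second term vanishes by the assumed continuity of the limiting posterior in $\lambda$, together with $\lambda_{n_k}^\star\to\lambda^\star$. The same split handles the means. The convergence $\lambda_{n_k}^\star\to\lambda^\star$ is supplied by Theorem~\ref{thm:lambda-transfer} in the validation-NLL case and by Corollary~\ref{cor:evidence-transfer} in the evidence case. We simply take it as the stated hypothesis, with the unique-minimizer or unique-maximizer condition of those results assumed implicitly.

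\emph{Step 2: convergence of the scores.} Each $F_a$ is jointly continuous in $(m,C,\lambda)$, and the triple $(m_j^{\lambda_{n_k}^\star}(\U),\,C_j^{\lambda_{n_k}^\star}(\U,\U),\,\lambda_{n_k}^\star)$ converges to $(m_\infty^{\lambda^\star}(\U),\,C_\infty^{\lambda^\star}(\U,\U),\,\lambda^\star)$. Hence $A_j^{\lambda_{n_k}^\star}(a)\to A_\infty^{\lambda^\star}(a)$ for each $a$. Because the action set is finite, this convergence is uniform over $a$. Doing this for $j=n_k$ and for $j=N_k$ gives the two displayed limits.

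\emph{Step 3: agreement of the decisions.} Let $\gamma>0$ be the margin of the limiting best action $a_\infty^\star$ under $A_\infty^{\lambda^\star}$, and let $\delta_k$ be the larger of the two uniform errors from Step 2. Once $\delta_k<\gamma/2$, the comparison in the proof of Theorem~\ref{thm:decision-transfer} shows that $a_\infty^\star$ is the unique argmax of both $A_{n_k}^{\lambda_{n_k}^\star}$ and $A_{N_k}^{\lambda_{n_k}^\star}$. The proxy decision and the target decision at the transferred precision therefore coincide for all large $k$.

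The main obstacle is Step 1, specifically controlling the target width $N_k$ at a precision chosen by a different network. Pointwise-in-$\lambda$ convergence would not be enough here. The argument needs both the uniform convergence over $\Lambda$ and the continuity of the limit in $\lambda$, which is why both appear as hypotheses.
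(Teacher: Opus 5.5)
Your proposal is correct and follows the same route as the paper. Uniform-in-$\lambda$ posterior convergence along both width sequences, combined with $\lambda_{n_k}^\star\to\lambda^\star$ and joint continuity of each $F_a$, gives the score limits; finiteness of the action set makes them uniform; and the margin argument of Theorem~\ref{thm:decision-transfer} gives agreement. Your triangle-inequality split into a uniform-error term plus a continuity-of-the-limit term spells out explicitly the step the paper compresses into one sentence.
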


\begin{proof}
Apply uniform posterior convergence along the proxy sequence and the
target sequence. Together with
$\lambda_{n_k}^\star\to\lambda^\star$ and joint continuity, this gives
both displayed limits for each action. Finiteness of the action set makes
both convergences uniform. The margin argument in the proof of
Theorem~\ref{thm:decision-transfer} makes both maximizers equal to the
unique limiting maximizer for all sufficiently large $k$.
\end{proof}

\begin{corollary}[Proxy-to-target decision transfer]
\label{cor:proxy-target-decision}
Let $n$ be a proxy width and $N$ a target width. Define
\[
    \epsilon_{n,N}
    =
    \max_{a\in\mathcal A}
    \abs{
    A_{n,\lambda}(a)-A_{N,\lambda}(a)
    }.
\]
If
\[
    a_n\in\argmax_{a\in\mathcal A}A_{n,\lambda}(a)
\]
and
\[
    a_N^\star\in\argmax_{a\in\mathcal A}A_{N,\lambda}(a),
\]
then
\[
    A_{N,\lambda}(a_N^\star)
    -
    A_{N,\lambda}(a_n)
    \le
    2\epsilon_{n,N}.
\]
If the target model has margin
\[
    \gamma_N
    =
    A_{N,\lambda}(a_N^\star)
    -
    \max_{a\ne a_N^\star}
    A_{N,\lambda}(a)
    >
    0
\]
and
\[
    2\epsilon_{n,N}<\gamma_N,
\]
then
\[
    a_n=a_N^\star.
\]
\end{corollary}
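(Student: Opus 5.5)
The argument is the same three-term comparison used in the proofs of Theorem~\ref{thm:lambda-transfer} and Theorem~\ref{thm:decision-transfer}. The only change is that the limiting objective $A_{\infty,\lambda}$ is replaced by the target objective $A_{N,\lambda}$. No convergence, continuity, or limit is needed. The statement is a deterministic inequality about two real-valued functions on the finite set $\mathcal A$ whose sup-distance is $\epsilon_{n,N}$.

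\emph{Regret bound.} Because $\mathcal A$ is finite, both argmax sets are nonempty, so $a_n$ and $a_N^\star$ exist. I would decompose the target regret as
\[
A_{N,\lambda}(a_N^\star)-A_{N,\lambda}(a_n)
=\bigl[A_{N,\lambda}(a_N^\star)-A_{n,\lambda}(a_N^\star)\bigr]
+\bigl[A_{n,\lambda}(a_N^\star)-A_{n,\lambda}(a_n)\bigr]
+\bigl[A_{n,\lambda}(a_n)-A_{N,\lambda}(a_n)\bigr].
\]
The first and third brackets are each at most $\epsilon_{n,N}$ by the definition of $\epsilon_{n,N}$. The middle bracket is nonpositive because $a_n$ maximizes $A_{n,\lambda}$. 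Summing the three gives the bound $2\epsilon_{n,N}$.

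\emph{Exact agreement.} Suppose, for contradiction, that $a_n\neq a_N^\star$. By the definition of the margin, $A_{N,\lambda}(a_N^\star)-A_{N,\lambda}(a_n)\ge\gamma_N$. The regret bound just proved gives $A_{N,\lambda}(a_N^\star)-A_{N,\lambda}(a_n)\le 2\epsilon_{n,N}$. Together these give $\gamma_N\le 2\epsilon_{n,N}$, which contradicts the hypothesis $2\epsilon_{n,N}<\gamma_N$. Hence $a_n=a_N^\star$.

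A positive margin $\gamma_N>0$ forces $a_N^\star$ to be the unique target maximizer. The conclusion therefore holds for any choice of $a_n$ in the proxy argmax set. There is no real obstacle in this argument. The only point to state carefully is that the margin is the target's own $\gamma_N$ rather than the limiting margin, which is what lets the statement hold at finite widths without any large-width limit.
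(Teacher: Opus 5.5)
Your proof is correct and takes the paper's approach: the paper proves this corollary by reusing the three-term comparison and margin argument from Theorem~\ref{thm:decision-transfer}, with $A_{\infty,\lambda}$ replaced by $A_{N,\lambda}$. The only cosmetic difference is that you get agreement by contradiction from the regret bound, whereas the paper's margin step shows directly that $A_{n,\lambda}(a_N^\star)-A_{n,\lambda}(a)\ge\gamma_N-2\epsilon_{n,N}>0$ for every $a\ne a_N^\star$; the two are equivalent.
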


\begin{proof}
The proof is identical to the near-optimality and margin parts of Theorem~\ref{thm:decision-transfer}, replacing the infinite-width score $A_{\infty,\lambda}$ by the target-width score $A_{N,\lambda}$.
\end{proof}

\subsection{Examples of acquisition functions covered}

\paragraph{Posterior-variance acquisition.}
For scalar regression and $x\in\mathcal P$,
\[
    A_{n,\lambda}(x)
    =
    C_{n,\lambda}(x,x).
\]
Since $C_{n,\lambda}(\U,\U)\to C_{\infty,\lambda}(\U,\U)$, the marginal variance converges uniformly over $x\in\mathcal P$. For vector-valued outputs, the same argument covers
\[
    A_{n,\lambda}(x)
    =
    \tr C_{n,\lambda}(x,x)
\]
and
\[
    A_{n,\lambda}(x)
    =
    \log\det
    \left(
    C_{n,\lambda}(x,x)+\tau^2I
    \right),
    \qquad
    \tau^2>0,
\]
because trace and log-determinant on uniformly positive-definite matrices are continuous.

\paragraph{Gaussian information gain.}
For a candidate batch $S\subseteq\mathcal P$ and observation noise $\tau^2>0$, define
\[
    A_{n,\lambda}(S)
    =
    \frac12
    \log\det
    \left(
    I+\tau^{-2}C_{n,\lambda}(S,S)
    \right).
\]
The submatrix $C_{n,\lambda}(S,S)$ converges to $C_{\infty,\lambda}(S,S)$, and
\[
    I+\tau^{-2}C_{n,\lambda}(S,S)\succeq I.
\]
Hence the log-determinant is continuous and
\[
    A_{n,\lambda}(S)\to A_{\infty,\lambda}(S).
\]
Because there are finitely many $K$-batches in a finite pool, convergence is uniform over all batches.

\paragraph{Integrated variance reduction.}
For scalar regression, a candidate batch $S\subseteq\mathcal P$, and a
fixed target/evaluation set $T\subseteq\U$, define
\[
    A_{n,\lambda}(S)
    =
    \sum_{x\in T}
    C_{n,\lambda}(x,S)
    \left(
    C_{n,\lambda}(S,S)+\tau^2I
    \right)^{-1}
    C_{n,\lambda}(S,x),
    \qquad
    \tau^2>0.
\]
Since
\[
    C_{n,\lambda}(S,S)+\tau^2I\succeq\tau^2I,
\]
matrix inversion is stable. Every factor in every summand converges, and $T$ is finite. Therefore
\[
    A_{n,\lambda}(S)\to A_{\infty,\lambda}(S).
\]

\paragraph{Predictive information gain between candidate and target sets.}
Let $S\subseteq\mathcal P$ be a candidate batch and let
$T\subseteq\U$ be a fixed target set. For Gaussian regression with observation noise $\tau^2>0$, define
\[
    Y_S=f_S+\varepsilon_S,
    \qquad
    Y_T=f_T+\varepsilon_T,
\]
with independent
\[
    \varepsilon_S,\varepsilon_T\sim\mathcal N(0,\tau^2I).
\]
The covariance of $(Y_S,Y_T)$ is
\[
    \Sigma_{n,\lambda}(S,T)
    =
    \begin{pmatrix}
    C_{n,\lambda}(S,S)+\tau^2I & C_{n,\lambda}(S,T)\\
    C_{n,\lambda}(T,S) & C_{n,\lambda}(T,T)+\tau^2I
    \end{pmatrix}.
\]
All blocks converge, and the noise terms make the covariance uniformly positive definite. Therefore any mutual-information expression written as a finite sum and difference of log-determinants of these covariance matrices converges.

\paragraph{BALD-style classification acquisition.}
For classification and $x\in\mathcal P$, suppose the Laplace posterior over logits at $x$ is
\[
    f_n(x)\sim
    \mathcal N
    \left(
    m_{n,\lambda}(x),
    C_{n,\lambda}(x,x)
    \right).
\]
The BALD score is
\[
    A_{n,\lambda}(x)
    =
    H
    \left(
    \E_{f_n(x)}
    [
    p(y\mid f_n(x))
    ]
    \right)
    -
    \E_{f_n(x)}
    \left[
    H(p(y\mid f_n(x)))
    \right].
\]
The softmax map is continuous, and categorical entropy is continuous and bounded on the probability simplex. Since Gaussian means and covariances converge in finite dimension, the corresponding Gaussian laws converge weakly. Expectations of bounded continuous functions therefore converge, so
\[
    A_{n,\lambda}(x)\to A_{\infty,\lambda}(x).
\]
On a finite pool, this convergence is uniform over $x\in\mathcal P$.
This statement concerns the exact Gaussian expectations. It also holds
for a fixed common-random-number Monte Carlo rule by continuity of its
finite reparameterized sum. Independent fixed-sample draws across widths
require a separate Monte Carlo error bound.

\paragraph{Top-$K$ ranking agreement.}
Suppose individual scores $a_{n,\lambda}(x)$ satisfy
\[
    \max_{x\in\mathcal P}
    \abs{
    a_{n,\lambda}(x)-a_{\infty,\lambda}(x)
    }
    \to0.
\]
Let the limiting scores be ordered as
\[
    a_{\infty,\lambda}(x_{(1)})
    \ge
    a_{\infty,\lambda}(x_{(2)})
    \ge
    \cdots
    \ge
    a_{\infty,\lambda}(x_{(M)}).
\]
If the top-$K$ margin
\[
    \Delta_K
    =
    a_{\infty,\lambda}(x_{(K)})
    -
    a_{\infty,\lambda}(x_{(K+1)})
    >
    0,
\]
then for sufficiently large $n$ the width-$n$ top-$K$ set equals the limiting top-$K$ set. Indeed, once the uniform score error is below $\Delta_K/2$, every limiting top-$K$ candidate still has larger width-$n$ score than every non-top-$K$ candidate.

\paragraph{Selective prediction and detection.}
Suppose a detection or rejection score is
\[
    s_{n,\lambda}(x)
    =
    g
    \left(
    m_{n,\lambda}(x),
    C_{n,\lambda}(x,x)
    \right),
\]
where $g$ is continuous. Then
\[
    s_{n,\lambda}(x)\to s_{\infty,\lambda}(x)
\]
uniformly on finite $\U$. For a threshold rule
\[
    \text{flag }x
    \quad\Longleftrightarrow\quad
    s_{n,\lambda}(x)>t,
\]
exact flag agreement requires the threshold margin
\[
    \eta
    =
    \min_{x\in\U}
    \abs{
    s_{\infty,\lambda}(x)-t
    }
    >
    0.
\]

Once the uniform score error is below $\eta$, the width-$n$ and limiting rules make identical flag/no-flag decisions on all of $\U$.

\paragraph{Finite-round sequential active learning.}
The one-step result extends to finitely many acquisition rounds by
induction. Suppose the following convergence and margin conditions hold
conditionally along the common limiting acquisition history: at each
round $r=1,\ldots,R$,
\[
    m_{n,\lambda}^{(r)}(\U)\to m_{\infty,\lambda}^{(r)}(\U),
    \qquad
    C_{n,\lambda}^{(r)}(\U,\U)\to C_{\infty,\lambda}^{(r)}(\U,\U),
\]
and the limiting acquisition rule at that round has a nonzero margin.
Then the round-$1$ selected action agrees for all sufficiently large
$n$. Conditional on that agreement, the updated training set is the
same for the width-$n$ and limiting processes, so the round-$2$
hypotheses apply on their common history. Repeating for finitely many
rounds gives agreement of the entire finite acquisition trajectory.
\end{document}